\pgfplotsset{compat = newest}
\tikzset{
    myarrow/.style={line width=0.45mm, -{Triangle[length=2.0mm,width=2.0mm]}}
}
\definecolor{chocolate1}{RGB}{1,100,5}
\definecolor{chocolate2}{RGB}{145,65,12}
\newcommand{\Fig}[1]{Fig.~\ref{#1}}
\newcommand{\R}{\mathbb{R}}
\newcommand{\ZZ}{\mathbb{Z}}
\newcommand{\calA}{\mathcal{A}}
\newcommand{\calU}{\mathcal{U}}
\newcommand{\calN}{\mathcal{N}}
\newcommand{\EE}{\mathcal{E}}
\newcommand{\X}{{\bf X}}
\newcommand{\x}{{\bf x}}
\newcommand{\Y}{{\bf Y}}
\newcommand{\y}{{\bf y}}
\newcommand{\E}{\operatorname{E}}
\newcommand{\e}{{e'}}
\newcommand{\G}{\mathcal{G}}
\newcommand{\VV}{\mathcal{V}}
\newcommand{\HH}{\mathcal{H}}
\newcommand{\myfrac}[3][0.7pt]{\genfrac{}{}{}{}{\raisebox{#1}{$#2$}}{\raisebox{-#1}{$#3$}}}
\newcommand{\myfracc}[3][2pt]{\genfrac{}{}{}{}{\raisebox{#1}{$#2$}}{\raisebox{-#1}{$#3$}}}
\DeclareMathOperator{\csch}{csch}
\begin{document}

\title{Mappings for Marginal Probabilities\\ with Applications to Models in Statistical Physics}

\author{\name Mehdi Molkaraie \email mehdi.molkaraie@alumni.ethz.ch \\
       \addr Department of Statistical Sciences\\
       University of Toronto\\
       Toronto ON M5G 1Z5\\ 
       Canada
       }

\editor{Mohammad Emtiyaz Khan}

\maketitle

\begin{abstract}
We present local mappings that relate the marginal probabilities of a global probability mass function represented by its
primal normal factor graph to the corresponding marginal probabilities in its dual normal factor 
graph. The mapping is based on the Fourier transform
of the local factors of the models. Details of the mapping are provided for the 
Ising model, where it is proved that the local extrema of the fixed points are attained at the phase
transition of the two-dimensional nearest-neighbor Ising model.
The results are further extended to the Potts model, to the clock model, and to Gaussian Markov random fields. 
By employing the mapping, we can transform 
simultaneously all the estimated marginal probabilities from the dual domain to the primal domain (and vice versa), which 
is advantageous if estimating the marginals 
can be carried out more efficiently in the dual domain.
An example of particular significance is the ferromagnetic Ising model in a positive 
external magnetic field. For this model, there exists a rapidly mixing Markov chain (called the subgraphs--world process) 
to generate configurations in the dual normal factor graph of the model.
Our numerical experiments illustrate that the proposed procedure can provide more accurate 
estimates of marginal probabilities of a global probability mass function in various settings. 
\end{abstract}

\begin{keywords}
Normal Factor Graph, Factor Graph Duality, Ising Model, Potts Model, Gaussian Markov Random Fields,
Subgraphs--World Process, Phase Transition, Marginal Probability, Fourier Transform, Monte Carlo Methods.
\end{keywords}

\section{Introduction}

One of the main objectives in any probabilistic inference problem is
computing local marginal probabilities of a global multivariate probability mass function (PMF). In general, such a 
computation requires a summation with an exponential number of terms, which makes its exact computation 
intractable \citep[Chapter 2]{wainwright2008graphical}. 
See also~\citep{dagum1993approximating} for the hardness results on approximating conditional 
probabilities in the context of belief networks. 

In this paper, our approach for estimating the local marginal probabilities of a global PMF hinges on the notions of normal 
realization, in which there is an edge for every variable~\citep{Forney:01}, 
normal factor graph (NFG), and the NFG duality theorem~\citep{AY:2011}. 
A succinct discussion on factor graphs and NFGs is provided in Appendix~A.

In our analysis, we mainly focus on well-known models in Statistical Physics.
Specifically, we consider binary models 
with symmetric pairwise nearest-neighbour interactions (e.g., the Ising model). We will also consider
extensions of our results to non-binary models (e.g., the $q$-state Potts model and the 
clock model), and briefly discuss generalizations to continuous models (e.g., Gaussian Markov random fields). 
These models are widely used in machine learning \citep{murphy:2012}, image 
processing \citep{winkler1995, mcgrory2009variational, liu2022deep}, and 
spatial statistics~\citep{besag1993spatial}. Furthermore, the well-known models in Statistical Physics can be used as the basis 
to analyze less conventional and more complex models. 
Connections between Statistical Physics and Machine Learning run deep. For a comprehensive review, see~\citep{carleo2019machine}
and references therein. 

The NFG duality theorem states that the partition 
function of a primal
NFG and the partition function of its dual NFG are equal up to some 
known scale factor.  
It has been demonstrated that, in the low-temperature regime, Monte 
Carlo methods for estimating the partition function converge faster in the dual NFG than in the primal 
NFG of the two-dimensional (2D) 
Ising model~\citep{MoLo:ISIT2013} and of the 2D 
Potts model \citep{ AY:2014, MoGo:2018}.
This work extends the previous results to marginal probabilities of a primal NFG and its dual.

A preliminary version of this work was presented at \citep{molkaraie2020marginal}, where 
the NFG duality theorem was employed to establish a connection between the marginal probabilities of a global PMF associated with a primal NFG and
their corresponding marginals in its dual NFG via local mappings.
In this paper, we give a more detailed analysis on the proposed mappings -- specially for non-binary and for continuous models. New numerical 
experiments are also provided for continuous graphical models.

The mapping can be expressed in terms of the discrete Fourier transform (DFT)
of the local factors of the graphical models. 
Furthermore, the mappings are independent of the size of the model, of the topology of the graphical model, and of any assumptions on 
the parameters of the model (e.g., the coupling parameters).

The proposed method is
practically beneficial if estimating marginal probabilities
can be carried out more efficiently in the dual domain than in the primal domain.
Indeed, there is a rapidly mixing Markov chain (called the subgraphs-world process) 
to generate configurations in the dual NFG of ferromagnetic Ising models with arbitrary topology and in a positive 
external field. The marginal probabilities in the dual domain can thus be estimated from such configurations. 
For more details see~\citep{JS:93}.

In a related topic and in the context of Coding Theory, it has been demonstrated that, for a class of binary symmetric channels, the correlation between code 
bits of low-density parity-check (LDPC) codes and low-density generator-matrix (LDGM) codes decays exponentially 
fast in the high-noise and the low-noise regimes, respectively. 
For more details see~\citep{kudekar2011decay} and references therein.
LDGM codes can be viewed as the dual of LDPC codes, and vice versa~\citep[Chapter 7]{RU:08}.

As an example, let us consider a 2D homogeneous Ising model with periodic boundary conditions. In 
this model, all marginal probabilities can be expressed as the ratio of two partition functions. In the high-temperature regime,
the ratio can be estimated efficiently in the primal NFG, and at low temperatures, the ratio can be estimated efficiently in the 
dual NFG of the model~\citep{MoLo:ISIT2013, AY:2014, MoGo:2018}. 
In more general
settings (e.g., in non-homogeneous models), each marginal probability needs to be estimated \emph{separately} as the ratio of two partition functions. 
However, the mappings of this paper allow a \emph{simultaneous} transformation of 
estimated marginal probabilities from one domain to the other. In the dual domain, 
the marginal probabilities can be estimated via
variational inference algorithms (e.g., the belief propagation (BP) and the tree expectation 
propagation (TEP) algorithms), via Markov chain Monte Carlo methods, or in the special case of the ferromagnetic 
Ising models via the subgraphs-world process.


The paper is organized as follows. In Sections~\ref{sec:Ising} and~\ref{sec:PrimalNFG} 
we describe our models in the primal domain and discuss their graphical representations in terms of NFGs. 
Section~\ref{sec:Dual} discusses the dual NFG, the NFG duality theorem, and the high-temperature series expansion
of the partition function of the Ising model. In Section~\ref{sec:Marginals}, we derive mappings that relate the edge/vertex marginal probabilities 
in the primal NFG to the corresponding edge/vertex marginal probabilities in the dual NFG.
For binary models, details of the mapping, the fixed points of the mapping, and its connections to local 
magnetization are discussed in
Section~\ref{sec:IsingModelMapping}. Generalizations of the mapping and its fixed points to 
non-binary models 
are discussed in Sections~\ref{sec:GenP}. 
Numerical experiments for 2D and fully-connected graphical models are reported in Section~\ref{sec:NumExp}. 
Finally, section~\ref{sec:Continuous} briefly discusses extensions of the mappings to continuous (Gaussian) models.
Appendix~A is a compact tutorial on factor graphs, NFGs, and the NFG duality. 
In Appendix~B, the mapping 
is employed as an alternative method to compute the marginal probabilities of the 1D Ising and Potts models. 



\section{The Primal Model}
\label{sec:Ising}

Let $\G = (\VV, \EE)$ be a  finite, simple, connected, and undirected graph 
with $|\VV|$ vertices (sites) and $|\EE|$ edges (bonds). 
Suppose random variables $\X = (X_v, v \in \VV)$ and $\Y = (Y_e, e \in \EE)$ are indexed by the 
elements of $\VV$ and $\EE$, respectively. We refer to a 
vertex by $X_v$ (i.e., the random variable associated with $v$) or for brevity by its index $v$. Similarly, we may refer to the edge
that connects $X_{k}$ and $X_{\ell}$ by $(k,\ell)$, by $Y_e$, or simply by its index $e$.

Each variable 
takes values in a finite alphabet $\calA$, where $\calA = \ZZ/q\ZZ$, i.e., the ring of 
integers modulo $q$, for some fixed integer $q \ge 2$. However, we may view 
$\calA$ as any finite abelian group with respect to addition. An assignment of values to $\X$ (to $\Y$) is denoted by $\x$ (by $\y$) and is called a 
\emph{configuration}. Here, $\x \in \calA^{|\VV|}$ and $\y \in \calA^{|\EE|}$ are column vectors of length $|\VV|$ and $|\EE|$, respectively.

For a subset $\mathcal{I}$ of $\VV$ we let $\X_\mathcal{I} = (X_i, i \in \mathcal{I})$. 
We further define two indicator functions: the equality indicator function, which is defined as follows
\begin{equation} 
\label{eqn:Definition1}
\delta_{=}(\x_\mathcal{I}) =  \left\{ \begin{array}{ll}
    1, & \text{if $x_1 = x_2 = \ldots = x_{|\mathcal{I}|}$} \\
    0, & \text{otherwise,}
  \end{array} \right.
\end{equation}
and the zero-sum indicator function, which is defined as
\begin{equation} 
\label{eqn:Definition2}
\delta_{+}(\x_\mathcal{I}) =  \left\{ \begin{array}{ll}
    1, & \text{if $x_1 + x_2 + \ldots + x_{|\mathcal{I}|} = 0$} \\
    0, & \text{otherwise.}
  \end{array} \right.
\end{equation}
For $|\mathcal{I}|=1$, both (\ref{eqn:Definition1}) and (\ref{eqn:Definition2}) are equivalent to the Kronecker delta 
function, denoted by $\delta(\cdot)$. Notice that $\delta_{=}(x_k, x_\ell)$ and $\delta_{+}(x_k, x_\ell)$ are both
equal to $\delta(x_k - x_\ell)$ when $\calA = \ZZ/2\ZZ$ (i.e., the binary field).

In this paper, we assume that variables have pairwise interactions. Two variables interact 
if their corresponding vertices are connected by an edge in $\G$. 
Following~\cite{Forney:18}, we suppose $\pmb{M}$ is an oriented incidence 
matrix of $\G$ with size $|\EE| \times |\VV|$. To construct $\pmb{M}$, we first give each edge an arbitrary orientation. We then set the entry
$\pmb{M}_{e,v} = +1$ if $Y_e$ leaves $X_v$, the entry $\pmb{M}_{e,v} = -1$ if $Y_e$ enters 
$X_v$, and the entry $\pmb{M}_{e,v} = 0$ otherwise. Therefore, each row of $\pmb{M}$ has exactly
two nonzero entries, and the number of nonzero entries in the $v$-th column of $\pmb{M}$ is
equal to the degree of $X_v$. Since $\G$ is connected, the rank of $\pmb{M}$ is $|\VV|-1$.

In this framework
\begin{equation} 
\label{eqn:Y}
\y\colon \calA^{|\VV|} \rightarrow \calA^{|\EE|}, \quad  \x \mapsto \pmb{M}\x
\end{equation}
Each entry $y_e$ of $\y(\x) = \pmb{M}\x$ can thus be expressed as the difference between 
$x_{k}$ and $x_{\ell}$, where $X_{k}$ and $X_{\ell}$ are connected by $Y_e$. In the sequel, 
we drop the argument $\x$, and use $\y$ instead of $\y(\x)$ when there is no ambiguity.

In the primal domain, we assume that the probability of a configuration $\x \in \calA^{|\VV|}$  is given by
the following PMF
\begin{equation} 
\label{eqn:ProbP}
\mathrm{\pi}_\text{p} (\x) = \frac{1}{Z_\text{p}}\prod_{e \in \EE} \psi_e(y_e)\prod_{v \in \VV} \phi_v(x_v)
\end{equation}
for some set of \emph{edge-weighing factors}
$\{\psi_{e} \colon \calA \rightarrow \R_{\ge 0}, e \in \EE\}$ and \emph{vertex-weighing factors}
$\{\phi_{v} \colon \calA \rightarrow \R_{\ge 0}, v \in \VV\}$. 
The normalization constant $Z_\text{p}$ 
is the \emph{partition function}, which can be computed as
\begin{equation} 
\label{eqn:ZP}
Z_\text{p} = \sum_{\x \in \calA^{|\VV|}}\prod_{e \in \EE} \psi_e(y_e)\prod_{v \in \VV} \phi_v(x_v),
\end{equation}
where the sum runs over all the vectors in the configuration space $\calA^{|\VV|}$.

In (\ref{eqn:ProbP}) we have assumed that each edge-weighing factor $\psi_{e}(\cdot)$ is only a function of the edge 
configuration $y_e$. In the sequel, we show that many important models in Statistical Physics
(e.g., the Ising model and the $q$-state Potts model) can be 
easily represented in this framework. In Section~\ref{sec:Continuous}, we will show that a Gaussian Markov random field with the 
thin-membrane prior can also be expressed by~(\ref{eqn:ProbP}).


\subsection{Examples from Statistical Physics}

In the Ising and Potts models variables (particles) are associated with the vertices of $\G$, and two
particles interact if they are joined by and edge. A real coupling parameter is associated with 
each interacting pair of particles.
The model is called \emph{ferromagnetic} if coupling parameters are nonnegative. 
If coupling parameters are constant, the model is called \emph{homogeneous}.
Moreover, particles may be under the influence of an external 
magnetic field.
The external
field is usually assumed to be constant. However, in more general settings, the field may 
vary from site to site.

The energy of a configuration $\x \in \calA^{|\VV|}$ is given by the Hamiltonian $\HH(\x)$. 
The probability of a configuration $\x$ is given by the Gibbs-Boltzmann distribution 
$\pi \propto \mathrm{e}^{-\beta\HH(\x)}$, where $\beta$ is proportional to the inverse temperature~\citep{Huang:87}.
We sometimes denote exp(1) by $\mathrm{e}$ to distinguish it from the variable $e$ that we use to denote the edges of $\G$.

\begin{itemize}

\item {\bf The Ising model.} In an Ising model $\calA = \ZZ/2\ZZ$, and each particle may be in one of the two states (e.g., spin up or spin down). 
The Hamiltonian of the model is given by\footnote{In the bipolar case (i.e., when $\calA = \{-1,+1\}$), 
the Hamiltonian of the Ising model has the more familiar form $\HH(\x) 
= -\sum_{(k,\ell) \in \EE} J_{k,\ell}x_k x_\ell - \sum_{v \in \VV}H_{v}x_v$. For more details, see~\cite[Chapter 1]{Baxter:07}.}
\begin{equation} 
\label{eqn:HamiltonianIsing}
\HH(\x) = -\sum_{(k, \ell) \in \EE}J_{k,\ell}\left(2\delta_{=}(x_k, x_\ell)-1\right) 
- \sum_{v \in \VV}H_v\left(2\delta(x_v)-1\right)
\end{equation}

Here $J_{k,\ell}$ and $H_v$ denote the coupling parameter associated with the interacting
pair $(X_k, X_\ell)$ and 
the external field at site $v$, respectively. 

Starting from the Gibbs-Boltzmann distribution, it is 
straightforward to verify that the model can be represented by the PMF~(\ref{eqn:ProbP}), 
in which the edge-weighing factors are
\begin{equation} 
\label{eqn:IsingPot1}
\psi_e(y_e) = \left\{ \begin{array}{ll}
    \textrm{e}^{\beta J_e}, & \text{if $y_e = 0$} \\
    \textrm{e}^{-\beta J_e}, & \text{if $y_e = 1,$}
  \end{array} \right.
\end{equation}
where $\y = \pmb{M}\x$ and $J_e$ is the coupling parameter associated with $e = (k,\ell)$, and the 
vertex-weighing factors are given by
\begin{equation} 
\label{eqn:IsingPot2}
\phi_v(x_v) = \left\{ \begin{array}{ll}
    \textrm{e}^{\beta H_v}, & \text{if $x_v = 0$} \\
    \textrm{e}^{-\beta H_v}, & \text{if $x_v = 1$}
  \end{array} \right.
\end{equation}

\item {\bf The $q$-state Potts model.} In the Potts model 
each variable represents the $q$ possible states of a particle, which takes values in $\calA = \ZZ/q\ZZ$.
The Hamiltonian of the model reads
\begin{equation} 
\label{eqn:HamiltonianPotts}
\HH(\x) = -\sum_{(k, \ell) \in \EE}J_{k,\ell}\delta_{=}(x_k, x_\ell) - \sum_{v \in \VV}H_v\delta(x_v)
\end{equation}
Here, following~\cite[Chapter 1]{Nishimori:15}, we have assumed that the external field affects the variable $x_v$ only if $x_v = 0$.

Again, from the Gibbs-Boltzmann distribution, it is easy to show that the model can be represented by~(\ref{eqn:ProbP}) with
\begin{equation} 
\label{eqn:PottsPot1}
\psi_e(y_e) = \left\{ \begin{array}{ll}
    \textrm{e}^{\beta J_e}, & \text{if $y_e = 0$} \\
    1, & \text{otherwise,}
  \end{array} \right.
\end{equation}
and 
\begin{equation} 
\label{eqn:PottsPot2}
\phi_v(x_v) = \left\{ \begin{array}{ll}
    \textrm{e}^{\beta H_v}, & \text{if $x_v = 0$} \\
    1, & \text{otherwise.}
  \end{array} \right.
\end{equation}

\item {\bf The clock model.} In the absence of an external field, let
\begin{equation} 
\label{eqn:HamiltonianClock}
\HH(\x) = -\sum_{(k, \ell) \in \EE }J_{k,\ell}\cos(2\pi(x_k - x_\ell)/q),
\end{equation}
where $\calA = \ZZ/q\ZZ$. This model is known as the clock model (or the vector 
Potts model), which reduces to the Ising model if $q=2$ and to the three-state Potts model if $q = 3$. 
In the limit of large $q$, the model is equivalent to the XY model.

The clock model can also be expressed by~(\ref{eqn:ProbP}).
E.g., the edge-weighing factors of a four-state clock model with Hamiltonian (\ref{eqn:HamiltonianClock}) are given by
\begin{equation} 
\label{eqn:ClockPot1}
\psi_e(y_e) = \left\{ \begin{array}{ll}
    \textrm{e}^{\beta J_e}, & \text{if $y_e = 0$} \\
       \textrm{e}^{-\beta J_e}, & \text{if $y_e = 2$} \\
    1, & \text{otherwise.}
  \end{array} \right.
\end{equation}

\end{itemize}

For more details, see~\citep{Yeo:92, Baxter:07}.

Next, we will discuss the graphical representation of (\ref{eqn:ProbP}) in terms of NFGs.

\section{The Primal NFG}
\label{sec:PrimalNFG} 

The factorization in~(\ref{eqn:ProbP}) can be represented
by an NFG $\G=(\VV,\EE)$, in which vertices represent the factors 
and edges 
represent the variables.
The edge 
that represents some variable $y_e$ is connected to the vertex 
representing the factor $\psi_e(\cdot)$
if and only if $y_e$ is an argument of $\psi_e(\cdot)$. 
If a variable is involved in more than two factors, it is 
replicated via an equality indicator factor. 
See Appendix~A for more details.

The primal NFG of the 1D Potts model in an external field with periodic boundary conditions is depicted in~\Fig{fig:2DGridMod},
where the big unlabeled boxes (attached to $Y_1, Y_2$, and $Y_3$) represent~(\ref{eqn:PottsPot1}), and 
the small unlabeled boxes (attached to $X_1, X_2$, and $X_3$) represent~(\ref{eqn:PottsPot2}).

In~\Fig{fig:2DGridMod}, boxes labeled ``$=$'' are instances of equality 
indicator factors given by (\ref{eqn:Definition1}), 
which impose the constraint that all their incident variables are equal, e.g., the equality indicator factor 
$\delta_{=}( x_1, x^{\prime}_1, x^{\prime \prime}_1)$ is equal to one if $x_1 = x^{\prime}_1 = x^{\prime \prime}_1$, and
is equal to zero otherwise.
In analogy with 
the inversion bubble connected to the logic NOT, NAND, and NOR gate symbols~\citep{horowitz1989art}, the symbol ``$\circ$'' is
used to indicate a sign inversion.
The boxes labeled ``$+$'' are instances of zero-sum indicator 
factors given by (\ref{eqn:Definition2}). They impose the
constraint that all their incident variables sum 
to zero,  e.g., the zero-sum indicator factor 
$\delta_{+}( y_1, x^{\prime}_1, x^{\prime}_2)$ is equal to one if $y_1 + x^{\prime}_1 + x^{\prime}_2 = 0$, and
evaluates to zero otherwise. (Recall that all arithmetic manipulations 
are modulo $q$.) 

In the primal domain, the NFG of a 1D Ising model with periodic boundary conditions is also shown in~\Fig{fig:2DGridMod}, 
where the big unlabeled boxes represent~(\ref{eqn:IsingPot1}), and 
the small unlabeled boxes represent~(\ref{eqn:IsingPot2}). In the case of the Ising model $\calA = \ZZ/2\ZZ$ (i.e., the binary field), the ``$\circ$'' symbols 
are irrelevant and can be removed from the primal NFG. (Note that in the binary field $x_1 + x_1 = 0$, and therefore $-x_1 = x_1$.)

%
%
%


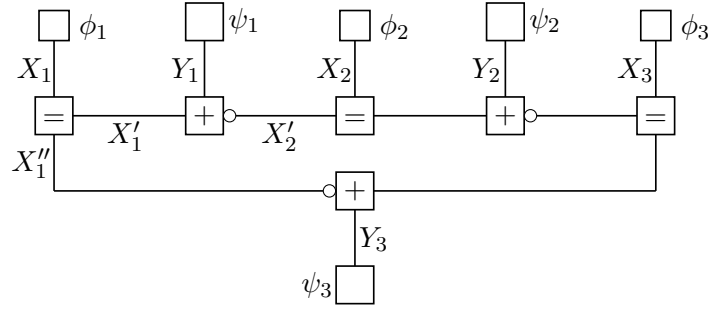
\begin{figure}[t]
  \centering
  \begin{tikzpicture}[scale=1.0]

\draw [line width=0.22mm] (0, 2) rectangle (0.5,2.5);
\draw [line width=0.22mm] (2, 2) rectangle (2.5,2.5);
\draw [line width=0.22mm] (4, 2) rectangle (4.5,2.5);
\draw [line width=0.22mm] (6, 2) rectangle (6.5,2.5);
\draw [line width=0.22mm] (8, 2) rectangle (8.5,2.5);
\draw [line width=0.22mm] (4, 1) rectangle (4.5,1.5);
\draw [line width=0.22mm] (0.5,2.25) -- (2, 2.25);
\draw [line width=0.22mm] (2.5,2.25) -- (4, 2.25);
\draw [line width=0.22mm] (4.5,2.25) -- (6, 2.25);
\draw [line width=0.22mm] (6.5,2.25) -- (8, 2.25);
\draw [line width=0.22mm] (0.25,2.5) -- (0.25, 3.25);
\draw [line width=0.22mm] (2.25,2.5) -- (2.25, 3.25);
\draw [line width=0.22mm] (4.25,2.5) -- (4.25, 3.25);
\draw [line width=0.22mm] (6.25,2.5) -- (6.25, 3.25);
\draw [line width=0.22mm] (8.25,2.5) -- (8.25, 3.25);
\draw [line width=0.22mm] (0.25,1.25) -- (4.0, 1.25);
\draw [line width=0.22mm] (4.5,1.25) -- (8.25, 1.25);
\draw [line width=0.22mm] (4.25,1.0) -- (4.25, 0.25);
\draw [line width=0.22mm] (0.25,2) -- (0.25, 1.25);
\draw [line width=0.22mm] (8.25,2) -- (8.25, 1.25);
\draw [line width=0.22mm] (0.05, 3.25) rectangle (0.45,3.65);
\draw [line width=0.22mm] (2, 3.25) rectangle (2.5,3.75);
\draw [line width=0.22mm] (4.05, 3.25) rectangle (4.45,3.65);
\draw [line width=0.22mm] (6, 3.25) rectangle (6.5,3.75);
\draw [line width=0.22mm] (8.05, 3.25) rectangle (8.45,3.65);
\draw [line width=0.22mm] (4, 0.25) rectangle (4.5,-0.25);
\node at (0.25, 2.20){$=$};
\node at (2.25, 2.25){$+$};
\node at (4.25, 2.20){$=$};
\node at (6.25, 2.25){$+$};
\node at (8.25, 2.20){$=$};
\node at (4.25, 1.25){$+$};
 \draw (0.78,3.43) node{$\phi_1$};
  \draw (4.78,3.43) node{$\phi_2$};
 \draw (8.78,3.43) node{$\phi_3$};
 \draw (2.78,3.52) node{$\psi_1$};
 \draw (6.78,3.52) node{$\psi_2$};
  \draw (3.75,0.0) node{$\psi_3$};
  \draw (0.0,2.86) node{$X_1$};
  \draw (1.2,1.98) node{$X^{\prime}_1$};
    \draw (-0.05,1.62) node{$X^{\prime\prime}_1$};
      \draw (3.25,1.98) node{$X^{\prime}_2$};
  \draw (3.98,2.86) node{$X_2$};
 \draw (7.98,2.86) node{$X_3$};
 \draw (2.0,2.86) node{$Y_1$};
  \draw (6.0,2.86) node{$Y_2$};
    \draw (4.5,0.61) node{$Y_3$};
\draw[fill=white] (2.584,2.25) circle (0.82mm);
\draw[fill=white] (6.584,2.25) circle (0.82mm);
\draw[fill=white] (3.911,1.25) circle (0.82mm);
  \end{tikzpicture}
  \caption{\label{fig:2DGridMod}
The primal NFG of the 1D Potts model in an external field with periodic boundary conditions,
where the big unlabeled boxes represent~(\ref{eqn:PottsPot1}), and 
the small unlabeled boxes represent~(\ref{eqn:PottsPot2}). Boxes labeled ``$=$'' are instances of equality 
indicator factors as in (\ref{eqn:Definition1}),
the boxes labeled ``$+$'' are instances of zero-sum indicator 
factors as in (\ref{eqn:Definition2}), and the symbol ``$\circ$''
denotes a sign inversion.}
\end{figure}


As discussed in Section~\ref{sec:Ising}, we observe that in \Fig{fig:2DGridMod} we can freely choose $\X$ 
and therefrom fully determine $\Y$. More generally, if we 
take $\G$ to be a $d$-dimensional model with pairwise interactions, we can compute each component $Y_e$ of $\Y$ 
from the two components of $\X$ that are incident to the zero-sum indicator 
factor attached to $Y_e$.


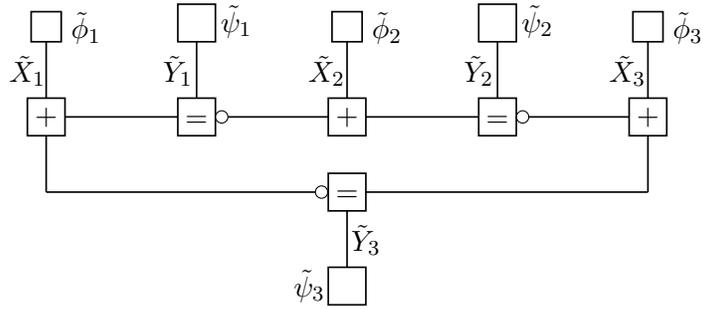
\begin{figure}[t]
  \centering
  \begin{tikzpicture}[scale=1.0]

\draw [line width=0.22mm] (0, 2) rectangle (0.5,2.5);
\draw [line width=0.22mm] (2, 2) rectangle (2.5,2.5);
\draw [line width=0.22mm] (4, 2) rectangle (4.5,2.5);
\draw [line width=0.22mm] (6, 2) rectangle (6.5,2.5);
\draw [line width=0.22mm] (8, 2) rectangle (8.5,2.5);
\draw [line width=0.22mm] (4, 1) rectangle (4.5,1.5);
\draw [line width=0.22mm] (0.5,2.25) -- (2, 2.25);
\draw [line width=0.22mm] (2.5,2.25) -- (4, 2.25);
\draw [line width=0.22mm] (4.5,2.25) -- (6, 2.25);
\draw [line width=0.22mm] (6.5,2.25) -- (8, 2.25);
\draw [line width=0.22mm] (0.25,2.5) -- (0.25, 3.25);
\draw [line width=0.22mm] (2.25,2.5) -- (2.25, 3.25);
\draw [line width=0.22mm] (4.25,2.5) -- (4.25, 3.25);
\draw [line width=0.22mm] (6.25,2.5) -- (6.25, 3.25);
\draw [line width=0.22mm] (8.25,2.5) -- (8.25, 3.25);
\draw [line width=0.22mm] (0.25,1.25) -- (4.0, 1.25);
\draw [line width=0.22mm] (4.5,1.25) -- (8.25, 1.25);
\draw [line width=0.22mm] (4.25,1.0) -- (4.25, 0.25);
\draw [line width=0.22mm] (0.25,2) -- (0.25, 1.25);
\draw [line width=0.22mm] (8.25,2) -- (8.25, 1.25);
\draw [line width=0.22mm] (0.05, 3.25) rectangle (0.45,3.65);
\draw [line width=0.22mm] (2, 3.25) rectangle (2.5,3.75);
\draw [line width=0.22mm] (4.05, 3.25) rectangle (4.45,3.65);
\draw [line width=0.22mm] (6, 3.25) rectangle (6.5,3.75);
\draw [line width=0.22mm] (8.05, 3.25) rectangle (8.45,3.65);
\draw [line width=0.22mm] (4, 0.25) rectangle (4.5,-0.25);
\node at (0.25, 2.25){$+$};
\node at (2.25, 2.20){$=$};
\node at (4.25, 2.25){$+$};
\node at (6.25, 2.20){$=$};
\node at (8.25, 2.25){$+$};
\node at (4.25, 1.20){$=$};
 \draw (0.78,3.43) node{$\tilde \phi_1$};
  \draw (4.78,3.43) node{$\tilde \phi_2$};
 \draw (8.78,3.43) node{$\tilde \phi_3$};
 \draw (2.78,3.52) node{$\tilde \psi_1$};
 \draw (6.78,3.52) node{$\tilde \psi_2$};
  \draw (3.75,0.0) node{$\tilde \psi_3$};
  \draw (0.0,2.86) node{$\tilde X_1$};
  \draw (3.98,2.86) node{$\tilde X_2$};
 \draw (7.98,2.86) node{$\tilde X_3$};
 \draw (2.0,2.86) node{$\tilde Y_1$};
  \draw (6.0,2.86) node{$\tilde Y_2$};
    \draw (4.5,0.61) node{$\tilde Y_3$};
\draw[fill=white] (2.584,2.25) circle (0.82mm);
\draw[fill=white] (6.584,2.25) circle (0.82mm);
\draw[fill=white] (3.911,1.25) circle (0.82mm);
  \end{tikzpicture}
  \caption{\label{fig:2DGridModDual}
The dual NFG of Fig.~\ref{fig:2DGridMod},
where the big unlabeled boxes represent~(\ref{eqn:PottsDJ}), and 
the small unlabeled boxes represent~(\ref{eqn:PottsDV}). Boxes labeled ``$=$'' are instances of equality 
indicator factors as in (\ref{eqn:Definition1}),
the boxes labeled ``$+$'' are instances of zero-sum indicator 
factors as in (\ref{eqn:Definition2}), and the symbol ``$\circ$''
models a sign inversion.}
\end{figure}

\section{The Dual NFG}
\label{sec:Dual}

We can obtain the dual of a primal NFG representing (\ref{eqn:ProbP}) by employing the following steps.
\begin{itemize}
\item Replace each variable, say $X_k$, by its corresponding dual variable $\tilde X_k$.
\item Replace each factor $\psi_e(\cdot)$ by its 1D DFT $\tilde\psi_e(\cdot)$. 
\item Replace each factor $\phi_v(\cdot)$ by its 1D DFT $\tilde\phi_v(\cdot)$. 
\item Replace equality indicator factors 
by zero-sum indicator factors, and vice-versa.
\end{itemize}

The above dualizations procedure retains the topology of the primal NFG. 
In general, dual variables take values in the dual of $\calA$ (i.e., its character group). However, the dual group can be 
taken as $\calA = \ZZ/q\ZZ$ in our framework. We will use the tilde symbol to denote variables in the dual domain.

Here $\tilde \psi_e(\cdot)$ the 1D DFT of $\psi_e(\cdot)$ can be computed as
\begin{equation}
\label{eqn:FTformula}
\tilde \psi_e(\tilde y_e) = \sum_{y \in \calA} \psi_e(y_e)\omega^{y_e\tilde y_e}_{|\calA|},
\end{equation} 
where $\omega_{|\calA|} = \mathrm{e}^{-2\pi \mathrm{i}/|\calA|}$ and $\mathrm{i}=\sqrt{-1}$. See~\citep{Brace:99} for more details on 
the DFT and~\citep{Terras1999} for more
details on the Fourier transform
on finite abelian groups

After applying the dualization procedure and after a little rearranging, we obtain the dual NFG of the 1D Potts model in an 
external field with periodic boundaries (i.e., the dual of~\Fig{fig:2DGridMod}), which is illustrated in~\Fig{fig:2DGridModDual}. 

The big unlabeled boxes in~\Fig{fig:2DGridModDual} are the 1D DFT of~(\ref{eqn:PottsPot1}), which can be computed as
\begin{align}
\tilde \psi_e(\tilde y_e) & = \sum_{y_e \in \calA} \mathrm{e}^{\beta J_e\delta(y_e)}\omega^{y_e\tilde y_e}_{|\calA|} \\
& = \sum_{y_e \in \calA} \big(1+ (\mathrm{e}^{\beta J_e}-1)\delta(y_e)\big)\omega^{y_e\tilde y_e}_{|\calA|},
\end{align} 
which, after a little algebra, gives
\begin{equation} 
\label{eqn:PottsDJ}
\tilde \psi_{e}(\tilde y_e) = \left\{ \begin{array}{ll}
      \mathrm{e}^{\beta J_e} -1 + q, & \text{if $\tilde y_e = 0$} \\
      \mathrm{e}^{\beta J_e} - 1, & \text{otherwise.}
  \end{array} \right.
\end{equation}
Similarly, the small unlabeled boxes are the 1D DFT of~(\ref{eqn:PottsPot2}) given by
\begin{equation} 
\label{eqn:PottsDV}
\tilde \phi_{v}(\tilde x_v) = \left\{ \begin{array}{ll}
      \mathrm{e}^{\beta H_v} - 1 + q, & \text{if $\tilde x_v = 0$} \\
      \mathrm{e}^{\beta H_v} - 1, & \text{otherwise.}
  \end{array} \right.
\end{equation}

In the dual NFG, it holds that $\tilde\x(\tilde\y) = \pmb{M}^\mathsf{T}\tilde \y$, where $\pmb{M}^\mathsf{T}$ denotes the transpose of $\pmb{M}$. In 
other
words, we can freely choose $\tilde \Y$ and therefrom compute $\tilde \X$.  
E.g., if we take $\G$ to be a $d$-dimensional model with pairwise interactions 
and assume periodic boundary conditions, each component $\tilde X_v$ of $\tilde \X$  can be computed 
from the $2d$ components of $\tilde \Y$ that are incident to the zero-sum 
indicator factor attached to $\tilde X_v$.

If a ferromagnetic Potts model is under the influence of a nonnegative external field, both
factors (\ref{eqn:PottsDJ}) and (\ref{eqn:PottsDV}) will be nonnegative. 
The global PMF in the dual NFG can thus be defined as
\begin{equation}
\label{eqn:Pd}
\mathrm{\pi}_\text{d}(\tilde \y) = \frac{1}{Z_\text{d}}\prod_{e \in \EE} \tilde \psi_e(\tilde y_e) \prod_{v \in \VV} \tilde \phi_v(\tilde x_v),
\end{equation} 
where
\begin{equation}
\label{eqn:Zd}
Z_\text{d} = \sum_{\tilde \y \in \calA^{|\EE|}} \prod_{e \in \EE} \tilde \psi_e(\tilde y_e) \prod_{v \in \VV} \tilde \phi_v(\tilde x_v)
\end{equation} 
 is the partition function in the dual domain.

In the models that we study in this paper, the NFG duality theorem states that the partition function of the dual NFG $Z_\text{d}$ is
equal to the partition function of the primal NFG $Z_\text{p}$, up to some know scale factor. Indeed
\begin{equation}
\label{eqn:scaleF}
Z_{\text{d}} = \alpha(\G)Z_\text{p}
\end{equation} 
For more details, see~\citep{AY:2011}. The scale factor $\alpha(\G)$ depends on the topology of $\G$, and is given by
\begin{equation}
\label{eqn:scaleFG}
\alpha(\G) = |\calA|^{|\EE| - |\VV| + 1}
\end{equation} 
The term $|\EE| - |\VV| + 1$ that appears in the exponent in (\ref{eqn:scaleFG}) is equal to the first Betti number 
(i.e., the cyclomatic number) of $\G$. See~\citep[Appendix]{Mo:Allerton17} and \citep{Forney:18} for more details.

The NFG duality theorem as expressed in (\ref{eqn:scaleF}) is closely related to the Holant Theorem~\citep{valiant2008holographic}. 
We refer the readers who are interested in this connection to~\citep{cai2008holographic}, \citep[Section 3.2]{AY:2011}, and~\citep{cai2017complexity}.



\subsection{Dual NFG of the Ising Model and the High-Temperature Series Expansion}

In the dual NFG of the Ising model 
\begin{equation} 
\label{eqn:IsingDJ}
\tilde \psi_{e}(\tilde y_e) = \left\{ \begin{array}{ll}
      2\cosh(\beta J_e), & \text{if $\tilde y_e = 0$} \\
      2\sinh(\beta J_e), & \text{if $\tilde y_e = 1,$}
  \end{array} \right.
\end{equation}
which is the 1D DFT of (\ref{eqn:IsingPot1}). Similarly 
\begin{equation} 
\label{eqn:IsingDH}
\tilde \phi_{v}(\tilde x_v) = \left\{ \begin{array}{ll}
      2\cosh(\beta H_v), & \text{if $\tilde x_v = 0$} \\
      2\sinh(\beta H_v), & \text{if $\tilde x_v = 1$}
  \end{array} \right.
\end{equation}
is the 1D DFT of (\ref{eqn:IsingPot2}). 

As a side remark, note that small and large values of the coupling parameters correspond to
the high-temperature and low-temperature regimes, respectively.

Let us consider a ferromagnetic Ising model in a constant and positive external field $H$. The valid configurations in the dual NFG of this model give 
rise to the following expansion of the partition function 
\begin{equation} \label{eqn:HTExpandExt}
Z_{\text{p}} \propto \sum_{\calU \subseteq \EE}\tanh(H)^{|\text{odd$(\calU)$}|}\prod_{(k,\ell) \in \,\calU} \tanh(J_e),
\end{equation}
where  $\calU \subseteq \EE$ and $\text{odd$(\calU)$}$ denotes the set of all odd-degree 
vertices in the subgraph of $\EE$
induced by $\calU$. For more details, see~\citep[Section VIII]{MoGo:2018}.

In statistical physics, the sum in~(\ref{eqn:HTExpandExt}) is generally known as the high-temperature series 
expansion of the partition function~\citep{Newell:53} and~\cite[Chapter 10]{Nishimori:15}.

In~\citep{JS:93}, the authors have proposed a fully polynomial-time randomized approximation scheme called the 
subgraphs-world process to estimate the partition function of ferromagnetic Ising models with arbitrary topology and in a positive external field.
The subgraphs-world process is based on the high-temperature series 
expansion of the partition function. Indeed, 
the configurations in the subgraphs-world process coincide with the configurations in~(\ref{eqn:HTExpandExt}).

The mixing time of the subgraphs-world process is polynomial in the size of the model 
at all temperatures.\footnote{This is a remarkable result, as typical Monte Carlo methods (e.g., the Gibbs sampling algorithm for the 2D Ising model) are rapidly mixing only
in the high-temperature regime \citep{martinelli1994approach}.}
It is therefore possible to employ the subgraphs-world process to generate configurations in the dual
NFG of the ferromagnetic Ising model. Since the process is rapidly
mixing, it converges in polynomial time.

%
%
%

\section{Marginal Probabilities and Pertinent Mappings}
\label{sec:Marginals}

In this section, we derive local mappings that relate marginal probabilities at the edges of a primal NFG to the
marginal probabilities at the corresponding edges in the dual NFG. Indeed, we will prove 
that 
$(\pi_{\text{p}, e}(a)/\psi_e(a), a\in \calA)$ in the primal domain 
and $(\pi_{\text{d}, e}(a)/\tilde \psi_e(a), a \in \calA)$ in the dual domain are 
Fourier pairs.

In the primal NFG
the marginal probability over $e \in \EE$ can be computed as
\begin{equation}
\label{eqn:MargProbP1}
\pi_{\text{p}, e}(a)  = \frac{Z_{\text{p}, e}(a)}{Z_\text{p}}, \quad a \in \calA,
\end{equation}
where
\begin{equation}
Z_{\text{p}, e}(a) = \sum_{\x\colon y_e(\x) = a} \, \prod_{e \in \EE} \psi_e(y_e(\x)) \prod_{v \in \VV}\phi_v(x_v)
\label{eqn:MargProbUD}
\end{equation}
Equivalently,
\begin{equation}
Z_{\text{p}, e}(a) = S_e(a)\psi_e(a)
\label{eqn:MargProbUD2}
\end{equation}
with
\begin{equation}
S_e(a) = \sum_{\x\colon y_e(\x) = a}\,\prod_{\e \in \EE \setminus \{e\}} \psi_\e(y_\e(\x)) \prod_{v \in \VV}\phi_v(x_v)
\label{eqn:MargProbUD3}
\end{equation}

The partition function $Z_{\text{p}}$ can therefore be written as the dot product of the 
vectors $S_e(\cdot)$ and $\psi_e(\cdot)$ as
\begin{align}
\label{eqn:Dotproduct}
Z_{\text{p}} &= \langle  S_e, \psi_e\rangle \\
                      &= \sum_{a \in \calA} S_e(a)\psi_e(a),
\end{align}
which indicates that (\ref{eqn:MargProbP1}) is a valid PMF over $\calA$.

In Coding Theory terminology, $\left(\psi_e(a), a \in \calA \right)$ is called the \emph{intrinsic} message 
vector and $\left(S_e(a), a \in \calA \right)$ is called the \emph{extrinsic} message vector at edge $e \in \EE$. 
The fact that at any edge $e \in\EE$ the partition function is the dot product
of the intrinsic and extrinsic message vectors is called the \emph{sum-product rule}. 
This observation holds for any
edge that is by itself a cut-set of $\G$.
For more details, see~\cite{Forney:01}.

\begin{figure}[t]
  \centering
  \begin{tikzpicture}[scale=1.0]

\draw [line width=0.22mm] (0, 2) rectangle (0.5,2.5);
\draw [line width=0.22mm] (2, 2) rectangle (2.5,2.5);
\draw [line width=0.22mm] (4, 2) rectangle (4.5,2.5);
\draw [line width=0.22mm] (2, 3.25) rectangle (2.5,3.75);
\draw [line width=0.22mm] (0.5,2.25) -- (2, 2.25);
\draw [line width=0.22mm] (2.5,2.25) -- (4, 2.25);
\draw [line width=0.22mm] (2.25,2.5) -- (2.25, 3.25);
\node at (0.25, 2.20){$=$};
\node at (2.25, 2.25){$+$};
\node at (4.25, 2.20){$=$};
\draw [line width=0.22mm] (7.5,2) rectangle (8,2.5);
\draw [line width=0.22mm] (9.5, 2) rectangle (10,2.5);
\draw [line width=0.22mm] (11.5, 2) rectangle (12,2.5);
\draw [line width=0.22mm] (9.5, 3.25) rectangle (10,3.75);
\draw [line width=0.22mm] (9.5, 3.25) rectangle (10,3.75);
\node at (7.75, 2.25){$+$};
\node at (9.75, 2.20){$=$};
\node at (11.75, 2.25){$+$};
\draw [line width=0.22mm] (8,2.25) -- (9.5, 2.25);
\draw [line width=0.22mm] (10,2.25) -- (11.5, 2.25);
\draw [line width=0.22mm] (9.75,2.5) -- (9.75, 3.25);
 \draw (2.78,3.52) node{$\xi_e$};
 \draw (10.28,3.52) node{$\tilde \xi_e$};
 \draw (2.0,2.86) node{$Y_e$};
 \draw (9.5,2.86) node{$\tilde Y_e$};
\draw[fill=white] (2.584,2.25) circle (0.82mm);
\draw[fill=white] (10.084,2.25) circle (0.82mm);
\draw[myarrow] (5.44,2.3) -- (6.54,2.3);
\node at (5.9, 2.65){\emph{dual}};
  \end{tikzpicture}
  \vspace{1.0ex}
  \caption{\label{fig:Marg2}
An edge $e \in \EE$ in the intermediate primal NFG (left) and in the intermediate dual  NFG (right).
The unlabeled box (left) represents $\xi_e$ given by~(\ref{eqn:Int1}) and the unlabeled box (right) represents 
$\tilde \xi_e$, the 1D DFT of $\xi_e$.}
\end{figure}
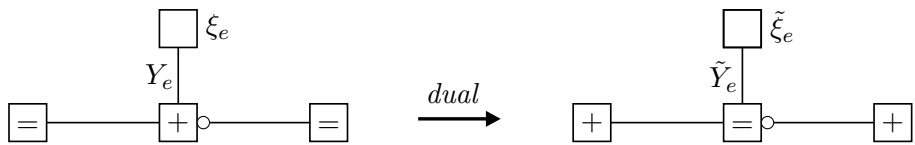



Alternatively, $S_e(a)$ can be viewed as the partition function of 
an intermediate primal NFG in which all factors are equal to their corresponding factors in the primal NFG, except for 
$\psi_e(y_e)$, which is replaced by
\begin{equation}
\label{eqn:Int1}
\xi_e(y_e; a) = \delta(y_e - a),
\end{equation}
as shown in \Fig{fig:Marg2} (left).

The dual of the intermediate primal 
NFG is shown in~\Fig{fig:Marg2} (right), in which, equality indicator factors are replaced by zero-sum indicator factors, 
and $\xi_e(\cdot)$ is replaced by $\tilde \xi_e(\cdot)$, the 1D DFT of $\xi_e(\cdot)$. 
We denote the partition function of the dual intermediate NFG by $Z_{\text{d}}^{\text{I}}(a)$.
From the NFG duality theorem~(\ref{eqn:scaleF}), we obtain
\begin{equation}
\label{eqn:ZDI}
Z_{\text{d}}^{\text{I}}(a) = \alpha(\G)\hspace{0.05mm}S_e(a)
\end{equation}

Similarly, the edge marginal probability at $e \in \EE$ of the dual NFG is given by
\begin{equation}
\label{eqn:MargProbD}
\pi_{\text{d}, e}(a)  = \frac{Z_{\text{d}, e}(a)}{Z_{\text{d}}}, \quad a \in \calA,
\end{equation}
where
\begin{equation}
Z_{\text{d}, e}(a) = \sum_{\tilde \y\colon \tilde y_e = a} \, 
\prod_{e \in \EE} \tilde \psi_e(\tilde y_e) \prod_{v \in \VV}\tilde \phi_v(\tilde x_v(\y))
\label{eqn:MargProbUD}
\end{equation}
The partition function of the dual NFG can be expressed by the following dot product
\begin{align}
\label{eqn:DotproductDual}
Z_{\text{d}} &= \langle  T_e, \tilde \psi_e\rangle \\
                      &= \sum_{a \in \calA} T_e(a)\tilde \psi_e(a) \label{eqn:DotproductDualt2}
\end{align}
with
\begin{equation}
\label{eqn:Dual3}
T_e(a) = \sum_{\tilde \y\colon \tilde y_e = a}\,
             \prod_{\e \in \EE \setminus \{e\}} \tilde \psi_\e(\tilde y_\e) \prod_{v \in \VV}\tilde \phi_v(\tilde x_v(\y))
\end{equation} 

\begin{proposition}
\label{prop:EdgeDFT}
The vectors $(\pi_{\mathrm{p}, e}(a)/\psi_e(a), a\in \calA)$ and 
$(\pi_{\mathrm{d}, e}(a)/\tilde \psi_e(a), a \in \calA)$ are DFT pairs.
\end{proposition}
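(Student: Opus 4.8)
The plan is to express each of the two vectors, up to a global normalizing constant, as an \emph{extrinsic} message vector, and then relate the primal and dual extrinsic messages through the NFG duality theorem applied to the intermediate NFGs of Figure~\ref{fig:Marg2}. First I would rewrite the primal vector: combining (\ref{eqn:MargProbP1}) with (\ref{eqn:MargProbUD2}) gives $\pi_{\text{p}, e}(a)/\psi_e(a) = S_e(a)/Z_\text{p}$, so this vector is just the extrinsic message $S_e(\cdot)$ of (\ref{eqn:MargProbUD3}) rescaled by $1/Z_\text{p}$. Then I would do the same on the dual side: in the definition of $Z_{\text{d},e}(a)$ the value $\tilde y_e = a$ is held fixed, so $\tilde\psi_e(a)$ factors out of the configuration sum and $Z_{\text{d},e}(a) = \tilde\psi_e(a)\,T_e(a)$ with $T_e$ as in (\ref{eqn:Dual3}); hence $\pi_{\text{d}, e}(a)/\tilde\psi_e(a) = T_e(a)/Z_\text{d}$.

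The heart of the argument is to compute $Z_{\text{d}}^{\text{I}}(a)$, the partition function of the dual intermediate NFG (Figure~\ref{fig:Marg2}, right), in two ways. On the one hand, (\ref{eqn:ZDI}) -- which is the NFG duality theorem (\ref{eqn:scaleF}) applied to the intermediate primal/dual pair -- gives $Z_{\text{d}}^{\text{I}}(a) = \alpha(\G)\,S_e(a)$. On the other hand, the dual intermediate NFG differs from the dual NFG only in the factor sitting on edge $e$, which is $\tilde\xi_e(\cdot\,;a)$, the $1$D DFT of $\xi_e(y_e;a) = \delta(y_e - a)$; by (\ref{eqn:FTformula}) this DFT equals $\omega^{a\tilde y_e}_{|\calA|}$. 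Summing over the valid dual configurations and grouping them by the value $\tilde y_e = b$, I obtain $Z_{\text{d}}^{\text{I}}(a) = \sum_{b\in\calA}\omega^{ab}_{|\calA|}\,T_e(b)$, i.e.\ $Z_{\text{d}}^{\text{I}}(\cdot)$ is exactly the DFT of $T_e(\cdot)$. Equating the two expressions yields $\alpha(\G)\,S_e(a) = \sum_{b\in\calA}\omega^{ab}_{|\calA|}\,T_e(b)$.

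Finally I would divide through by $Z_\text{d}$ and use the scaling $Z_\text{d} = \alpha(\G)\,Z_\text{p}$ from (\ref{eqn:scaleF})--(\ref{eqn:scaleFG}): this gives $\pi_{\text{p}, e}(a)/\psi_e(a) = S_e(a)/Z_\text{p} = \frac{1}{\alpha(\G)Z_\text{p}}\sum_{b}\omega^{ab}_{|\calA|}T_e(b) = \sum_{b\in\calA}\omega^{ab}_{|\calA|}\big(\pi_{\text{d}, e}(b)/\tilde\psi_e(b)\big)$, which is precisely the statement that $(\pi_{\text{p}, e}(a)/\psi_e(a), a\in\calA)$ is the DFT of $(\pi_{\text{d}, e}(a)/\tilde\psi_e(a), a\in\calA)$, and hence that the two vectors are DFT pairs.

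The proof is essentially a bookkeeping exercise once this structure is in place, so the ``obstacle'' is mainly a matter of care rather than difficulty. The one point I would verify explicitly is that the NFG duality theorem (\ref{eqn:scaleF}) -- equivalently, the Holant theorem -- genuinely applies to the \emph{intermediate} NFG, whose factor $\xi_e(\cdot\,;a)$ and its dual $\tilde\xi_e(\cdot\,;a)$ are complex-valued rather than nonnegative; this is legitimate because duality holds for arbitrary complex local factors and the intermediate NFG differs from the primal NFG at only a single vertex. A second, purely clerical, point is to keep the sign convention of the DFT in (\ref{eqn:FTformula}) identical for all three transforms ($\tilde\psi_e$, $\tilde\phi_v$, and $\tilde\xi_e$), so that the root of unity $\omega^{ab}_{|\calA|}$ appearing above is the same one throughout and not its conjugate.
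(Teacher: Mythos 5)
Your proposal is correct and follows essentially the same route as the paper's own proof: both exploit the intermediate NFG of Fig.~\ref{fig:Marg2}, evaluate its dual partition function once via the duality relation (\ref{eqn:ZDI}) as $\alpha(\G)S_e(a)$ and once as the dot product of $T_e$ with $\tilde\xi_e$ (the DFT of the Kronecker delta), and then substitute $S_e(a)=Z_\text{p}\,\pi_{\text{p},e}(a)/\psi_e(a)$ and $T_e(a)=\alpha(\G)Z_\text{p}\,\pi_{\text{d},e}(a)/\tilde\psi_e(a)$ using $Z_\text{d}=\alpha(\G)Z_\text{p}$. Your explicit remarks on the validity of duality for complex-valued intermediate factors and on keeping a consistent DFT sign convention are sensible refinements but do not change the argument.
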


\begin{proof}
The partition function of the dual intermediate NFG is equal to the dot product of 
the vectors $( T_e(a), a \in \calA)$ and $(\tilde \xi_e(\tilde y_e; a), a \in \calA)$. 
From (\ref{eqn:ZDI}), we obtain
\begin{align}
\alpha(\G)\hspace{0.05mm}S_e(a) & = \langle  T_e, \tilde \xi_e\rangle \\
              & = \sum_{a'\in \calA} T_e(a')\Big(\sum_{y \in \calA}\delta(y-a)\omega^{ya'}_{|\calA|}\Big)\\
              & = \sum_{a'\in \calA} T_e(a')\hspace{0.07mm}\omega^{aa'}_{|\calA|} \label{eqn:DotproductInterm}
\end{align} 
for $a \in \calA$. We conclude that the vectors $(S_e(a), a \in \calA)$ and $(T_e(a), a \in \calA)$ 
are Fourier pairs, up to scale factor $\alpha(\G)$.



On the other hand (\ref{eqn:MargProbP1}) and (\ref{eqn:MargProbUD2}) give
\begin{equation}
\label{eqn:MargProbInterim2}
S_e(a) = Z_{\text{p}}\hspace{0.07mm} \frac{\pi_{\text{p}, e}(a)}{\psi_e(a)}, \quad a \in \calA,
\end{equation}
and combining (\ref{eqn:MargProbD}) and (\ref{eqn:DotproductDualt2}) yields
\begin{align}
T_e(a) & = Z_{\text{d}}\hspace{0.07mm}\myfracc{\pi_{\text{d}, e}(a)}{\tilde \psi_e(a)}\\
& = \alpha(\G)\hspace{0.07mm}Z_{\text{p}}\myfracc{\pi_{\text{d}, e}(a)}{\tilde \psi_e(a)},
  \quad a \in \calA, \label{eqn:MargProbInterim3}
\end{align}
where (\ref{eqn:MargProbInterim3}) follows from the NFG duality theorem in (\ref{eqn:scaleF}).

Substituting $(S_e(a), a \in \calA)$ in (\ref{eqn:MargProbInterim2}) and 
$(T_e(a), a \in \calA)$ in (\ref{eqn:MargProbInterim3}) into~(\ref{eqn:DotproductInterm}) yields
\begin{equation}
\label{eqn:Proposition1Proof}
\frac{\pi_{\text{p}, e}(a)}{\psi_e(a)} = \sum_{a'\in \calA} \myfracc{\pi_{\text{d}, e}(a')}{\tilde \psi_e(a')}\hspace{0.07mm}\omega^{aa'}_{|\calA|},
\end{equation}
which completes the proof.
\end{proof}

We state without proof that
\begin{proposition}
\label{prop:VertexDFT}
The vectors $(\pi_{\mathrm{p}, v}(a)/\phi_v(a), a \in \calA)$ 
and $(\pi_{\mathrm{d}, v}(a)/\tilde \phi_v(a), a \in \calA)$ are DFT pairs. 
\end{proposition}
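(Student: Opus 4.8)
The plan is to mirror the proof of Proposition~\ref{prop:EdgeDFT} verbatim, replacing edge objects by vertex objects throughout. The key structural fact we need is that a vertex $v \in \VV$ in the primal NFG is, by itself, a cut-set of $\G$ (it separates its pendant factor $\phi_v$ from the rest of the graph, since $\phi_v$ has degree one), so the sum-product rule applies: the partition function factors as the dot product of the intrinsic message $(\phi_v(a), a\in\calA)$ and an extrinsic message $(S_v(a), a\in\calA)$, where
\begin{equation}
\label{eqn:SvDef}
S_v(a) = \sum_{\x\colon x_v = a}\,\prod_{e\in\EE}\psi_e(y_e(\x))\prod_{w\in\VV\setminus\{v\}}\phi_w(x_w),
\end{equation}
and consequently $Z_{\text{p},v}(a) = S_v(a)\phi_v(a)$ and $\pi_{\text{p},v}(a) = Z_{\text{p},v}(a)/Z_{\text{p}}$. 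The same construction in the dual NFG produces $T_v(a)$ with $Z_{\text{d},v}(a) = T_v(a)\tilde\phi_v(a)$ and $\pi_{\text{d},v}(a) = Z_{\text{d},v}(a)/Z_{\text{d}}$.

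First I would introduce the intermediate primal NFG obtained from the primal NFG by replacing the vertex-weighing factor $\phi_v(\cdot)$ with the degree-one indicator factor $\xi_v(x_v;a) = \delta(x_v - a)$; its partition function is exactly $S_v(a)$. Dualizing this intermediate NFG (using the dualization steps of Section~\ref{sec:Dual}: swap $=$ and $+$ nodes, replace each surviving factor by its 1D DFT) replaces $\xi_v$ by its 1D DFT $\tilde\xi_v(\tilde x_v;a) = \sum_{y\in\calA}\delta(y-a)\omega^{y\tilde x_v}_{|\calA|} = \omega^{a\tilde x_v}_{|\calA|}$. By the NFG duality theorem~(\ref{eqn:scaleF}), the partition function $Z^{\mathrm{I}}_{\text{d},v}(a)$ of this dual intermediate NFG equals $\alpha(\G)\hspace{0.05mm}S_v(a)$. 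On the other hand, $Z^{\mathrm{I}}_{\text{d},v}(a)$ is the dot product of $(T_v(a'), a'\in\calA)$ with $(\tilde\xi_v(\tilde x_v;a), \tilde x_v\in\calA)$, which yields
\begin{equation}
\label{eqn:VertexFourierRelation}
\alpha(\G)\hspace{0.05mm}S_v(a) = \sum_{a'\in\calA} T_v(a')\hspace{0.07mm}\omega^{aa'}_{|\calA|},
\end{equation}
so $(S_v(a), a\in\calA)$ and $(T_v(a), a\in\calA)$ are Fourier pairs up to the scale factor $\alpha(\G)$.

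Next I would substitute $S_v(a) = Z_{\text{p}}\,\pi_{\text{p},v}(a)/\phi_v(a)$ and $T_v(a') = Z_{\text{d}}\,\pi_{\text{d},v}(a')/\tilde\phi_v(a') = \alpha(\G)\hspace{0.05mm}Z_{\text{p}}\,\pi_{\text{d},v}(a')/\tilde\phi_v(a')$ into~(\ref{eqn:VertexFourierRelation}); the factors $\alpha(\G)$ and $Z_{\text{p}}$ cancel, leaving
\begin{equation}
\frac{\pi_{\text{p},v}(a)}{\phi_v(a)} = \sum_{a'\in\calA}\myfracc{\pi_{\text{d},v}(a')}{\tilde\phi_v(a')}\hspace{0.07mm}\omega^{aa'}_{|\calA|},
\end{equation}
which is exactly the claimed DFT relation. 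I expect no serious obstacle: the only point requiring a word of care is verifying that replacing a degree-one vertex factor by $\delta(x_v-a)$ keeps the intermediate NFG's topology intact so that the duality theorem~(\ref{eqn:scaleF}) applies with the \emph{same} scale factor $\alpha(\G)$ — this holds because deleting/replacing a degree-one factor does not change $|\EE|$, $|\VV|$, or the first Betti number of $\G$. Everything else is the identical bookkeeping as in Proposition~\ref{prop:EdgeDFT}, which is why the authors omit it.
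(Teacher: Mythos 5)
Your proposal is correct and is exactly the argument the paper intends: Proposition~\ref{prop:VertexDFT} is stated without proof precisely because it follows the proof of Proposition~\ref{prop:EdgeDFT} verbatim with the degree-one vertex factor $\phi_v$ playing the role of $\psi_e$, which is what you do. Your added remark that replacing $\phi_v$ by $\delta(x_v-a)$ leaves the topology, and hence the scale factor $\alpha(\G)$ in~(\ref{eqn:scaleF}), unchanged is the only point needing care, and you handle it correctly.
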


By virtue of Propositions \ref{prop:EdgeDFT} and \ref{prop:VertexDFT}, it is possible to estimate
the edge marginal probabilities in one domain, and then transform them to the other domain all together.
In particular, for the ferromagnetic Ising model in a positive external field we can employ the
subgraphs-world process (as a rapidly mixing Markov chain) to generate configuration in the
dual NFG of the model.

We finish this section with the following important remarks.

\begin{remark} 
The proposed mappings are fully local (i.e., edge-dependent or vertex-dependent). Moreover, the mappings do not depend on
the size and on the topology of~$\G$. 
Indeed, all the relevant information regarding the rest of the graph is already incorporated in the 
estimated edge marginal densities.
\end{remark}

\begin{remark} 
Transforming marginals from one domain to the other requires computing a DFT with
computational complexity $\mathcal{O}(|\calA|^2)$, which can be reduced to $\mathcal{O}(|\calA|\log(|\calA|))$ via
the fast Fourier transform (FFT). 
However, when there is symmetry in the factors, as in the case of the Ising and the Potts models, the complexity can be 
further reduced to $\mathcal{O}(|\calA|)$.
\end{remark}

\begin{remark} 
In binary models, the factors in the dual NFG can in general take negative values, 
and in nonbinary models, the factors can even be complex-valued. 
In such cases a 
valid PMF can no longer be defined in the 
dual domain.The mapping nevertheless remains valid; but for 
marginal functions, instead of marginal densities, 
of a global function with a factorization given by~(\ref{eqn:Pd}). 
\end{remark}

\section{Details of the Mapping for Binary Models}
\label{sec:IsingModelMapping}


In binary models (i.e., when $\calA = \ZZ/2\ZZ$), Proposition \ref{prop:EdgeDFT} provides the following mapping
\begin{equation}
\label{eqn:MapG}
\begin{pmatrix} \, \pi_{\text{p}, e}(0)/\psi_e(0)\,  \\[8pt] \, \pi_{\text{p}, e}(1)/\psi_e(1)\, \end{pmatrix} = 
\begingroup
\renewcommand*{\arraystretch}{1.0}
\begin{pmatrix}      
\, 1 & \phantom{-}1\, \\[6pt] \,1 & - 1\,
\end{pmatrix}
\endgroup
\begin{pmatrix} \, \pi_{\text{d}, e}(0)/\tilde \psi_e(0) \, \\[8pt] \, \pi_{\text{d}, e}(1)/\tilde \psi_e(1)\, \end{pmatrix}
\end{equation}
in matrix-vector format via the two-point DFT matrix,
where 
\begin{equation} 
\label{eqn:EdgeD1}
\tilde \psi_{e}(0) = \psi_{e}(0) + \psi_{e}(1) 
\end{equation} 
and
\begin{equation}
\label{eqn:EdgeD2}      
\tilde \psi_{e}(1) = \psi_{e}(0) - \psi_{e}(1)
\end{equation}

For a general Ising model, substituting factors~(\ref{eqn:IsingPot1}) and~(\ref{eqn:IsingDJ}) in (\ref{eqn:MapG}) yields 
\begin{equation}
\label{eqn:MapDP}
\begin{pmatrix} \, \pi_{\text{p},e}(0)\,  \\[6pt] \,\pi_{\text{p},e}(1)\, \end{pmatrix} = 
\begingroup
\renewcommand*{\arraystretch}{1.9}
\begin{pmatrix}      
\,\dfrac{\textrm{e}^{\beta J_e}}{2\cosh(\beta J_e)} & \hphantom{-}\dfrac{\textrm{e}^{\beta J_e}}{2\sinh(\beta J_e)}\, \\[8pt] 
\,\dfrac{\textrm{e}^{-\beta J_e}}{2\cosh(\beta J_e)} & - {\dfrac{\textrm{e}^{-\beta J_e}}{2\sinh(\beta J_e)}}\,
\end{pmatrix}
\endgroup
\begin{pmatrix} \, \pi_{\text{d},e}(0)\, \\[6pt] \, \pi_{\text{d},e}(1)\, \end{pmatrix} 
\end{equation}
for $\beta J_e \ne 0$, which implies
\begin{equation}
\label{eqn:MapIsingInverse}
\begin{pmatrix} \, \pi_{\text{d},e}(0)\,  \\[6pt] \, \pi_{\text{d},e}(1)\, \end{pmatrix} = 
\begingroup
\renewcommand*{\arraystretch}{1.9}
\begin{pmatrix}      
\,\dfrac{\cosh(\beta J_e)}{\textrm{e}^{\beta J_e}} & \phantom{-}\dfrac{\cosh(\beta J_e)}{\textrm{e}^{-\beta J_e}}\, \\[8pt] \,\dfrac{\sinh(\beta J_e)}{\textrm{e}^{\beta J_e}} & - \dfrac{\sinh(\beta J_e)}{\textrm{e}^{-\beta J_e}}\,
\end{pmatrix}
\endgroup
\begin{pmatrix} \, \pi_{\text{p},e}(0)\,  \\[6pt] \,\pi_{\text{p},e}(1)\, \end{pmatrix}
\end{equation}

As a sanity check, we consider a ferromagnetic and homogeneous Ising model in the low-temperature and high-temperature 
limits. 
In the primal NFG and in the low-temperature limit (i.e., as $\beta J_e \to {+}\infty$), 
the edge-marginal probability $\pi_{\text{p},e}(0) = 1- \pi_{\text{p},e}(1) = 1$, which is in agreement with (\ref{eqn:MapDP}).
In the high-temperature limit (i.e., as $\beta J_e \to 0$), the marginal probability over edge $e$ of the dual NFG is 
$\pi_{\text{d},e}(0) = 1- \pi_{\text{d},e}(1) = 1$. This is in agreement with (\ref{eqn:MapIsingInverse}).
(Recall that $\beta$ is proportional to the inverse temperature.)

\begin{proposition}
\label{prop:2DIsingBounds}
In an arbitrary ferromagnetic Ising model in a nonnegative external field, it holds that
\end{proposition}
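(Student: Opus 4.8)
I read the proposition as a two-sided bound on the primal and dual edge marginals, most naturally $\mathrm{e}^{\beta J_e}/\bigl(2\cosh(\beta J_e)\bigr)\le\pi_{\mathrm{p},e}(0)\le 1$ together with $0\le\pi_{\mathrm{d},e}(1)\le\tanh(\beta J_e)/\bigl(1+\tanh(\beta J_e)\bigr)$ (equivalently $\pi_{\mathrm{p},e}(0)/\psi_e(0)\ge\pi_{\mathrm{p},e}(1)/\psi_e(1)\ge 0$ and $\pi_{\mathrm{d},e}(0)/\tilde\psi_e(0)\ge\pi_{\mathrm{d},e}(1)/\tilde\psi_e(1)\ge 0$); the plan below proves any of these forms. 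The idea is to trace the two ratios of Proposition~\ref{prop:EdgeDFT} back to the extrinsic quantities $S_e(\cdot)$ and $T_e(\cdot)$ from its proof. By~(\ref{eqn:MargProbInterim2}) and~(\ref{eqn:MargProbInterim3}) one has $\pi_{\mathrm{p},e}(a)/\psi_e(a)=S_e(a)/Z_{\mathrm{p}}$ and $\pi_{\mathrm{d},e}(a)/\tilde\psi_e(a)=T_e(a)/Z_{\mathrm{d}}$, and the Fourier identity~(\ref{eqn:DotproductInterm}) specialized to $|\calA|=2$ reads $T_e(0)=\tfrac12\alpha(\G)\bigl(S_e(0)+S_e(1)\bigr)$ and $T_e(1)=\tfrac12\alpha(\G)\bigl(S_e(0)-S_e(1)\bigr)$. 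Since by~(\ref{eqn:MargProbUD3}) both $S_e(0)$ and $S_e(1)$ are sums of products of nonnegative weights, they are $\ge 0$; this alone gives $T_e(0)\ge T_e(1)$ and $T_e(0)\ge 0$, hence $\pi_{\mathrm{d},e}(0)/\tilde\psi_e(0)\ge\pi_{\mathrm{d},e}(1)/\tilde\psi_e(1)$ and all the ``trivial side'' inequalities, including $\pi_{\mathrm{p},e}(0)\le 1$ and $\pi_{\mathrm{d},e}(1)\le\tanh(\beta J_e)/(1+\tanh(\beta J_e))$ after unpacking~(\ref{eqn:IsingDJ}).

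The one substantive input is the inequality $S_e(0)\ge S_e(1)$. I would read $S_e(a)$ as a constant multiple of $\Pr[y_e=a]$ in the Ising model on $(\VV,\EE\setminus\{e\})$ obtained by deleting edge $e$ (equivalently, setting $J_e=0$): this model is again ferromagnetic and still sits in a nonnegative external field, so passing to the bipolar variables $\sigma_v=1-2x_v$ one gets $S_e(0)-S_e(1)=\bigl(S_e(0)+S_e(1)\bigr)\langle\sigma_k\sigma_\ell\rangle\ge 0$ by Griffiths' (GKS) inequality, whose hypotheses --- all couplings $\ge 0$ and a field of a single sign --- are precisely ``ferromagnetic in a nonnegative external field.'' (Alternatively, FKG together with $\langle\sigma_k\rangle,\langle\sigma_\ell\rangle\ge 0$, the latter from monotonicity of the magnetization in the field.) Feeding $S_e(0)\ge S_e(1)\ge 0$ into the identities above yields $\pi_{\mathrm{p},e}(0)/\pi_{\mathrm{p},e}(1)=\bigl(S_e(0)/S_e(1)\bigr)\mathrm{e}^{2\beta J_e}\ge\mathrm{e}^{2\beta J_e}$ via~(\ref{eqn:IsingPot1}), i.e. $\pi_{\mathrm{p},e}(0)\ge\mathrm{e}^{\beta J_e}/(2\cosh(\beta J_e))$, and $T_e(1)\ge 0$, so that $\pi_{\mathrm{d},e}(1)=\tilde\psi_e(1)T_e(1)/Z_{\mathrm{d}}\ge 0$ --- the image of $\pi_{\mathrm{p},e}$ under~(\ref{eqn:MapIsingInverse}) is a genuine PMF. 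Combined with the trivial bounds this gives the full sandwich, and one checks it collapses to $\pi_{\mathrm{p},e}(0)=1$ as $\beta J_e\to\infty$ and $\pi_{\mathrm{d},e}(1)=0$ as $\beta J_e\to 0$, matching the limits computed just above the proposition.

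The only real obstacle is the correlation inequality $S_e(0)\ge S_e(1)$; everything else is bookkeeping with the DFT formulas~(\ref{eqn:IsingPot1}),~(\ref{eqn:IsingDJ}),~(\ref{eqn:EdgeD1})--(\ref{eqn:EdgeD2}). Two points demand care there: first, setting up the edge-deleted model so that $S_e(a)$ is literally (a multiple of) $\Pr[y_e=a]$ in a bona fide ferromagnetic Ising model with a single-sign field (the deletion can disconnect $\G$, which is harmless for GKS); and second, quoting GKS (or FKG plus the sign of the magnetization) with hypotheses that match ``ferromagnetic'' and ``nonnegative external field'' verbatim --- the nonnegativity of the field is essential, since a mixed-sign field can make $\langle\sigma_k\sigma_\ell\rangle<0$ and reverse the bound.
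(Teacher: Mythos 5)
Your reconstruction of the statement is the right one, and your proof is correct, but it takes a genuinely different route from the paper's. The paper works entirely in the dual domain: under the ferromagnetic, nonnegative-field hypothesis all dual factors (\ref{eqn:IsingDJ})--(\ref{eqn:IsingDH}) are nonnegative, so $(\pi_{\text{d},e}(0),\pi_{\text{d},e}(1))$ is a genuine PMF; the mapping (\ref{eqn:MapDP}) then expresses $\pi_{\text{p},e}(0)$ as an affine, decreasing function of $\pi_{\text{d},e}(0)\in[0,1]$, and evaluating at $\pi_{\text{d},e}(0)=1$ gives (\ref{eqn:BoundonPe0}); the bound (\ref{eqn:BoundonPd0}) is the mirror image, starting from (\ref{eqn:MapIsingInverse}) and $\pi_{\text{p},e}(0)\le 1$. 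The one non-trivial input in your write-up --- $S_e(0)\ge S_e(1)$, equivalently $T_e(1)\ge 0$, equivalently $\pi_{\text{d},e}(1)\ge 0$ --- is read off by the paper directly from (\ref{eqn:Dual3}): $T_e(1)$ is a sum of products of nonnegative dual factors, so no correlation inequality is ever invoked. You instead establish the same fact in the primal domain by identifying $S_e(0)-S_e(1)$ with a positive multiple of $\langle\sigma_k\sigma_\ell\rangle$ in the edge-deleted model and quoting Griffiths' first inequality (or FKG). Both are sound; your version makes explicit the pleasant structural fact that the hard half of each bound is exactly a GKS-type correlation inequality while the easy half is bare nonnegativity, whereas the paper's version is self-contained and shows that NFG duality renders that correlation inequality trivial --- the same mechanism that makes the subgraphs--world weights nonnegative. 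Your bookkeeping through $S_e$, $T_e$ and the two-point DFT is an exact unpacking of the paper's matrix manipulation of (\ref{eqn:MapDP}), and your edge cases (vanishing $S_e(1)$, possible disconnection of $\G$ after deleting $e$) are handled correctly.
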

\begin{equation}
\label{eqn:BoundonPe0}
\pi_{\mathrm{p},e}(0) \ge \dfrac{1}{1+ \textrm{e}^{-2\beta J_e}}
\end{equation}
\emph{and}
\begin{equation}
\label{eqn:BoundonPd0}
\pi_{\mathrm{d},e}(0) \ge \dfrac{1+\textrm{e}^{-2\beta J_e}}{2}
\end{equation}

\begin{proof} In the dual domain, the global PMF of a ferromagnetic Ising model 
in a nonnegative field $\pi_{\text{d},e}(\cdot)$ is given by~(\ref{eqn:Pd}). From~(\ref{eqn:MapDP})
\begin{align}
\myfrac{\pi_{\text{p},e}(0)}{\textrm{e}^{\beta J_e}}  
& = \frac{\pi_{\text{d},e}(0)}{2\cosh(\beta J_e)} + \frac{\pi_{\text{d},e}(1)}{2\sinh(\beta J_e)}\\
& = \frac{1}{2\sinh(\beta J_e)} - \frac{\textrm{e}^{-\beta J_e}}{\sinh(2\beta J_e)}\pi_{\text{d},e}(0) \label{eqn:BoundonPeDerive}
\end{align}
Therefore $\pi_{\text{p},e}(0)$ achieves its minimum when $ \pi_{\text{d},e}(0)= 1$. 
After substituting $\pi_{\text{d},e}(0) = 1$ in~(\ref{eqn:BoundonPeDerive}), and after a little algebra, we obtain
\begin{align}
\pi_{\text{p},e}(0) & \ge \frac{\textrm{e}^{\beta J_e}}{2\cosh(\beta J_e)} \\
                             & = \myfrac{1}{1+\textrm{e}^{-2\beta J_e}} \label{eqn:BoundonProofs}
\end{align}

The proof of~(\ref{eqn:BoundonPd0}) follows along the same lines by starting from (\ref{eqn:MapIsingInverse}), and is omitted.
\end{proof}

From (\ref{eqn:BoundonPe0}) and (\ref{eqn:BoundonPd0}), we conclude that
in an arbitrary ferromagnetic Ising model in a nonnegative external field
\begin{equation}
\label{eqn:Uncertainty}
\pi_{\text{p},e}(0)\pi_{\text{d},e}(0) \ge \frac{1}{2},
\end{equation}
which is in the form of an uncertainty principle.

Fig.~\ref{fig:PBound} shows the lower bounds in (\ref{eqn:BoundonPe0}) and in (\ref{eqn:BoundonPd0}) as a 
function of $\beta J_e$. 
Interestingly, the lower bounds 
intersect at $\beta J_\text{c}=  \ln(1+\sqrt{2})/2$, which coincides with the critical coupling 
(i.e., the phase transition) of  the 2D homogeneous Ising model in zero field and in the thermodynamic limit (i.e., as $|\VV| \to \infty$).
 For more details, see~\citep{kramers1941statistics, Onsager:44}.

We stress out that the lower bonds in Proposition~\ref{prop:2DIsingBounds} are valid  for arbitrary ferromagnetic 
Ising models in a nonnegative external field. The bounds do not depend on the size or on 
the topology of $\G$. In Appendix B, we will prove that the edge marginal probability of the 1D Ising model with free boundary 
conditions attains the lower bound in (\ref{eqn:BoundonPe0}).


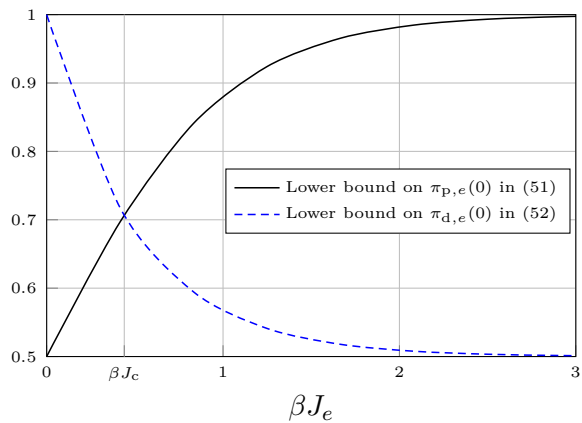
\begin{figure}[t!!]
\centering
\begin{tikzpicture}
\begin{axis}[
			legend style={at = {(0.99,0.55)} ,font=\tiny},		
			height = 37.0ex,
			width = 52ex,
			grid = major,
			tick pos=left, 
			xlabel shift = -2 pt,
			restrict y to domain = 0.5:1, 
			xminorticks = false,	
		    yminorticks = false,	
		    y tick label style={
        /pgf/number format/.cd,
            fixed,
        /tikz/.cd
    		}, 				
			ytick={0.5, 0.6, 0.7, 0.8, 0.9, 1.0},
			xtick={0.0, 1.0, 2.0, 3.0},
			extra x ticks=0.44,
			extra x tick labels={$\beta J_\mathrm{c}$},
		xlabel= $\beta J_e$ ={font=\normalsize},
			xmin = 0.0,
			xmax = 3.0,
			ymin = 0.5,
			ymax = 1.0,
			yticklabel style = {font=\tiny,yshift=0.0ex},
            xticklabel style = {font=\tiny,xshift=0.0ex}			
			]

\addplot[black, line width = 0.58 pt, smooth]    {e^(2*x)/(1+e^(2*x))};
\addplot[blue, line width = 0.58 pt, smooth, densely dashed]    {(1+e^(2*x))/(2*e^(2*x))};

 		 \legend{Lower bound on $\pi_{\text{p},e}(0)$ in~(\ref{eqn:BoundonPe0}), Lower bound on $\pi_{\text{d},e}(0)$ in~(\ref{eqn:BoundonPd0})};	  	
\end{axis}
\end{tikzpicture}
\caption{\label{fig:PBound}%
For a ferromagnetic Ising model in a nonnegative external field,
the solid black plot and the dashed blue plot show the lower bound on $\pi_{\text{p},e}(0)$ given by (\ref{eqn:BoundonPe0}), and 
the lower bound on $\pi_{\text{d},e}(0)$ given by (\ref{eqn:BoundonPd0}), as a function of $\beta J_e$, respectively. 
The lower bounds intersect at  the criticality of the 
2D homogeneous Ising model in the absence of an external field, denoted by $\beta J_\mathrm{c}$.}
\end{figure}

\subsection{The Fixed Points}

Let us denote the fixed points of the mapping in (\ref{eqn:MapG}) by $(\pi^{*}_{e}(a), a\in \calA)$. A routine calculation 
gives 
\begin{equation}
\label{eqn:fixedGeneralBinary}
\begin{pmatrix} \, \pi^{*}_{e}(0)  & \pi^{*}_{e}(1)\, \end{pmatrix} = \\
\begin{pmatrix} \, \dfrac{\psi_{e}(0)\tilde \psi_{e}(0)}{\psi_{e}(0)\tilde \psi_{e}(0)+\vphantom{\tilde \psi^{2^2}}\psi_{e}(1)\tilde \psi_{e}(1)} \, & \,
\dfrac{\psi_{e}(1)\tilde \psi_{e}(1)}{\psi_{e}(0)\tilde \psi_{e}(0)+\vphantom{\tilde \psi^{2^2}}\psi_{e}(1)\tilde \psi_{e}(1)}\, \end{pmatrix}
\end{equation}
For a homogeneous and ferromagnetic Ising model we thus obtain
\begin{equation}
\label{eqn:fixed}
\begin{pmatrix} \, \pi^{*}(0)  & \pi^{*}(1)\, \end{pmatrix} = \\
\begin{pmatrix} \, \dfrac{\textrm{e}^{\beta J}\cosh (\beta J)}{1+\sinh(2\beta J)}\, & \,\dfrac{\textrm{e}^{-\beta J}\sinh(\beta J)}{1+\sinh(2\beta J)}\, \end{pmatrix}
\end{equation}
Fig.~\ref{fig:L1} shows the fixed points $\pi^{*}(\cdot)$ as a function of $\beta J$. 

\begin{proposition}
\label{prop:2DHom}
The minimum of $\pi^{*}(0)$ and the maximum of $\pi^{*}(1)$ are attained at the criticality of the 
2D homogeneous Ising model in the absence of an external magnetic field. 
\end{proposition}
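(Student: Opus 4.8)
The plan is to collapse both assertions into a single one–variable calculus problem. First I would record the identity $\pi^{*}(0)+\pi^{*}(1)=1$ straight from~(\ref{eqn:fixed}): writing $\mathrm{e}^{\beta J}\cosh(\beta J)=\tfrac12(\mathrm{e}^{2\beta J}+1)$ and $\mathrm{e}^{-\beta J}\sinh(\beta J)=\tfrac12(1-\mathrm{e}^{-2\beta J})$, their sum equals $\tfrac12(\mathrm{e}^{2\beta J}-\mathrm{e}^{-2\beta J})+1=1+\sinh(2\beta J)$, which is precisely the common denominator in~(\ref{eqn:fixed}). Hence $\pi^{*}(1)=1-\pi^{*}(0)$, so maximizing $\pi^{*}(1)$ is the same as minimizing $\pi^{*}(0)$, and it suffices to locate the minimizer of $\pi^{*}(0)$ over $\beta J\ge 0$.

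Second, I would substitute $u=\mathrm{e}^{2\beta J}$, a bijection from $\beta J\in[0,\infty)$ onto $u\in[1,\infty)$. Using the identities above, $\pi^{*}(0)=\dfrac{\mathrm{e}^{2\beta J}+1}{2+\mathrm{e}^{2\beta J}-\mathrm{e}^{-2\beta J}}$ turns into the rational function
\[
f(u)=\frac{u^{2}+u}{u^{2}+2u-1},\qquad u\ge 1,
\]
whose denominator is strictly positive on this range (indeed already for $u>\sqrt2-1$), so $f$ is smooth there.

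Third, I would differentiate. A short computation gives
\[
f'(u)=\frac{u^{2}-2u-1}{(u^{2}+2u-1)^{2}},
\]
so the only critical point in $(0,\infty)$ is the positive root of $u^{2}-2u-1=0$, namely $u=1+\sqrt2$. Since the numerator $u^{2}-2u-1$ is negative for $u\in(1-\sqrt2,\,1+\sqrt2)$ and positive for $u>1+\sqrt2$, the function $f$ decreases strictly and then increases strictly, so $u=1+\sqrt2$ is its unique global minimizer on $[1,\infty)$. Translating back, $\mathrm{e}^{2\beta J}=1+\sqrt2$ yields $\beta J=\tfrac12\ln(1+\sqrt2)=\beta J_{\mathrm c}$, the critical coupling of the 2D homogeneous Ising model in zero field. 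Since $\pi^{*}(1)=1-\pi^{*}(0)$, the maximum of $\pi^{*}(1)$ is attained at the same $\beta J_{\mathrm c}$, which finishes the proof.

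There is no real obstacle here; the argument is a routine optimization. The only points that deserve a line of care are (i) stating $\pi^{*}(0)+\pi^{*}(1)=1$ so the two claims reduce to one, and (ii) the bookkeeping in simplifying $f'(u)$ together with a sign check confirming that $u=1+\sqrt2$ is a minimum and that no competing boundary extremum exists — indeed $f(1)=1$, $f(u)\to 1$ as $u\to\infty$, whereas $f(1+\sqrt2)=\tfrac{2+\sqrt2}{4}<1$, so the interior critical point is genuinely where the minimum of $\pi^{*}(0)$ (equivalently the maximum of $\pi^{*}(1)$) occurs.
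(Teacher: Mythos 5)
Your proof is correct, and every computation checks out: $\pi^{*}(0)+\pi^{*}(1)=1$, the substitution $u=\mathrm{e}^{2\beta J}$ gives $f(u)=(u^{2}+u)/(u^{2}+2u-1)$, the derivative has numerator $u^{2}-2u-1$, and the unique minimizer on $[1,\infty)$ is $u=1+\sqrt{2}$, i.e.\ $\beta J=\tfrac12\ln(1+\sqrt{2})=\beta J_{\mathrm c}$, with $f(1+\sqrt2)=(2+\sqrt2)/4$. However, this is a genuinely different route from the paper. The paper does not optimize (\ref{eqn:fixed}) directly; it writes the average energy per site of the 2D model with periodic boundaries as $-2J(1-2\pi_{\text{p},e}(1))$, evaluates Onsager's exact internal energy at the critical coupling, $U(\beta J_{\mathrm c})=-\sqrt2\,J_{\mathrm c}$, deduces that in the thermodynamic limit the true edge marginal at criticality is $\bigl((2+\sqrt2)/4,\,(2-\sqrt2)/4\bigr)$, and then observes that these values coincide with the extrema of (\ref{eqn:fixed}). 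Your argument buys self-containedness and rigor on the optimization side: it explicitly locates the extremum of the fixed-point curve (a step the paper leaves as an assertion, ``which coincides with the minimum\ldots''), at the price of importing the known value $\beta J_{\mathrm c}=\ln(1+\sqrt2)/2$ from Kramers--Wannier/Onsager to identify the minimizer with criticality. The paper's route buys extra physical content that your calculus does not deliver: it shows that at the phase transition the fixed-point value is not merely the extremum of a formula but equals the actual edge marginal of the 2D homogeneous Ising model in the thermodynamic limit (so the primal and dual edge marginals agree there), which is the interpretation emphasized after the proposition; on the other hand it leans on Onsager's closed-form solution. Combining the two—your elementary optimization plus the paper's Onsager identification—would give the most complete version of the result.
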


\begin{proof} 
From (\ref{eqn:HamiltonianIsing}), the Hamiltonian of the model is 
\begin{align}
\label{eqn:HamP}
\mathcal{H}(\y)  & = -\sum_{e \in \EE}J\big(2\delta(y_e) -1\big)\\
						  & = -J\sum_{e \in \EE}(1-2y_e)
\end{align}
Consequently, the average energy can be computed as 
\begin{align}
\label{eqn:aveE}
\overline{\mathcal{H}}(\y) & = \sum_{\y \in \calA}\pi_\text{p}(\y)\mathcal{H}(\y)\\ 
	& = -J|\EE|(1-2\E[Y_e])\\
	& = -J|\EE|(1-2\pi_{\text{p},e}(1))
\end{align}
In a 2D Ising model with periodic boundaries $|\EE| = 2|\VV|$, therefore the average energy
per site is equal to 
\begin{equation}
\label{eqn:AveEper}
\overline{\mathcal{H}}(\y)/|\VV|  = -2J(1-2\pi_{\text{p},e}(1))
\end{equation}

From Onsager's closed-form solution, in the thermodynamic limit, the logarithm of the partition function (i.e., the free 
energy) per site of the 2D homogeneous Ising model in zero field is
given by
\begin{equation}
\label{eqn:ZOnsg}
\lim_{|\VV| \to \infty} \frac{\ln Z(\beta J)}{|\VV|} = \frac{1}{2}\ln(2\cosh^{2}2\beta J)  +
\frac{1}{\pi}\int_{0}^{\frac{\pi}{2}}\ln\Big(1+ (1-\kappa^2\sin^2\theta)^{1/2}\,\Big)\,d\theta
\end{equation}
and the average (internal) energy per site $U(\beta J)$ is
\begin{align}
\label{eqn:IntEPerSite}
U(\beta J) & = {-}\lim_{|\VV| \to \infty}\frac{1}{|\VV|}\cdot\frac{\partial \ln Z(\beta J)}{\partial \beta} \\
& = -J\coth(2\beta J)\Big(1-\frac{1}{2\pi}(1-\kappa\sinh2\beta J) \int_{0}^{\frac{\pi}{2}}
 (1-\kappa^2\sin^2\theta)^{-1/2} d\theta\Big)
\end{align}
with
\begin{equation}
\label{eqn:Cons}
\kappa(\beta J) = \myfrac{2\sinh2\beta J}{\cosh^{2}2\beta J}
\end{equation}
For more details, see~\citep{Onsager:44} and \citep[Chapter 7]{Baxter:07}.

At criticality (i.e., at $\beta J_\text{c} =  \ln(1+\sqrt{2})/2$) it is easy to show that $\kappa(\beta J_\text{c}) =1$ and the average energy 
per site $U(\beta J_\text{c})$ is 
\begin{equation}
\label{eqn:IntEPT}
U(\beta J_\text{c}) = -\sqrt{2}J_\text{c}
\end{equation}

From (\ref{eqn:AveEper}) and (\ref{eqn:IntEPT}), we 
conclude that  in the thermodynamic limit
\begin{equation}
\label{eqn:fixedC}
\left .\begin{pmatrix} \, \pi(0)  & \pi(1)\, \end{pmatrix}\right\vert_{\beta J = \beta J_\text{c}}  =
\begin{pmatrix} \, (2+\sqrt{2})/4\, & \, (2-\sqrt{2})/4\, \end{pmatrix}, 
\end{equation}
which coincides with the minimum of $\pi^{*}(0)$ and the maximum
of $\pi^{*}(1)$ in~(\ref{eqn:fixed}).
\end{proof}

The fixed points at $\beta J_\text{c}$
are illustrated by filled circles in Fig.~\ref{fig:L1}.
We emphasize that, in the thermodynamic limit and in the absence of an external field, the edge marginal probabilities in the primal and in the dual of 
the 2D homogeneous Ising model are equal at criticality (i.e., at the phase transition).

As side remark, we point out that the 2D Ising model and its criticality have been studies in the context of generative neural networks and restricted 
Boltzmann machines, see~\citep{melko2018} and \citep{cossu2019machine}.

\begin{figure}[t!!]
\centering
\begin{tikzpicture}
\begin{axis}[
			legend style={at = {(0.98,0.73)} ,font=\tiny},		
			height = 37.0ex,
			width = 52ex,
			grid = major,
			tick pos=left, 
			xlabel shift = -2 pt,
			xminorticks = false,	
		    yminorticks = false,	
		    y tick label style={
        /pgf/number format/.cd,
            fixed,
        /tikz/.cd
    		}, 				
			ytick={0, 0.2, 0.4, 0.6, 0.8, 1.0},
			xtick={0.0, 1.0, 2.0, 3.0},
			extra x ticks=0.44,
			extra x tick labels={$\beta J_\mathrm{c}$},
		xlabel= $\beta J$ ={font=\normalsize},
			xmin = 0.0,
			xmax = 3.0,
			ymin = 0.0,
			ymax = 1.0,
			yticklabel style = {font=\tiny,yshift=0.0ex},
            xticklabel style = {font=\tiny,xshift=0.0ex}			
			]

\pgfplotstableread{./files/Pfixed.txt}\mydataone
\pgfplotstableread{./files/Pfixed2.txt}\mydatatwo

		\addplot [
		line width = 0.58 pt,
		 color = black
		]		
		 table[y = Z] from \mydataone;
		 
 		 \addplot [
 		 densely dashed,
 		 line width = 0.58 pt,
 		 color = blue
 		 ]
 		  table[y = Z] from \mydatatwo;	 
 
		\addplot[mark=*] coordinates {(0.44,0.853554)};
		\addplot[color = blue, mark=*] coordinates {(0.44,0.146446)};


 		 \legend{$\pi^{*}(0)$ in~(\ref{eqn:fixed}), $\pi^{*}(1)$ in~(\ref{eqn:fixed}), Min attained at $\beta J_\mathrm{c}$, 
 		 Max attained at $\beta J_\mathrm{c}$};	  	
\end{axis}
\end{tikzpicture}
\caption{\label{fig:L1}%
The fixed points~(\ref{eqn:fixed}) as a function of 
$\beta J$. 
The filled circles show the fixed points at the criticality of the 2D homogeneous Ising model in zero field as in (\ref{eqn:fixedC}).}
\end{figure}
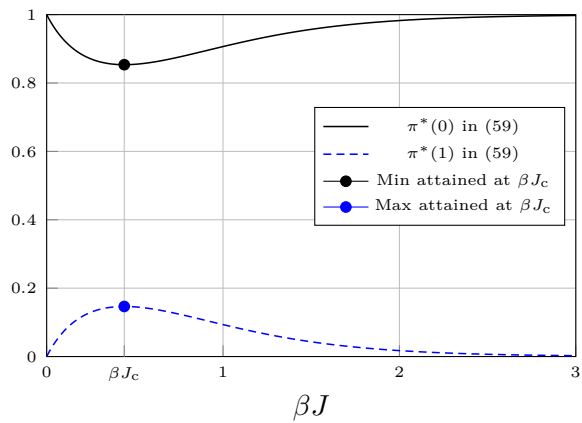

\subsection{The Local Magnetization}
\label{sec:magnet}

In a ferromagnetic Ising model, let 
\begin{equation}
\label{eqn:DeltaE}
\Delta_{\text{p},e} = \pi_{\text{p},e}(0) - \pi_{\text{p},e}(1)
\end{equation}
and 
\begin{equation}
\Delta_{\text{d},e} = \pi_{\text{d},e}(0) - \pi_{\text{d},e}(1)
\end{equation}
denote the local magnetizations at edge $e \in \EE$ of the primal NFG and the dual NFG, respectively.
Clearly, it is possible to compute $(\pi_{\text{p},e}(a), a\in \calA)$ from $\Delta_{\text{p},e}$ and 
$(\pi_{\text{d}, e}(a), a\in \calA)$
from $\Delta_{\text{d},e}$. Furthermore, from the mappings 
in (\ref{eqn:MapDP}) and (\ref{eqn:MapIsingInverse}) we get
\begin{equation}
\label{eqn:PeDeltaD}
\pi_{\text{p},e}(0) = \frac{\textrm{e}^{2\beta J_e} - \Delta_{\text{d},e}}{2\sinh(2\beta J_e)} 
\end{equation}
and
\begin{equation}
\pi_{\text{d},e}(0) = \frac{\coth(\beta J_e) - \Delta_{\text{p},e}}{2\csch(2\beta J_e)},
\end{equation}
where $\csch(\cdot) = 1/\sinh(\cdot)$.

From (\ref{eqn:DeltaE}) and (\ref{eqn:PeDeltaD}) it is easy to verify that
\begin{equation}
\Delta_{\text{p},e} = \frac{\cosh(2\beta J_e) - \Delta_{\text{d},e}}{\sinh(2\beta J_e)},
\end{equation}
which relates the local magnetizations in the primal and in the dual domains.
The local magnetizations $\Delta_{\text{p},e}$ and $\Delta_{\text{d},e}$ are 
both equal to $\sqrt{2}/2$ at $\beta J_\text{c}$ (i.e., at the criticality of the 
2D homogeneous Ising model).

\section{Details of the Mapping for Non-binary Models}
\label{sec:GenP}

In non-binary models (e.g., the Potts and the clock models) the mapping 
between the vecotors $(\pi_{\text{p}, e}(a)/\psi_{e}(a), a \in \calA)$
and $(\pi_{\text{d}, e}(a)/\tilde \psi_{e}(a), a \in \calA)$ is given by
\begin{equation} 
\label{eqn:mappingPottsvector}
\begin{pmatrix} \, \pi_{\text{p}, e}(a)/\psi_{e}(a), a \in \calA \,\end{pmatrix} = 
\pmb{W}_{|\calA| \times |\calA|}
\begin{pmatrix} \, \pi_{\text{d}, e}(a)/\tilde \psi_{e}(a), a \in \calA \,\end{pmatrix},
\end{equation}
where  $(\pi_{\text{p}, e}(a)/\psi_{e}(a), a \in \calA)$
and $(\pi_{\text{d}, e}(a)/\tilde \psi_{e}(a), a \in \calA)$ are column vectors of length $|\calA|$, and
$\pmb{W}_{|\calA| \times |\calA|}$ is the $|\calA|$-point DFT matrix (i.e., the Vandermonde matrix for 
the roots of unity), in which
\begin{equation} 
\label{eqn:Vandermond}
\pmb{W}_{k,\ell} = \omega^{k\ell}_{|\calA|}, \quad k, \ell \in \calA,
\end{equation} 
where $\omega_{|\calA|} = \mathrm{e}^{-2\pi \mathrm{i}/|\calA|}$. 

E.g., in a three-state Potts model the mapping boils down to
\begin{equation}
\label{eqn:MapGP}
\begin{pmatrix} \, \pi_{\text{p}, e}(a)/\psi_e(0)\,  \\[8pt] \, \pi_{\text{p}, e}(a)/\psi_e(1)\,  \\[8pt] \, \pi_{\text{p}, e}(a)/\psi_e(2)\,\end{pmatrix} = 
\begingroup
\renewcommand*{\arraystretch}{1.0}
\begin{pmatrix}      
\, 1 & 1\, & 1\, \\[6pt] \,1 & \omega \, & \omega^2\, \\[6pt]  \,1 & \omega^2 \, & \omega^4\,
\end{pmatrix}
\endgroup
\begin{pmatrix} \, \pi_{\text{d}, e}(0)/\tilde \psi_e(0) \, \\[8pt] \, \pi_{\text{d}, e}(1)/\tilde \psi_e(1)\, \\[8pt] \, \pi_{\text{d}, e}(2)/\tilde \psi_e(2)\,\end{pmatrix} 
\end{equation}
for $\beta J_e \ne 0$, which implies
\begin{equation}
\begin{pmatrix} \, \pi_{\text{d}, e}(0)/\tilde \psi_e(0) \, \\[8pt] \, \pi_{\text{d}, e}(1)/\tilde \psi_e(1)\, \\[8pt] \, \pi_{\text{d}, e}(2)/\tilde \psi_e(2)\,\end{pmatrix} =
\frac{1}{3}
\begingroup
\renewcommand*{\arraystretch}{1.0}
\begin{pmatrix}      
\, 1 & 1\, & 1\, \\[6pt] \,1 & \omega^{-1} \, & \omega^{-2}\, \\[6pt]  \,1 & \omega^{-2} \, & \omega^{-4}\,
\end{pmatrix}
\endgroup
\begin{pmatrix} \, \pi_{\text{p}, e}(a)/\psi_e(0)\,  \\[8pt] \, \pi_{\text{p}, e}(a)/\psi_e(1)\,  \\[8pt] \, \pi_{\text{p}, e}(a)/\psi_e(2)\,\end{pmatrix} 
\end{equation}
with $\omega = \mathrm{e}^{-2\pi \mathrm{i}/3}$.

Due to symmetry in factors~(\ref{eqn:PottsPot1}) in the primal NFG and factors~(\ref{eqn:PottsDJ}) in the dual NFG of the 
$q$-state Potts model, it holds that
\begin{equation} 
\label{eqn:PottsSymPrimal}
\frac{\pi_{\text{p}, e}(1)}{\psi_e(1)} = \frac{\pi_{\text{p}, e}(2)}{\psi_e(2)} = \cdots = \frac{\pi_{\text{p}, e}(q-1)}{\psi_e(q-1)}
\end{equation}
and
\begin{equation} 
\label{eqn:PottsSymDual}
\myfracc{\pi_{\text{d}, e}(1)}{\tilde \psi_e(1)} = \myfracc{\pi_{\text{d}, e}(2)}{\tilde \psi_e(2)} = \cdots = \myfracc{\pi_{\text{d}, e}(q-1)}{\tilde \psi_e(q-1)}
\end{equation}
Hence
\begin{equation} 
\label{eqn:PottsMapping0}
\frac{\pi_{\text{p}, e}(0)}{\psi_e(0)} = \sum_{a \in \calA} \myfracc{\pi_{\text{d}, e}(a)}{\tilde \psi_e(a)}
\end{equation}
and for $t \in \{1,2,\ldots, q-1\}$
\begin{align}
\label{eqn:PottsMappingt}
\frac{\pi_{\text{p}, e}(t)}{\psi_e(t)} & = \sum_{a \in \calA} \myfracc{\pi_{\text{d}, e}(a)}{\tilde \psi_e(a)}\hspace{0.07mm}\omega^{at}_{|\calA|} \\
& = \myfracc{\pi_{\text{d}, e}(0)}{\tilde \psi_e(0)} - \myfracc{\pi_{\text{d}, e}(1)}{\tilde \psi_e(1)}
\end{align}
which are real-valued.

\begin{figure}
\centering
\begin{tikzpicture}
\begin{axis}[
			legend style={at = {(0.98,0.555)} ,font=\tiny},		
			height = 37.0ex,
			width = 52.0ex,
			grid = major,
			tick pos=left, 
			xlabel shift = -2 pt,
			xminorticks = false,	
		    yminorticks = false,	
		    y tick label style={
        /pgf/number format/.cd,
            fixed,
        /tikz/.cd
    		}, 				
			ytick={0.5, 0.6, 0.7, 0.8, 0.9, 1.0},
			xtick={0, 2, 4, 6, 8, 10},
		xlabel= $\beta J$ ={font=\normalsize},
			xmin = 0.0,
			xmax = 10.0,
			ymin = 0.5,
			ymax = 1.0,
			yticklabel style = {font=\tiny,yshift=0.0ex},
            xticklabel style = {font=\tiny,xshift=0.0ex}			
			]

\pgfplotstableread{./files/PPotts3zero.txt}\mydataone
\pgfplotstableread{./files/PPotts4zero.txt}\mydatatwo
\pgfplotstableread{./files/PPotts5zero.txt}\mydatathree
\pgfplotstableread{./files/PPotts10zero.txt}\mydataeight
\pgfplotstableread{./files/PPotts100zero.txt}\mydatathirteen
		
		\addplot [
		line width = 0.58 pt,
		 color = black
		]		
		 table[y = Z] from \mydataone;
		 
		 \addplot [
		 line width = 0.58 pt,
		 color = blue
		 ]
		  table[y = Z] from \mydatatwo;	 
		  
		  \addplot [
		 line width = 0.58 pt,
		 color = red
		 ]
		  table[y = Z] from \mydatathree;	 

%
%
%
		
			  	  \addplot [
		 line width = 0.58 pt,
		 color = chocolate1
		 ]
		  table[y = Z] from \mydataeight;	 		  
%
%
%
%
		  
		  	  	  \addplot [
		 line width = 0.58 pt,
		 color = chocolate2
		 ]
		  table[y = Z] from \mydatathirteen;	 		  	  
		    
		\addplot[mark=*] coordinates {(1.005,0.7887)};
		\addplot[mark=*] coordinates {(1.099,0.75)};
		\addplot[mark=*] coordinates {(1.174,0.7236)};
		\addplot[mark=*] coordinates {(1.426,0.658)};
		\addplot[mark=*] coordinates {(2.398,0.55)};
		
		\node at (8.15, 0.83)   {\small{$\pi^{*}(0)$ in (\ref{eqn:fixedPottsFerzero})}};
%



 \legend{$q=3$, $q=4$, $q=5$, $q=10$, $q=100$};


\end{axis}
\end{tikzpicture}
\caption{\label{fig:L2}%
The fixed points~(\ref{eqn:fixedPottsFerzero}) as a function of 
$\beta J$ for different values of $q$. 
The filled circles show the fixed points at the criticality of the 2D homogeneous Potts model in zero field
located at $\beta J_\mathrm{c} = \ln(1+\sqrt{q})$.}
\end{figure}


\begin{figure}[t]
\centering
\begin{tikzpicture}
\begin{axis}[
			legend style={at = {(0.98,0.78)} ,font=\tiny},		
			height = 37.0ex,
			width = 52.0ex,
			grid = major,
			tick pos=left, 
			xlabel shift = -2 pt,
			xminorticks = false,	
		    yminorticks = false,	
		    y tick label style={
        /pgf/number format/.cd,
            fixed,
        /tikz/.cd
    		}, 				
			ytick={0, 0.02, 0.04, 0.06, 0.08, 0.1, 0.12},
		xlabel= $\beta J$ ={font=\normalsize},
			xmin = 0.0,
			xmax = 10.0,
			ymin = 0.0,
			ymax = 0.12,
			yticklabel style = {font=\tiny,yshift=0.0ex},
            xticklabel style = {font=\tiny,xshift=0.0ex}			
			]

\pgfplotstableread{./files/PPotts3one.txt}\mydataone
\pgfplotstableread{./files/PPotts4one.txt}\mydatatwo
\pgfplotstableread{./files/PPotts5one.txt}\mydatathree
\pgfplotstableread{./files/PPotts10one.txt}\mydataeight
\pgfplotstableread{./files/PPotts100one.txt}\mydatathirteen
		
		\addplot [
		line width = 0.58 pt,
		 color = black
		]		
		 table[y = Z] from \mydataone;
		 
		 \addplot [
		 line width = 0.58 pt,
		 color = blue
		 ]
		  table[y = Z] from \mydatatwo;	 
		  
		  \addplot [
		 line width = 0.58 pt,
		 color = red
		 ]
		  table[y = Z] from \mydatathree;	 

%
%
%
		
			  	  \addplot [
		 line width = 0.58 pt,
		 color = chocolate1
		 ]
		  table[y = Z] from \mydataeight;	 		  
%
%
%
%
		  
		  	  	  \addplot [
		 line width = 0.58 pt,
		 color = chocolate2
		 ]
		  table[y = Z] from \mydatathirteen;

		\addplot[mark=*] coordinates {(1.005,0.10565)};
		\addplot[mark=*] coordinates {(1.099,0.08333)};
		\addplot[mark=*] coordinates {(1.174,0.0691)};
		\addplot[mark=*] coordinates {(1.426,0.038)};
		\addplot[mark=*] coordinates {(2.398,0.004545)};

		\node at (8.15, 0.105)   {\small{$\pi^{*}(1)$ in (\ref{eqn:fixedPottsFerone})}};

  \legend{$q=3$, $q=4$, $q=5$, $q=10$, $q=100$};


\end{axis}
\end{tikzpicture}
\caption{\label{fig:L3}%
The fixed points~(\ref{eqn:fixedPottsFerone}) as a function of 
$\beta J$ for different values of $q$. 
The filled circles show the fixed points at the criticality of the 2D homogeneous Potts model in zero field
located at $\beta J_\mathrm{c} = \ln(1+\sqrt{q})$.}
\end{figure}

A straightforward generalization of (\ref{eqn:fixedGeneralBinary}) gives the fixed points $(\pi^{*}_{e}(a), a\in \calA)$ of 
the mapping in (\ref{eqn:mappingPottsvector}) as
\begin{equation}
\label{eqn:noname}
\pi^{*}_{e}(a) = \dfrac{ \psi_e(a)\tilde \psi_e(a)}{S}, \quad a \in \calA
\end{equation}
where $S = \sum_{a \in \calA} \psi_e(a)\tilde \psi_e(a)$.

For a homogeneous and ferromagnetic Potts model, the fixed points are given by
\begin{equation}
\label{eqn:fixedPottsFerzero}
\pi^{*}(0) = \dfrac{\textrm{e}^{\beta J}(\textrm{e}^{\beta J} -1 + q)}{\vphantom{2^{2^2}}\textrm{e}^{2\beta J}- 2(1-q)\textrm{e}^{\beta J} + 1 -q}
\end{equation}
and
\begin{equation}
\label{eqn:fixedPottsFerone}
\pi^{*}(t) = \dfrac{\textrm{e}^{\beta J}-1}{\vphantom{2^{2^2}}\textrm{e}^{2\beta J} - 2(1- q)\textrm{e}^{\beta J}+1-q}
\end{equation}
for $t \in \{1, 2, \ldots, q-1\}$. 

Fig.~\ref{fig:L2} shows the fixed point~(\ref{eqn:fixedPottsFerzero}) as a function of $\beta J$ for different values of $q$. 
As in the case of the Ising model, the minimum of $\pi^{*}(0)$ 
is attained at the criticality of the 2D homogeneous
Potts model without an external field, which is 
located at $\beta J_\textrm{c} = \ln(1+\sqrt{q})$.
The fixed points in (\ref{eqn:fixedPottsFerone}) are plotted in Fig.~\ref{fig:L3}. We observe that
the maximum of $(\pi^{*}(t), t \in \{1, 2, \ldots, q-1\})$ are also attained at the criticality.
 For more details on the phase transition of the 2D Potts model, see~\citep{Potts:52, wu1982potts}. 

A closed-form solution for the partition function of the 2D Potts model is not available. However, 
from (\ref{eqn:fixedPottsFerzero}) and (\ref{eqn:fixedPottsFerone}) we can still obtain the values of the fixed points at 
criticality, which are given by
\begin{equation}
\pi^{*}(0) = \frac{1}{2}(1+\frac{1}{\sqrt{q}})
\end{equation}
and
\begin{equation}
\pi^{*}(t) = \frac{1}{2(q-1)}(1-\frac{1}{\sqrt{q}}),
\end{equation}
for $t \in \{1,2, \ldots, q-1\}$. The filled circles in
Figs.~\ref{fig:L2} and~\ref{fig:L3} show the fixed points at criticality of the 2D homogeneous Potts model for different values of $q$.

In summary
\begin{multline}
\label{eqn:fixedpointsPottsAll}
\left . \begin{pmatrix} \, \pi^{*}(0)  & \pi^{*}(1) & \pi^{*}(2) & \cdots & \pi^{*}(q-1)\, \end{pmatrix}\right\vert_{\beta J = \beta J_\text{c}} = \\
\frac{1}{2\sqrt{q}}\begin{pmatrix} \, 1+\sqrt{q} & \dfrac{1}{1+\sqrt{q}} & \dfrac{1}{1+\sqrt{q}} & \cdots & \dfrac{1}{1+\sqrt{q}} \, \end{pmatrix}
\end{multline}


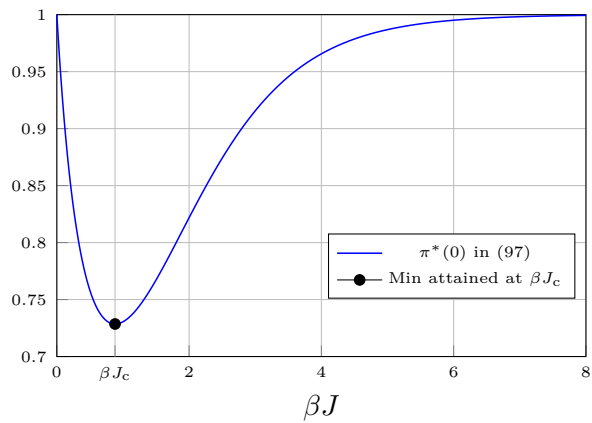
\begin{figure}
\centering
\begin{tikzpicture}
\begin{axis}[
			legend style={at = {(0.98,0.36)} ,font=\tiny},		
			height = 37.0ex,
			width = 52.0ex,
			grid = major,
			tick pos=left, 
			xlabel shift = -2 pt,
			xminorticks = false,	
		    yminorticks = false,	
		    y tick label style={
        /pgf/number format/.cd,
            fixed,
        /tikz/.cd
    		}, 				
			ytick={0.7, 0.75,  0.8, 0.85, 0.9, 0.95, 1.0},
			xtick={0, 2, 4, 6, 8},
			extra x ticks=0.88,
			extra x tick labels={$\beta J_\mathrm{c}$},
		xlabel= $\beta J$ ={font=\normalsize},
			xmin = 0.0,
			xmax = 8.0,
			ymin = 0.7,
			ymax = 1.0,
			yticklabel style = {font=\tiny,yshift=0.0ex},
            xticklabel style = {font=\tiny,xshift=0.0ex}			
			]

\pgfplotstableread{./files/ClockZero.txt}\mydataone

		\addplot [
		line width = 0.58 pt,
		 color = blue
		]		
		 table[y = Z] from \mydataone;
		 
%

		\addplot[mark=*] coordinates {(0.88,0.72855)};
		
		 \legend{$\pi^{*}(0)$ in~(\ref{eqn:fixedClock4}),
 		 Min attained at $\beta J_\mathrm{c}$};

\end{axis}
\end{tikzpicture}
\vspace{-2.0ex}
\caption{\label{fig:LClock1}%
The fixed points of $\pi^{*}(0)$ in~(\ref{eqn:fixedClock4}) as a function of 
$\beta J$. 
The filled circle shows the fixed point at the criticality of the 2D homogeneous four-state clock model, which is
located at $\beta J_\mathrm{c} = \ln(1+\sqrt{2})$.}
\end{figure}
In the many-component limit (i.e., as $q \to \infty$), we obtain
\begin{equation}
\lim_{q \to \infty} \pi^{*}(0) = \frac{1}{2}
\end{equation}

We state without proof that
\begin{proposition}
\label{prop:2DPottsBoundsProp}
In an arbitrary ferromagnetic Potts model in a nonnegative external field, it holds that
\end{proposition}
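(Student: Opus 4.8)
The plan is to follow the same two-step template as in the proof of Proposition~\ref{prop:2DIsingBounds}, with the two-point DFT replaced by the $q$-state Potts mapping recorded in~(\ref{eqn:PottsMapping0})--(\ref{eqn:PottsMappingt}). Throughout I would use the explicit primal factors $\psi_e(0)=\mathrm{e}^{\beta J_e}$ and $\psi_e(t)=1$ for $t\neq 0$ from~(\ref{eqn:PottsPot1}), the dual factors $\tilde\psi_e(0)=\mathrm{e}^{\beta J_e}-1+q$ and $\tilde\psi_e(t)=\mathrm{e}^{\beta J_e}-1$ for $t\neq 0$ from~(\ref{eqn:PottsDJ}), and the symmetry relations~(\ref{eqn:PottsSymPrimal})--(\ref{eqn:PottsSymDual}); together with $\sum_{a\in\calA}\pi_{\text{p},e}(a)=\sum_{a\in\calA}\pi_{\text{d},e}(a)=1$ these collapse each vector to a single scalar unknown, $\pi_{\text{p},e}(0)$ or $\pi_{\text{d},e}(0)$.

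First I would prove the primal bound. Substituting the factors above into~(\ref{eqn:PottsMapping0}) and using $\sum_{a\neq 0}\pi_{\text{d},e}(a)=1-\pi_{\text{d},e}(0)$ gives, after a short computation,
\begin{equation}
\pi_{\text{p},e}(0)=\frac{\mathrm{e}^{\beta J_e}}{\mathrm{e}^{\beta J_e}-1}\left(1-\frac{q\,\pi_{\text{d},e}(0)}{\mathrm{e}^{\beta J_e}-1+q}\right),
\end{equation}
which, for $\beta J_e>0$, is an affine and strictly decreasing function of $\pi_{\text{d},e}(0)$. Since $\pi_{\text{d},e}(\cdot)$ is a PMF we have $\pi_{\text{d},e}(0)\le 1$, so the right-hand side is minimized at $\pi_{\text{d},e}(0)=1$, yielding $\pi_{\text{p},e}(0)\ge \mathrm{e}^{\beta J_e}/(\mathrm{e}^{\beta J_e}-1+q)$; the degenerate case $\beta J_e=0$ then follows by continuity of the marginals in the coupling on a fixed finite graph. (For $q=2$ this matches~(\ref{eqn:BoundonPe0}) after the factor-of-two rescaling of the coupling that reflects the different normalization of the two models' edge factors.)

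For the dual bound I would run the argument in reverse through Proposition~\ref{prop:EdgeDFT}: since $(\pi_{\text{p},e}(a)/\psi_e(a))$ and $(\pi_{\text{d},e}(a)/\tilde\psi_e(a))$ are DFT pairs, the zeroth dual entry is $1/q$ times the sum of the primal entries, so $\pi_{\text{d},e}(0)=\tfrac{1}{q}(\mathrm{e}^{\beta J_e}-1+q)\,(1-(1-\mathrm{e}^{-\beta J_e})\pi_{\text{p},e}(0))$. This is decreasing in $\pi_{\text{p},e}(0)\in[0,1]$ precisely because $1-\mathrm{e}^{-\beta J_e}\ge 0$ under the ferromagnetic hypothesis $\beta J_e\ge 0$, so evaluating at $\pi_{\text{p},e}(0)=1$ gives $\pi_{\text{d},e}(0)\ge (1+(q-1)\mathrm{e}^{-\beta J_e})/q$. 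Multiplying the two bounds collapses the $\mathrm{e}^{\beta J_e}$ dependence and delivers the $q$-ary uncertainty relation $\pi_{\text{p},e}(0)\,\pi_{\text{d},e}(0)\ge 1/q$, generalizing~(\ref{eqn:Uncertainty}); the identical reasoning through Proposition~\ref{prop:VertexDFT} (with $J_e$ replaced by $H_v$ and $\psi$ by $\phi$) produces the matching vertex bounds.

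I expect the only genuine obstacle to be justifying that one may legitimately push the monotone affine maps to their endpoint, i.e.\ that $\pi_{\text{p},e}(\cdot)$ and $\pi_{\text{d},e}(\cdot)$ really are honest PMFs valued in $[0,1]$. For the primal domain this is automatic, but for the dual it is exactly where the hypotheses are needed: a ferromagnetic coupling ($J_e\ge 0$) together with a nonnegative field ($H_v\ge 0$) make the dual factors~(\ref{eqn:PottsDJ}) and~(\ref{eqn:PottsDV}) nonnegative, so that~(\ref{eqn:Pd}) defines a valid probability mass function. Once this is secured, everything else is the routine linear algebra already carried out for the Ising case together with the elementary monotonicity checks.
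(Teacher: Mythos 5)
Your proposal is correct: the paper states Proposition~\ref{prop:2DPottsBoundsProp} without proof, and your argument is exactly the intended one, namely the template of the paper's proof of Proposition~\ref{prop:2DIsingBounds} transplanted to the Potts factors (\ref{eqn:PottsPot1}) and (\ref{eqn:PottsDJ}) via the mapping (\ref{eqn:PottsMapping0}) and its inverse, with the dual marginals being a valid PMF precisely because ferromagnetic couplings and a nonnegative field make (\ref{eqn:PottsDJ}) and (\ref{eqn:PottsDV}) nonnegative. Your algebra checks out (both endpoint evaluations reproduce (\ref{eqn:PottsBoundonPe0}) and (\ref{eqn:PottsBoundonPd0}), and their product gives $1/q$), and your continuity remark for $\beta J_e=0$ is a minor extra care the paper's Ising proof leaves implicit.
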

\begin{equation}
\label{eqn:PottsBoundonPe0}
\pi_{\mathrm{p},e}(0) \ge \dfrac{\textrm{e}^{\beta J_e}}{\textrm{e}^{\beta J_e} - 1 + q }
\end{equation}
\emph{and}
\begin{equation}
\label{eqn:PottsBoundonPd0}
\pi_{\mathrm{d},e}(0) \ge \dfrac{\textrm{e}^{\beta J_e} - 1 + q}{q\textrm{e}^{\beta J_e}},
\end{equation}
\emph{which gives} 
\begin{equation}
\pi_{\mathrm{p},e}(0)\pi_{\mathrm{d},e}(0) \ge \dfrac{1}{q}
\end{equation}

As in the case of the Ising model, the bounds in (\ref{eqn:PottsBoundonPe0}) and (\ref{eqn:PottsBoundonPd0}) intersect at 
the criticality of the 2D homogenous Potts model, cf.~Fig.~\ref{fig:PBound}

We briefly mention analogous results for the 2D homogeneous four-state clock model in zero field, which has a phase transition 
at $\beta J_\mathrm{c} = \ln(1+\sqrt{2})$, see~\citep{kihara1954statistics}.

\begin{figure}
\centering
\begin{tikzpicture}
\begin{axis}[
			legend style={at = {(0.98,0.93)} ,font=\tiny},		
			height = 37.0ex,
			width = 52.0ex,
			grid = major,
			tick pos=left, 
			xlabel shift = -2 pt,
			xminorticks = false,	
		    yminorticks = false,	
		    y tick label style={
        /pgf/number format/.cd,
            fixed,
        /tikz/.cd
    		}, 				
			ytick={0.02, 0.04, 0.06, 0.08, 0.1, 0.12, 0.14},
			xtick={0, 2, 4, 6, 8, 10},
			extra x ticks=0.88,
			extra x tick labels={$\beta J_\mathrm{c}$},
		xlabel= $\beta J$ ={font=\normalsize},
			xmin = 0.0,
			xmax = 8.0,
			ymin = 0.0,
			ymax = 0.14,
			yticklabel style = {font=\tiny,yshift=0.0ex},
            xticklabel style = {font=\tiny,xshift=0.0ex}			
			]

\pgfplotstableread{./files/ClockOne.txt}\mydatatwo
\pgfplotstableread{./files/ClockTwo.txt}\mydatathree

		 \addplot [
		 line width = 0.58 pt,
		 color = black
		 ]
		  table[y = Z] from \mydatatwo;	 
		  
		  \addplot [
		 line width = 0.58 pt,
		 densely dashed,
		 color = blue
		 ]
		  table[y = Z] from \mydatathree;

		\addplot[mark=*] coordinates {(0.88,0.021446)};
		\addplot[mark=*] coordinates {(0.88,0.125)};

%



 		 \legend{$\pi^{*}(1)$ in~(\ref{eqn:fixedClock4}), $\pi^{*}(2)$ in~(\ref{eqn:fixedClock4}), Max  attained at $\beta J_\mathrm{c}$};


\end{axis}
\end{tikzpicture}
\caption{\label{fig:LClock2}%
The fixed points of $\pi^{*}_{\text{p}, e}(1)$ and $\pi^{*}_{\text{p},e}(2)$ in~(\ref{eqn:fixedClock4}) as a function of 
$\beta J$. 
The filled circles show the fixed points at the criticality of the 2D homogeneous four-state clock model
located at $\beta J_\mathrm{c} = \ln(1+\sqrt{2})$.}
\end{figure}

In the clock model
\begin{equation} 
\label{eqn:ClockDual4}
\tilde \psi(\tilde y_e) = \left\{ \begin{array}{ll}
      2(\cosh(\beta J) + 1), & \text{if $\tilde y_e = 0$} \\
      2(\cosh(\beta J) - 1), & \text{if $\tilde y_e = 2$} \\
      2\sinh(\beta J), & \text{otherwise,}
  \end{array} \right.
\end{equation}
which is the 1D DFT of (\ref{eqn:ClockPot1}), and is positive if the model is ferromagnetic (i.e., if $\beta J > 0$).

For this model, the fixed points of the mapping (\ref{eqn:mappingPottsvector}) at criticality are as in
\begin{multline}
\label{eqn:fixedClock4}
\left . \begin{pmatrix} \, \pi^{*}(0)  & \pi^{*}(1)\, & \pi^{*}(2)\, & \pi^{*}(3)\,\end{pmatrix} \right\vert_{\beta J = \beta J_\text{c}}
= \\ \myfrac{1}{4(1+\sinh(\beta J))^2}
\begin{pmatrix} \, (e^{\beta J} + 1)^2 & 2\sinh(\beta J) & (e^{-\beta J} - 1)^2 & 2\sinh(\beta J)\, \end{pmatrix},
\end{multline}
which are plotted as a function of $\beta J$ in Figs.~\ref{fig:LClock1} and~\ref{fig:LClock2}.

The filled circles show the fixed points at the criticality 
of the model, given by
\begin{equation}
\label{eqn:fixedpointsClock4}
\left . \begin{pmatrix} \, \pi^{*}(0)  & \pi^{*}(1) & \pi^{*}(2) & \pi^{*}(3)\, \end{pmatrix}\right\vert_{\beta J = \beta J_\text{c}} =
\begin{pmatrix} \, \dfrac{3+2\sqrt{2}}{8} & \dfrac{1}{8} & \dfrac{3-2\sqrt{2}}{8} & \dfrac{1}{8} \, \end{pmatrix}, 
\end{equation}
which coincide with the minimum of $\pi^{*}(0)$ and the maximums
of $\pi^{*}(1)$ and $\pi^{*}(2)$ in (\ref{eqn:fixedClock4}).

The clock model exhibits Kosterlitz-Thouless transitions for large values of $q$, which is beyond the scope of 
this paper. We refer interested readers to~\citep{kosterlitz1973ordering} and \citep[Chapter 7]{Nishimori:15}.

\section{Numerical Experiments}
\label{sec:NumExp}

In both domains, estimates of marginal densities can be obtained via Markov chain Monte Carlo methods or via 
variational 
algorithms~\citep{christian1999monte, murphy:2012}.
We only consider the subgraphs-world process (SWP) and two variational 
algorithms, the BP and the  TEP algorithms, for ferromagnetic Ising models and frustrated Potts models. 
Estimated marginals in the dual domain are then transformed 
all together to the primal domain via (\ref{eqn:MapDP}). 
In all experiments, the exact values of marginal 
probabilities are computed via the junction tree algorithm implemented in \citep{Mooij:2010}.


In our first experiment, we consider a 2D homogeneous Ising model, in a constant external field $\beta H = 0.15$, with 
periodic boundaries, and with size $N = 6\times6$. For this model, BP and TEP 
in the primal and in the dual domains give virtually indistinguishable approximations. We also apply 
SWP using $10^5$ samples.
\Fig{fig:IsingHom} shows the relative error in estimating $\pi_{\text{p}, e}(0)$ as 
a function of $\beta J$.

We observe that SWP provides good estimates of the marginal probability in the whole range. 
However, compared to variational algorithms, convergence of the SWP is slow; moreover,
SWP is only applicable when the external field is nonzero. In the next two experiments, we consider Ising models in the absence of an
 external field, and only compare the efficiency of variational algorithms employed in 
 the primal and in the dual domains.

In the second experiment, we consider a 2D ferromagnetic Ising model with size $N = 6\times6$, in zero field, and with 
periodic boundaries. Couplings are chosen randomly according
to a half-normal distribution, i.e., $\beta J_e = |\beta J'_e|$ with $\beta J'_e \overset{\text{i.i.d.}}{\sim} \calN(0, \sigma^2)$. 

\Fig{fig:IsingGauss} shows the average relative error in estimating the marginal 
probability $\pi_{\text{p}, e}(0)$ as a function of $\sigma^2$, where the results are averaged over 200 independent realizations.
We consider a fully-connected Ising model with $N = 10$ in our third experiment. Couplings are 
chosen randomly 
according to $\beta J_e \overset{\text{i.i.d.}}{\sim} \calU [0.05, \beta J_x]$, i.e., uniformly between 0.05 
and $\beta J_x$
denoted by the value on the $x$-axis. The average relative error over 50 independent realizations 
is illustrated in~\Fig{fig:IsingFully}.

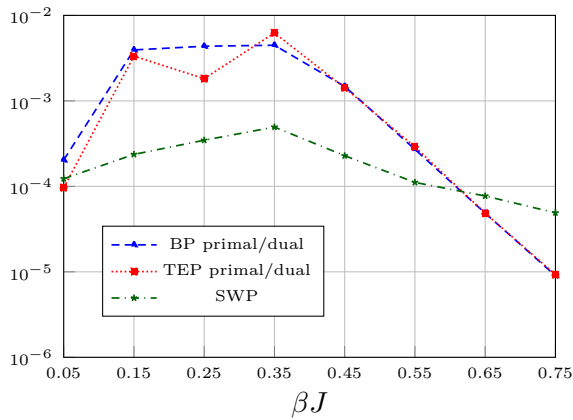
\begin{figure}[t]
\centering
\begin{tikzpicture}
\begin{semilogyaxis}[
			legend style={at = {(0.53,0.385)} ,font=\tiny},		
			height = 37.0ex,
			width = 49.0ex,
			grid = major,
			tick pos=left, 
			xlabel shift = -2 pt,
			xminorticks = false,	
		    yminorticks = false,
    		x tick label style={
        /pgf/number format/.cd,
            fixed,
            fixed zerofill,
            precision=2,
        /tikz/.cd
    		}, 						
			ytick={1e-6, 1e-5, 1e-4, 1e-3, 1e-2, 1e-1},
			yticklabels = {$10^{-6}$, $10^{-5}$, $10^{-4}$, $10^{-3}$, $10^{-2}$},
			xtick={0.05, 0.15, 0.25, 0.35, 0.45, 0.55, 0.65, 0.75, 0.85},
			xlabel= $\beta J$ ={font=\normalsize},
			xmin = 0.05,
			xmax = 0.75,
			ymin = 1e-6,
			ymax = 1e-2,
			yticklabel style = {font=\tiny,yshift=0.0ex},
            xticklabel style = {font=\tiny,xshift=0.0ex}	]

\addplot [mark size=1.3, blue, mark=triangle*, densely dashed, line width = 0.65 pt] table[x={J}, y={BP}] {./Data/IsingREH0.15.txt};
\addplot [mark size=1.2, red, mark=square*, densely dotted, line width = 0.65 pt] table[x={J}, y={TP}] {./Data/IsingDREH0.15.txt};
\addplot [mark size=1.3, chocolate1, mark= star, dashdotted, line width = 0.65 pt] table[x={J}, y={RE}] {./Data/GibbsDual2.txt};

\legend{BP primal/dual, TEP primal/dual, SWP};

\end{semilogyaxis}
\end{tikzpicture}
\vspace{-2.0ex}
\caption{\label{fig:IsingHom}%
Relative error as a function of  $\beta J$ in estimating $\pi_{\text{p}, e}(0)$ of a homogeneous Ising model 
in a constant external field $\beta H = 0.15$, with periodic boundaries, and with size $N = 6\times6$.}
\end{figure}


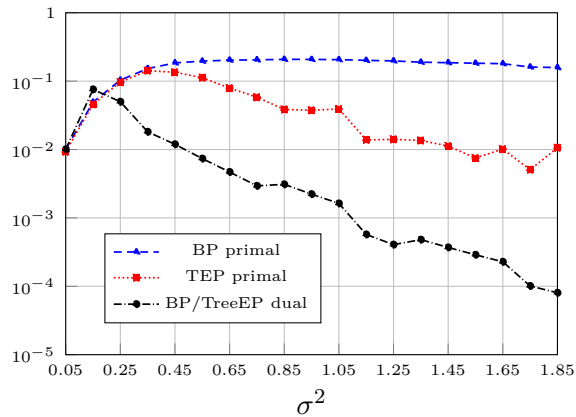
\begin{figure}
\centering
\begin{tikzpicture}
\begin{semilogyaxis}[
			legend style={at = {(0.52,0.355)} ,font=\tiny},		
			height = 37.0ex,
			width = 49.0ex,
			grid = major,
			tick pos=left, 
			xlabel shift = -2 pt,
			xminorticks = false,	
		    yminorticks = false,
    		x tick label style={
        /pgf/number format/.cd,
            fixed,
            fixed zerofill,
            precision=2,
        /tikz/.cd
    		}, 						
			ytick={1e-5, 1e-4, 1e-3, 1e-2, 1e-1, 1e0},
			yticklabels = {$10^{-5}$, $10^{-4}$, $10^{-3}$, $10^{-2}$, $10^{-1}$, $1$},
			xtick={0.05, 0.25, 0.45, 0.65, 0.85, 1.05, 1.25, 1.45, 1.65, 1.85},
			xlabel= $\sigma^2$ ={font=\normalsize},
			xmin = 0.05,
			xmax = 1.85,
			ymin = 1e-5,
			ymax = 1e0,
			yticklabel style = {font=\tiny,yshift=0.0ex},
            xticklabel style = {font=\tiny,xshift=0.0ex}	]

\addplot [mark size=1.3, blue, mark=triangle*, densely dashed, line width = 0.65 pt] table[x={J}, y={BP}] {./Data/RelativeErrorPIsingGauss.txt};
\addplot [mark size=1.2, red, mark=square*, densely dotted, line width = 0.65 pt] table[x={J}, y={TP}] {./Data/RelativeErrorPIsingGauss.txt};
\addplot [mark size=1.2, black, mark=*, densely dashdotted, line width = 0.65 pt] table[x={J}, y={BP}] {./Data/RelativeErrorDIsingGauss.txt};

\legend{BP primal, TEP primal, BP/TreeEP dual};

\end{semilogyaxis}
\end{tikzpicture}
\vspace{-2.0ex}
\caption{\label{fig:IsingGauss}%
Average relative error in estimating $\pi_{\text{p}, e}(0)$ of an Ising model with 
periodic boundaries and with size $N = 6\times6$. Couplings are chosen randomly according
to a half-normal distribution with variance $\sigma^2$.}
\end{figure}


\begin{figure}[h!!!]
\centering
\begin{tikzpicture}
\begin{semilogyaxis}[
			legend style={at = {(0.52,0.36)} ,font=\tiny},		
			height = 37.0ex,
			width = 49.0ex,
			grid = major,
			tick pos=left, 
			xlabel shift = -2 pt,
			xminorticks = false,	
		    yminorticks = false,
    		x tick label style={
        /pgf/number format/.cd,
            fixed,
            fixed zerofill,
            precision=2,
        /tikz/.cd
    		}, 						
			ytick={1e-5, 1e-4, 1e-3, 1e-2, 1e-1, 1e0},
			yticklabels = {$10^{-5}$, $10^{-4}$, $10^{-3}$, $10^{-2}$, $10^{-1}$, $1$},
			xtick={0.05, 0.15, 0.25, 0.35, 0.45, 0.55, 0.65},
			xlabel= $\beta J_x$ ={font=\normalsize},
			xmin = 0.05,
			xmax = 0.65,
			ymin = 1e-5,
			ymax = 1e0,
			yticklabel style = {font=\tiny,yshift=0.0ex},
            xticklabel style = {font=\tiny,xshift=0.0ex}	]

\addplot [mark size=1.3, blue, mark=triangle*, densely dashed, line width = 0.65 pt] table[x={J}, y={BP}] {./Data/RelativeErrorPFully.txt};
\addplot [mark size=1.2, red, mark=square*, densely dotted, line width = 0.65 pt] table[x={J}, y={TP}] {./Data/RelativeErrorPFully.txt};
\addplot [mark size=1.2, black, mark=*, densely dashdotted, line width = 0.65 pt] table[x={J}, y={BP}] {./Data/RelativeErrorDFully.txt};

\legend{BP primal, TEP primal, BP/TreeEP dual};

\end{semilogyaxis}
\end{tikzpicture}
\vspace{-2.0ex}
\caption{\label{fig:IsingFully}%
Average relative error in estimating $\pi_{\text{p}, e}(0)$ in a fully-connected 
Ising model with $N = 10$. Coupling parameters are chosen uniformly and independently between 0.05 and $\beta J_x$ 
denoted by the $x$-axis.}
\vspace{2.0ex}
\centering
\begin{tikzpicture}
\begin{semilogyaxis}[
			legend style={at = {(0.92,0.32)} ,font=\tiny},		
			height = 37.0ex,
			width = 49.0ex,
			grid = major,
			tick pos=left, 
			xlabel shift = -2 pt,
			xminorticks = false,	
		    yminorticks = false,
    		x tick label style={
        /pgf/number format/.cd,
            fixed,
            fixed zerofill,
            precision=2,
        /tikz/.cd
    		}, 						
			ytick={1e-4, 1e-3, 1e-2, 1e-1, 1e0},
			yticklabels = {$10^{-4}$, $10^{-3}$, $10^{-2}$, $10^{-1}$, $1$},
			xtick={0.15, 0.45, 0.75, 1.05, 1.35, 1.65, 1.95, 2.25, 2.55, 2.85},
			xlabel= $\beta J_{e_{\text{Ferr}}}$ ={font=\normalsize},
			xmin = 0.15,
			xmax = 2.85,
			ymin = 1e-4,
			ymax = 1e0,
			yticklabel style = {font=\tiny,yshift=0.0ex},
            xticklabel style = {font=\tiny,xshift=0.0ex}	]

\addplot [mark size=1.3, blue, mark=triangle*, densely dashed, line width = 0.65 pt] table[x={J}, y={BP}] {./DataPottsSpin/PottsRE24ZeroS.txt};
\addplot [mark size=1.1, red, mark=square*, densely dotted, line width = 0.65 pt] table[x={J}, y={TP}] {./DataPottsSpin/PottsRE24ZeroS.txt};
\addplot [mark size=1.2, black, mark=*, densely dashdotted, line width = 0.65 pt] table[x={J}, y={BP}] {./DataPottsSpin/PottsDRE24ZeroS.txt};

\legend{BP primal, TEP primal, BP dual};

\end{semilogyaxis}
\end{tikzpicture}
\vspace{-2.0ex}
\caption{\label{fig:PottsFrus1}%
Relative error in estimating the edge marginal function 
of an edge with ferromagnetic interaction of a frustrated \mbox{3-state} 
Potts model with free boundaries and size $N = 6\times6$. 
Here $e_{\text{Ferr}}$ has a ferromagnetic interaction $\beta J_{e_{\text{Ferr}}}$.}
\end{figure}

In the second and the third experiments, BP and TEP provide
close approximations in the dual domain, therefore only 
BP results are reported.
Figs.~\ref{fig:IsingGauss} and~\ref{fig:IsingFully} show that for 
$\sigma^2 > 0.25$ and $\beta J_x > 0.20$ (i.e., in relatively lower temeratures), BP in the dual NFG can significantly improve the quality of 
estimates -- even by more than two orders of magnitude in terms of relative error. 

In our last experiment, we consider a 2D 3-state Potts model with size $N = 6\times6$, in the absence of an external field, and 
with free boundary conditions, 
in which all plaquettes (i.e., cycles of length four) are frustrated. 

A plaquette is called frustrated if the product of four coupling parameters along 
its edges is negative. It is then not possible to satisfy all local constraints at the same time, which leads to difficult
energy landscapes \citep{Nishimori:15}. 
We then focus on the plaquette in the middle of the model.

In order to create frustration, each plaquette has one coupling parameter set 
to $\beta J_{e_{\text{Antif}}} = -0.25$ (i.e., with antiferromagnetic interaction), and three 
remaining couplings equal to $\beta J_{e_{\text{Ferr}}}$ according to the value on 
the $x$-axis (i.e., with ferromagnetic interaction). 
According to~(\ref{eqn:PottsDJ}), in this example factors with antiferromagnetic interactions will have negative 
components in the dual NFG, c.f.~Remark 4.

\Fig{fig:PottsFrus1} shows the relative error in estimating the edge marginal function 
of an edge with ferromagnetic interaction as a function of $\beta J_{ e_{\text{Ferr}}}$. 
In this example, BP in the dual domain
provides the most accurate estimates when $\beta J_{e_{\text{Ferr}}} > 1.90$. For smaller values 
of $\beta J_{e_{\text{Ferr}}}$, BP in the primal and BP in the dual domains perform similarly, and 
for $\beta J_{e_{\text{Ferr}}} < 1.0$, TEP in the primal domain gives the best estimates. 

In general, we observe that applying inference algorithms on the dual NFG is advantageous when 
the coupling parameters are large (i.e., when the temperature is low).

\section{Extensions to Continuous Models}
\label{sec:Continuous}

As an extension of the proposed mappings to continuous models, we consider the following probability density 
function (PDF) in the primal domain
\begin{align} 
f_\text{p}(\x) & \propto
\textrm{exp}\Big(-\frac{1}{2s^2}\sum_{(k,\ell) \in \EE}(x_k - x_\ell)^2\Big)
\textrm{exp}\Big(-\frac{1}{2\sigma^2}\sum_{v \in \VV} x_v^2\Big) \\
& = \textrm{exp}\Big(-\frac{1}{2s^2}\sum_{e \in \EE}y_e^2\Big)
\textrm{exp}\Big(-\frac{1}{2\sigma^2}\sum_{v \in \VV} x_v^2\Big), \label{eqn:ProbPGaussThin}
\end{align}
where $\x \in \R^{|\VV|}$, $s^2$ denote the intervariable variance, and $\sigma^2$ denotes the vertex variance.
This Gaussian Markov random field with the thin-membrane prior is widely used in Bayesian 
image analysis, in data interpolation, and in learning the structure of Gaussian graphical 
models~\citep{winkler1995, weiss2000correctness, malioutov2008approximate, yu2022efficient}.

The PDF in~(\ref{eqn:ProbPGaussThin}) is in accordance with 
the formulation in Section~\ref{sec:Ising}, as the edge-weighing factor $\psi_{e}(\cdot)$ is only a 
function of the edge configuration $y_e$. Indeed from~(\ref{eqn:ProbP}), we note that
\begin{equation} 
\label{eqn:ProbPGauss2}
\psi_e(y_e) = \textrm{exp}\big(-\frac{1}{2s^2}y_e^2\big)
\end{equation}
and the vertex-weighing factors are given by
\begin{equation} 
\label{eqn:ProbPGauss3}
\phi_v(x_v) = \textrm{exp}\big(-\frac{1}{2\sigma^2} x_v^2\big)
\end{equation}

For a chain graph, the primal NFG of~(\ref{eqn:ProbPGaussThin}) is illustrated in~\Fig{fig:2DGridMod},
where the big unlabeled boxes represent~(\ref{eqn:ProbPGauss2}), the small unlabeled boxes 
represent~(\ref{eqn:ProbPGauss3}), boxes labeled ``$=$'' are instances of equality 
indicator factors $\delta_{=}(\cdot)$, where $\delta_{=}(\cdot)$ denotes the Dirac delta function, 
and boxes labeled ``$+$'' are instances of zero-sum indicator 
factors, which impose the
constraint that all their incident variables sum 
to zero in $\R$. 

In this framework $(\mathcal{F}\psi_e)(\cdot)$, the 1D Fourier transform of $\psi_e(\cdot)$, is the function
$\tilde \psi_e(\cdot)$ which is given by
\begin{align}
\tilde \psi_e(\tilde y_e) & = (\mathcal{F}\psi_e)(\tilde y_e) \\
& = \int_{-\infty}^{\infty} \psi_e(y_e)\textrm{e}^{-\mathrm{i}\,\tilde y_ey_e}dy_e \label{eqn:ft}
\end{align}
where $\mathrm{i}$ is the unit imaginary number~\citep{Brace:99}.

In analogy with (\ref{eqn:Pd}), the PDF in the dual domain can be expressed as
\begin{equation}
\label{eqn:PdGauss1}
f_\text{d}(\tilde \y) \propto \prod_{e \in \EE} \tilde \psi_e(\tilde y_e) \prod_{v \in \VV} \tilde \phi_v(\tilde x_v),
\end{equation} 
where
\begin{align}
\tilde \psi_e(\tilde y_e) & = \sqrt{2\pi s^2}\textrm{exp}\big(-\frac{s^2}{2}\tilde y_e^2\big) \\
& \propto \textrm{exp}\big(-\frac{s^2}{2}\tilde y_e^2\big) \label{eqn:PdGauss2}
\end{align} 
and
\begin{equation}
\label{eqn:PdGauss3}
\tilde \phi_v(\tilde x_v) = \textrm{exp}\big(-\frac{\sigma^2}{2}\tilde  x_v^2\big)
\end{equation} 
are the Fourier transforms of (\ref{eqn:ProbPGauss2}) and (\ref{eqn:ProbPGauss3}) respectively.\footnote{Following~(\ref{eqn:ft}), the Fourier transform 
of $f(x) = \frac{1}{\sqrt{2\pi \sigma^2}}\textrm{exp}\big(-\frac{1}{2\sigma^2} x^2\big)$, i.e., the PDF of a Gaussian distribution with mean zero and with standard 
deviation $\sigma$, is given by $(\mathcal{F}f)(\tilde x) = \textrm{exp}\big(-\frac{\sigma^2}{2} \tilde x^2\big)$. However, the scale factors that appear in the 
Fourier transforms of (\ref{eqn:ProbPGauss2}) and (\ref{eqn:ProbPGauss3}) are unimportant in our formulation. After all,  we are mainly concerned with 
computing the marginal densities of~(\ref{eqn:ProbPGaussThin}) from the corresponding marginal densities of~(\ref{eqn:PdGauss1}).}

The corresponding dual NFG is shown in~\Fig{fig:2DGridModDual}, where factors~(\ref{eqn:PdGauss2}) are represented by the big unlabeled boxes
and factors~(\ref{eqn:PdGauss3}) are represented by the small unlabeled boxes.

A straightforward generalization of Proposition~\ref{prop:EdgeDFT} gives the following local mapping that relates the edge marginal probabilities of the 
primal NFG to the edge marginal probabilities in the dual NFG
\begin{align}
\frac{f_{\text{p}, e}(y_e)}{\psi_e(y_e)} & = \Big(\mathcal{F}\frac{f_{\text{d}, e}}{(\mathcal{F}\psi_e)}\Big)(\tilde y_e) \\
& = \Big(\mathcal{F}\frac{f_{\text{d}, e}}{\tilde \psi_e}\Big)(\tilde y_e) \label{eqn:TransMargsGauss2}
\end{align}
In other words, the functions $f_{\mathrm{p}, e}(t)/\psi_e(t), t\in \R$ and 
$f_{\mathrm{d}, e}(\tilde t)/\tilde \psi_e(\tilde t), \tilde t \in \R$ are Fourier pairs.
Similarly, it holds that
\begin{equation}
\frac{f_{\text{p}, v}(x_v)}{\phi_v(x_v)} = \Big(\mathcal{F}\frac{f_{\text{d}, v}}{\tilde \phi_v}\Big)(\tilde x_v) \label{eqn:TransMargsGauss3}
\end{equation}
as a generalization of Proposition~\ref{prop:VertexDFT}.

Again, as a consequence of (\ref{eqn:TransMargsGauss2}) and (\ref{eqn:TransMargsGauss3}), it is possible to estimate
the edge/vertex marginal densities in the dual domain, and then transform them to the primal domain. Let us denote the 
estimated marginal variance in the dual domain by $\hat\sigma_\text{d}^2$, which can be mapped to the primal 
NFG via (\ref{eqn:TransMargsGauss3}). Indeed
\begin{align}
f_{\text{p}, v}(x_v) & = \phi_v(x_v) \Big(\mathcal{F}\frac{f_{\text{d}, v}}{\tilde \phi_v}\Big)(\tilde x_v) \\ 
& \propto \textrm{exp}\big(-\dfrac{1}{2\sigma^2}x_v^2\big)
\displaystyle\int_{-\infty}^{\infty}\frac{\textrm{exp}
\big(-\dfrac{1}{2\hat\sigma_\text{d}^2}\tilde x_v^2\big)}{\textrm{exp}\big(-\dfrac{\sigma^2}{2}\tilde x_v^2\big)})\textrm{e}^{-\mathrm{i}\,\tilde x_vx_v}d\tilde x_v\\
& \propto \textrm{exp}\big(-\dfrac{1}{2\sigma^2(1-\sigma^2\hat\sigma_\text{d}^2)}x_v^2\big) \label{eqn:TransMargsGauss4}
\end{align}

Let $f_{\text{p}, v}(x_v) \propto \textrm{exp}\big(-\dfrac{1}{2\hat\sigma_{\text{d} \rightarrow \text{p}}^2}x_v^2\big)$, where 
$\hat\sigma_{\text{d} \rightarrow \text{p}}^2$ is the estimated
marginal variance in the primal domain. After matching the exponents on the left-hand side and the-right hand side 
of (\ref{eqn:TransMargsGauss4}), we get
\begin{equation}
\label{eqn:TransVarGauss}
\hat\sigma_{\text{d} \rightarrow \text{p}}^2 = \sigma^2(1-\sigma^2\hat\sigma_\text{d}^2),
\end{equation}
which can be used to compute an estimate of the marginal variance in the primal domain from the estimated marginal variance in the dual domain.


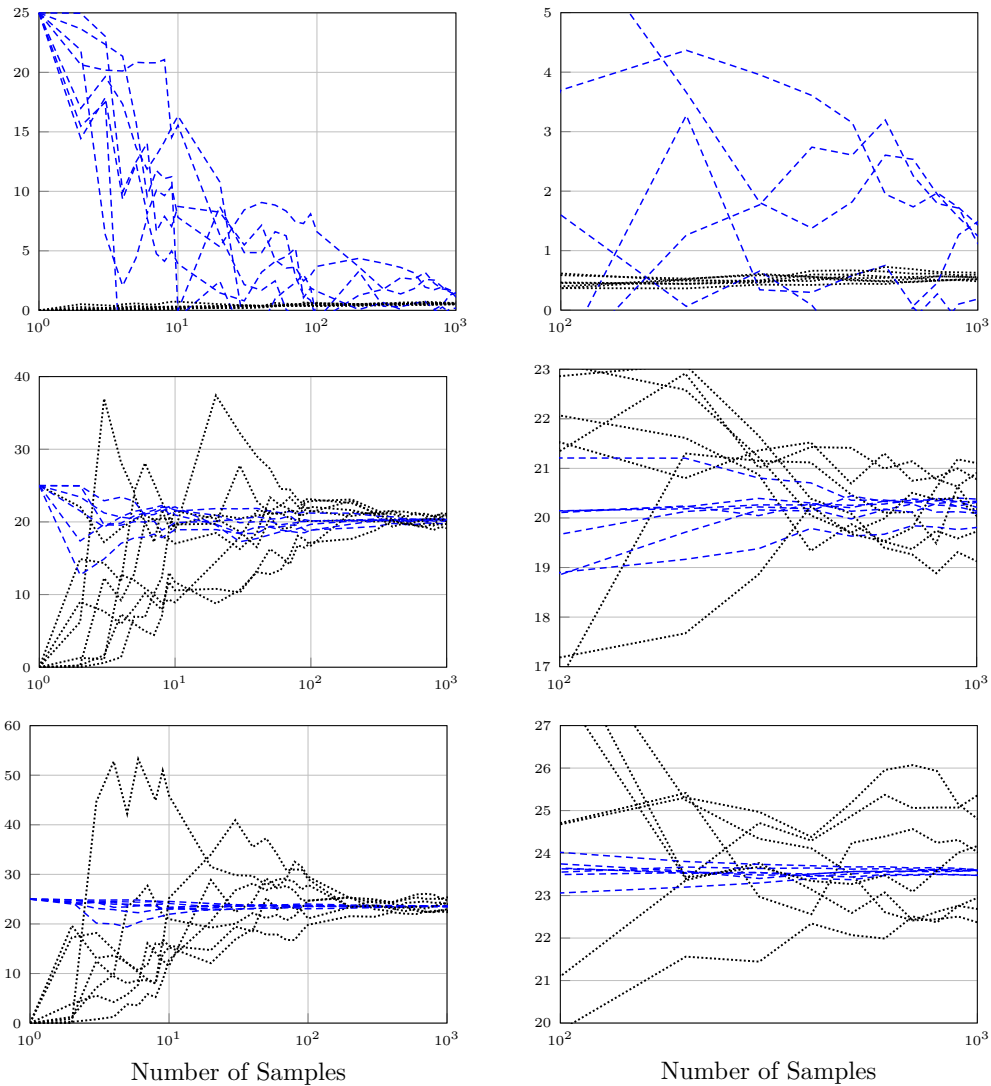
\begin{figure}[t]
  \centering
  \subfigure{
     \centering
\begin{tikzpicture}[scale=0.87]
\begin{axis}[
			legend style={at = {(0.98,0.555)} ,font=\tiny},		
			height = 37.0ex,
			width = 48.0ex,
			grid = major,
			tick pos=left, 
			xlabel shift = -2 pt,
			xmode=log,
			xminorticks = false,	
		    yminorticks = false,	
		    y tick label style={
        /pgf/number format/.cd,
            fixed,
        /tikz/.cd
    		}, 				
			ytick={0, 5, 10, 15, 20, 25},
			xmin = 1.0,
			xmax = 1000,
			ymin = 0.0,
			ymax = 25,
			yticklabel style = {font=\tiny,yshift=0.0ex},
            xticklabel style = {font=\tiny,xshift=0.0ex}			
			]

\pgfplotstableread{./Data/1S5Sigma1.txt}\mydataone
\pgfplotstableread{./Data/2S5Sigma1.txt}\mydatatwo
\pgfplotstableread{./Data/3S5Sigma1.txt}\mydatathree
\pgfplotstableread{./Data/4S5Sigma1.txt}\mydatafour
\pgfplotstableread{./Data/5S5Sigma1.txt}\mydatafive
\pgfplotstableread{./Data/6S5Sigma1.txt}\mydatafourteen
\pgfplotstableread{./Data/7S5Sigma1.txt}\mydatafifteen
\pgfplotstableread{./DataDual/1S5Sigma1.txt}\mydatasix
\pgfplotstableread{./DataDual/2S5Sigma1.txt}\mydataseven
\pgfplotstableread{./DataDual/3S5Sigma1.txt}\mydataeight
\pgfplotstableread{./DataDual/4S5Sigma1.txt}\mydatanine
\pgfplotstableread{./DataDual/5S5Sigma1.txt}\mydataten
\pgfplotstableread{./DataDual/6S5Sigma1.txt}\mydataeleven
\pgfplotstableread{./DataDual/7S5Sigma1.txt}\mydatatwelve
		
		\addplot [
		densely dotted,
		line width = 0.8 pt,
		 color = black
		]		
		 table[y = Z] from \mydataone;
		 
		 \addplot [
		 		densely dotted,
		 line width = 0.8 pt,
		 color = black
		 ]
		  table[y = Z] from \mydatatwo;	 
		  
		  \addplot [
		  		densely dotted,
		 line width = 0.8 pt,
		 color = black
		 ]
		  table[y = Z] from \mydatathree;	 

	  \addplot [
	  		densely dotted,
		 line width = 0.8 pt,
		 color = black
		 ]
		  table[y = Z] from \mydatafour;	 
		  
	  \addplot [
	  		densely dotted,
		 line width = 0.8 pt,
		 color = black
		 ]
		  table[y = Z] from \mydatafive;	 		  
		  
	  \addplot [
		 line width = 0.65 pt,
		 color = blue,
		 		 densely dashed
		 ]
		  table[y = Z] from \mydatasix;	 		  
		  
		  	  \addplot [
		 line width = 0.65 pt,
		 color = blue,
		 		 densely dashed
		 ]
		  table[y = Z] from \mydataseven;	 		  
		
			  	  \addplot [
		 line width = 0.65 pt,
		 color = blue,
		 		 densely dashed
		 ]
		  table[y = Z] from \mydataeight;	 		  
		  
		  	  	  \addplot [
		 line width = 0.65 pt,
		 color = blue,
		 		 densely dashed
		 ]
		  table[y = Z] from \mydatanine;	 		  
		  
		  	  	  \addplot [
		 line width = 0.65 pt,
		 color = blue,
		 		 densely dashed
		 ]
		  table[y = Z] from \mydataten;	 		  
		  
		  	  	  \addplot [
		 line width = 0.65 pt,
		 color = blue,
		 		 densely dashed
		 ]
		  table[y = Z] from \mydataeleven;	 	

		  	  	  \addplot [
		 line width = 0.65 pt,
		 color = blue,
		 		 densely dashed
		 ]
		  table[y = Z] from \mydatatwelve;	 		  
		  
		  		  	  	
		\addplot [
				densely dotted,
		 line width = 0.8 pt,
		 color = black
		 ]
		  table[y = Z] from \mydatafourteen;	 	

		\addplot [		  
				densely dotted,
		 line width = 0.8 pt,
		 color = black
		 ]
		  table[y = Z] from \mydatafifteen;	 		  	  
		    
\end{axis}
\end{tikzpicture}
}
\quad
  \subfigure{
     \centering
\begin{tikzpicture}[scale=0.87]
\begin{axis}[
			legend style={at = {(0.98,0.555)} ,font=\tiny},		
			height = 37.0ex,
			width = 48.0ex,
			grid = major,
			tick pos=left, 
			xlabel shift = -2 pt,
			xmode=log,
			xminorticks = false,	
		    yminorticks = false,	
		    y tick label style={
        /pgf/number format/.cd,
            fixed,
        /tikz/.cd
    		}, 				
			ytick={0, 1, 2, 3, 4, 5},
			xtick={100, 1000},
			xmin = 100,
			xmax = 1000,
			ymin = 0.0,
			ymax = 5,
			yticklabel style = {font=\tiny,yshift=0.0ex},
            xticklabel style = {font=\tiny,xshift=0.0ex}			
			]

\pgfplotstableread{./Data/1S5Sigma1.txt}\mydataone
\pgfplotstableread{./Data/2S5Sigma1.txt}\mydatatwo
\pgfplotstableread{./Data/3S5Sigma1.txt}\mydatathree
\pgfplotstableread{./Data/4S5Sigma1.txt}\mydatafour
\pgfplotstableread{./Data/5S5Sigma1.txt}\mydatafive
\pgfplotstableread{./Data/6S5Sigma1.txt}\mydatafourteen
\pgfplotstableread{./Data/7S5Sigma1.txt}\mydatafifteen
\pgfplotstableread{./DataDual/1S5Sigma1.txt}\mydatasix
\pgfplotstableread{./DataDual/2S5Sigma1.txt}\mydataseven
\pgfplotstableread{./DataDual/3S5Sigma1.txt}\mydataeight
\pgfplotstableread{./DataDual/4S5Sigma1.txt}\mydatanine
\pgfplotstableread{./DataDual/5S5Sigma1.txt}\mydataten
\pgfplotstableread{./DataDual/6S5Sigma1.txt}\mydataeleven
\pgfplotstableread{./DataDual/7S5Sigma1.txt}\mydatatwelve
		
		\addplot [
		densely dotted,
		line width = 0.8 pt,
		 color = black
		]		
		 table[y = Z] from \mydataone;
		 
		 \addplot [
		 densely dotted,
		 line width = 0.8pt,
		 color = black
		 ]
		  table[y = Z] from \mydatatwo;	 
		  
		  \addplot [
		  		densely dotted,
		 line width = 0.8 pt,
		 color = black
		 ]
		  table[y = Z] from \mydatathree;	 

	  \addplot [
	  		densely dotted,
		 line width = 0.8pt,
		 color = black
		 ]
		  table[y = Z] from \mydatafour;	 
		  
	  \addplot [
	  		densely dotted,
		 line width = 0.8 pt,
		 color = black
		 ]
		  table[y = Z] from \mydatafive;	 		  
		  
	  \addplot [
		 line width = 0.65 pt,
		 color = blue,
		 		 densely dashed
		 ]
		  table[y = Z] from \mydatasix;	 		  
		  
		  	  \addplot [
		 line width = 0.65 pt,
		 color = blue,
		 		 densely dashed
		 ]
		  table[y = Z] from \mydataseven;	 		  
		
			  	  \addplot [
		 line width = 0.65 pt,
		 color = blue,
		 		 densely dashed
		 ]
		  table[y = Z] from \mydataeight;	 		  
		  
		  	  	  \addplot [
		 line width = 0.65 pt,
		 color = blue,
		 		 densely dashed
		 ]
		  table[y = Z] from \mydatanine;	 		  
		  
		  	  	  \addplot [
		 line width = 0.65 pt,
		 color = blue,
		 		 densely dashed
		 ]
		  table[y = Z] from \mydataten;	 		  
		  
		  	  	  \addplot [
		 line width = 0.65 pt,
		 color = blue,
		 		 densely dashed
		 ]
		  table[y = Z] from \mydataeleven;	 	

		  	  	  \addplot [
		 line width = 0.65 pt,
		 color = blue,
		 		 densely dashed
		 ]
		  table[y = Z] from \mydatatwelve;	 		  
%
		  		  	  	
		\addplot [
				densely dotted,
		 line width = 0.8 pt,
		 color = black
		 ]
		  table[y = Z] from \mydatafourteen;	 	

		\addplot [		
				densely dotted,  
		 line width = 0.8 pt,
		 color = black
		 ]
		  table[y = Z] from \mydatafifteen;	 		  	  
		    
\end{axis}
\end{tikzpicture}
}
\centering
  \subfigure{
       \centering
\begin{tikzpicture}[scale=0.85]
\begin{axis}[
			legend style={at = {(0.98,0.555)} ,font=\tiny},		
			height = 37.0ex,
			width = 48.0ex,
			grid = major,
			tick pos=left, 
			xlabel shift = -2 pt,
			xmode=log,
			xminorticks = false,	
		    yminorticks = false,	
		    y tick label style={
        /pgf/number format/.cd,
            fixed,
        /tikz/.cd
    		}, 				
			xmin = 1.0,
			xmax = 1000,
			ymin = 0.0,
			ymax = 40.0,
			yticklabel style = {font=\tiny,yshift=0.0ex},
            xticklabel style = {font=\tiny,xshift=0.0ex}			
			]
\pgfplotstableread{./Data/1S5Sigma20.txt}\mydataone
\pgfplotstableread{./Data/2S5Sigma20.txt}\mydatatwo
\pgfplotstableread{./Data/3S5Sigma20.txt}\mydatathree
\pgfplotstableread{./Data/4S5Sigma20.txt}\mydatafour
\pgfplotstableread{./Data/5S5Sigma20.txt}\mydatafive
\pgfplotstableread{./Data/6S5Sigma20.txt}\mydatafourteen
\pgfplotstableread{./Data/7S5Sigma20.txt}\mydatafifteen
\pgfplotstableread{./DataDual/1S5Sigma20.txt}\mydatasix
\pgfplotstableread{./DataDual/2S5Sigma20.txt}\mydataseven
\pgfplotstableread{./DataDual/3S5Sigma20.txt}\mydataeight
\pgfplotstableread{./DataDual/4S5Sigma20.txt}\mydatanine
\pgfplotstableread{./DataDual/5S5Sigma20.txt}\mydataten
\pgfplotstableread{./DataDual/6S5Sigma20.txt}\mydataeleven
\pgfplotstableread{./DataDual/7S5Sigma20.txt}\mydatatwelve
		
		\addplot [
		densely dotted,  
		 line width = 0.85 pt,
		 color = black
		]		
		 table[y = Z] from \mydataone;
		 
		 \addplot [
		densely dotted,  
		 line width = 0.85 pt,
		 color = black
		 ]
		  table[y = Z] from \mydatatwo;	 
		  
		  \addplot [
		densely dotted,  
		 line width = 0.85 pt,
		 color = black
		 ]
		  table[y = Z] from \mydatathree;	 

	  \addplot [
		densely dotted,  
		 line width = 0.85 pt,
		 color = black
		 ]
		  table[y = Z] from \mydatafour;	 
		  
	  \addplot [
		densely dotted,  
		 line width = 0.85 pt,
		 color = black
		 ]
		  table[y = Z] from \mydatafive;	 		  
		  
	  \addplot [
		densely dotted,  
		 line width = 0.85 pt,
		 color = black
		 ]
		  table[y = Z] from \mydatasix;	 		  
		  
		  	  \addplot [
		 line width = 0.65 pt,
		 color = blue,
		 		 		 densely dashed
		 ]
		  table[y = Z] from \mydataseven;	 		  
		
			  	  \addplot [
		 line width = 0.65 pt,
		 color = blue,
		 		 		 densely dashed
		 ]
		  table[y = Z] from \mydataeight;	 		  
		  
		  	  	  \addplot [
		 line width = 0.65 pt,
		 color = blue,
		 		 		 densely dashed
		 ]
		  table[y = Z] from \mydatanine;	 		  
		  
		  	  	  \addplot [
		 line width = 0.65 pt,
		 color = blue,
		 		 		 densely dashed
		 ]
		  table[y = Z] from \mydataten;	 		  
		  
		  	  	  \addplot [
		 line width = 0.65 pt,
		 color = blue,
		 		 		 densely dashed
		 ]
		  table[y = Z] from \mydataeleven;	 	

		  	  	  \addplot [
		 line width = 0.65 pt,
		 color = blue,
		 		 		 densely dashed
		 ]
		  table[y = Z] from \mydatatwelve;	 		  
		  
		  		  	  	
		\addplot [
		densely dotted,  
		 line width = 0.85 pt,
		 color = black
		 ]
		  table[y = Z] from \mydatafourteen;	 	

		\addplot [		  
		densely dotted,  
		 line width = 0.85 pt,
		 color = black
		 ]
		  table[y = Z] from \mydatafifteen;	 		  	  
		    
\end{axis}
\end{tikzpicture}
}
\quad
  \subfigure{
       \centering
\begin{tikzpicture}[scale=0.87]
\begin{axis}[
			legend style={at = {(0.98,0.555)} ,font=\tiny},		
			height = 37.0ex,
			width = 48.0ex,
			grid = major,
			tick pos=left, 
			xlabel shift = -2 pt,
			xmode=log,
			xminorticks = false,	
		    yminorticks = false,	
		    y tick label style={
        /pgf/number format/.cd,
            fixed,
        /tikz/.cd
    		}, 				
			ytick={17, 18, 19, 20, 21, 22, 23, 24},
			xtick={100, 1000},
			xmin = 100,
			xmax = 1000,
			ymin = 17,
			ymax = 23,
			yticklabel style = {font=\tiny,yshift=0.0ex},
            xticklabel style = {font=\tiny,xshift=0.0ex}			
			]
\pgfplotstableread{./Data/1S5Sigma20.txt}\mydataone
\pgfplotstableread{./Data/2S5Sigma20.txt}\mydatatwo
\pgfplotstableread{./Data/3S5Sigma20.txt}\mydatathree
\pgfplotstableread{./Data/4S5Sigma20.txt}\mydatafour
\pgfplotstableread{./Data/5S5Sigma20.txt}\mydatafive
\pgfplotstableread{./Data/6S5Sigma20.txt}\mydatafourteen
\pgfplotstableread{./Data/7S5Sigma20.txt}\mydatafifteen
\pgfplotstableread{./DataDual/1S5Sigma20.txt}\mydatasix
\pgfplotstableread{./DataDual/2S5Sigma20.txt}\mydataseven
\pgfplotstableread{./DataDual/3S5Sigma20.txt}\mydataeight
\pgfplotstableread{./DataDual/4S5Sigma20.txt}\mydatanine
\pgfplotstableread{./DataDual/5S5Sigma20.txt}\mydataten
\pgfplotstableread{./DataDual/6S5Sigma20.txt}\mydataeleven
\pgfplotstableread{./DataDual/7S5Sigma20.txt}\mydatatwelve
		
		\addplot [
		densely dotted,  
		 line width = 0.85 pt,
		 color = black
		]		
		 table[y = Z] from \mydataone;
		 
		 \addplot [
		densely dotted,  
		 line width = 0.85 pt,
		 color = black
		 ]
		  table[y = Z] from \mydatatwo;	 
		  
		  \addplot [
		densely dotted,  
		 line width = 0.85 pt,
		 color = black
		 ]
		  table[y = Z] from \mydatathree;	 

	  \addplot [
		densely dotted,  
		 line width = 0.85 pt,
		 color = black
		 ]
		  table[y = Z] from \mydatafour;	 
		  
	  \addplot [
		densely dotted,  
		 line width = 0.85 pt,
		 color = black
		 ]
		  table[y = Z] from \mydatafive;	 		  
		  
	  \addplot [
		 line width = 0.65 pt,
		 color = blue,
		 		 		 densely dashed
		 ]
		  table[y = Z] from \mydatasix;	 		  
		  
		  	  \addplot [
		 line width = 0.65 pt,
		 color = blue,
		 		 		 densely dashed
		 ]
		  table[y = Z] from \mydataseven;	 		  
		
			  	  \addplot [
		 line width = 0.65 pt,
		 color = blue,
		 		 		 densely dashed
		 ]
		  table[y = Z] from \mydataeight;	 		  
		  
		  	  	  \addplot [
		 line width = 0.65 pt,
		 color = blue,
		 		 		 densely dashed
		 ]
		  table[y = Z] from \mydatanine;	 		  
		  
		  	  	  \addplot [
		 line width = 0.65 pt,
		 color = blue,
		 		 		 densely dashed
		 ]
		  table[y = Z] from \mydataten;	 		  
		  
		  	  	  \addplot [
		 line width = 0.65 pt,
		 color = blue,
		 		 		 densely dashed
		 ]
		  table[y = Z] from \mydataeleven;	 	

		  	  	  \addplot [
		 line width = 0.65 pt,
		 color = blue,
		 densely dashed
		 ]
		  table[y = Z] from \mydatatwelve;	 		  
		  
		  		  	  	
		\addplot [
		densely dotted,  
		 line width = 0.85 pt,
		 color = black
		 ]
		  table[y = Z] from \mydatafourteen;	 	

		\addplot [		
		densely dotted,  
		 line width = 0.85 pt,  
		 color = black
		 ]
		  table[y = Z] from \mydatafifteen;	 		  	  
		    
\end{axis}
\end{tikzpicture}
}
\centering
  \subfigure{
       \centering
\begin{tikzpicture}[scale=0.87]
\begin{axis}[
			legend style={at = {(0.98,0.555)} ,font=\tiny},		
			height = 37.0ex,
			width = 48.0ex,
			grid = major,
			tick pos=left, 
			xlabel shift = 0 pt,
			xmode=log,
			xminorticks = false,	
		    yminorticks = false,	
		    y tick label style={
        /pgf/number format/.cd,
            fixed,
        /tikz/.cd
    		}, 				
			ytick={0, 10, 20, 30, 40, 50, 60},
		xlabel= Number of Samples ={font=\normalsize},
			xmin = 1.0,
			xmax = 1000,
			ymin = 0,
			ymax = 60,
			yticklabel style = {font=\tiny,yshift=0.0ex},
            xticklabel style = {font=\tiny,xshift=0.0ex}			
			]

\pgfplotstableread{./Data/1S5Sigma40.txt}\mydataone
\pgfplotstableread{./Data/2S5Sigma40.txt}\mydatatwo
\pgfplotstableread{./Data/3S5Sigma40.txt}\mydatathree
\pgfplotstableread{./Data/4S5Sigma40.txt}\mydatafour
\pgfplotstableread{./Data/5S5Sigma40.txt}\mydatafive
\pgfplotstableread{./Data/6S5Sigma40.txt}\mydatafourteen
\pgfplotstableread{./Data/7S5Sigma40.txt}\mydatafifteen
\pgfplotstableread{./DataDual/1S5Sigma40.txt}\mydatasix
\pgfplotstableread{./DataDual/2S5Sigma40.txt}\mydataseven
\pgfplotstableread{./DataDual/3S5Sigma40.txt}\mydataeight
\pgfplotstableread{./DataDual/4S5Sigma40.txt}\mydatanine
\pgfplotstableread{./DataDual/5S5Sigma40.txt}\mydataten
\pgfplotstableread{./DataDual/6S5Sigma40.txt}\mydataeleven
\pgfplotstableread{./DataDual/7S5Sigma40.txt}\mydatatwelve
		
		\addplot [
		densely dotted,  
		 line width = 0.85 pt,
		 color = black
		]		
		 table[y = Z] from \mydataone;
		 
		 \addplot [
		densely dotted,  
		 line width = 0.85 pt,
		 color = black
		 ]
		  table[y = Z] from \mydatatwo;	 
		  
		  \addplot [
		densely dotted,  
		 line width = 0.85 pt,
		 color = black
		 ]
		  table[y = Z] from \mydatathree;	 

	  \addplot [
		densely dotted,  
		 line width = 0.85 pt,
		 color = black
		 ]
		  table[y = Z] from \mydatafour;	 
		  
	  \addplot [
		densely dotted,  
		 line width = 0.85 pt,
		 color = black
		 ]
		  table[y = Z] from \mydatafive;	 		  
		  
	  \addplot [
		 line width = 0.65 pt,
		 color = blue,
		 densely dashed
		 ]
		  table[y = Z] from \mydatasix;	 		  
		  
		  	  \addplot [
		 line width = 0.65 pt,
		 color = blue,
		 densely dashed
		 ]
		  table[y = Z] from \mydataseven;	 		  
		
			  	  \addplot [
		 line width = 0.65 pt,
		 color = blue,
		 densely dashed
		 ]
		  table[y = Z] from \mydataeight;	 		  
		  
		  	  	  \addplot [
		 line width = 0.65 pt,
		 color = blue,
		 densely dashed
		 ]
		  table[y = Z] from \mydatanine;	 		  
		  
		  	  	  \addplot [
		 line width = 0.65 pt,
		 color = blue,
		 densely dashed
		 ]
		  table[y = Z] from \mydataten;	 		  
		  
		  	  	  \addplot [
		 line width = 0.55 pt,
		 color = blue,
		 densely dashed
		 ]
		  table[y = Z] from \mydataeleven;	 	

		  	  	  \addplot [
		 line width = 0.65 pt,
		 color = blue,
		 densely dashed
		 ]
		  table[y = Z] from \mydatatwelve;	 		  
		  
		  		  	  	
		\addplot [
		densely dotted,  
		 line width = 0.85 pt,
		 color = black
		 ]
		  table[y = Z] from \mydatafourteen;	 	

		\addplot [		  
		densely dotted,  
		 line width = 0.85 pt,
		 color = black
		 ]
		  table[y = Z] from \mydatafifteen;	 		  	  
		    
%
%





\end{axis}
\end{tikzpicture}
}
\quad
  \subfigure{
       \centering
\begin{tikzpicture}[scale=0.87]
\begin{axis}[
			legend style={at = {(0.98,0.555)} ,font=\tiny},		
			height = 37.0ex,
			width = 48.0ex,
			grid = major,
			tick pos=left, 
			xlabel shift = 0 pt,
			xmode=log,
			xminorticks = false,	
		    yminorticks = false,	
		    y tick label style={
        /pgf/number format/.cd,
            fixed,
        /tikz/.cd
    		}, 				
			ytick={20, 21, 22, 23, 24, 25, 26, 27, 28, 29, 30},
			xtick={100, 1000},
		xlabel= Number of Samples ={font=\normalsize},
			xmin = 100,
			xmax = 1000,
			ymin = 20,
			ymax = 27,
			yticklabel style = {font=\tiny,yshift=0.0ex},
            xticklabel style = {font=\tiny,xshift=0.0ex}			
			]

\pgfplotstableread{./Data/1S5Sigma40.txt}\mydataone
\pgfplotstableread{./Data/2S5Sigma40.txt}\mydatatwo
\pgfplotstableread{./Data/3S5Sigma40.txt}\mydatathree
\pgfplotstableread{./Data/4S5Sigma40.txt}\mydatafour
\pgfplotstableread{./Data/5S5Sigma40.txt}\mydatafive
\pgfplotstableread{./Data/6S5Sigma40.txt}\mydatafourteen
\pgfplotstableread{./Data/7S5Sigma40.txt}\mydatafifteen
\pgfplotstableread{./DataDual/1S5Sigma40.txt}\mydatasix
\pgfplotstableread{./DataDual/2S5Sigma40.txt}\mydataseven
\pgfplotstableread{./DataDual/3S5Sigma40.txt}\mydataeight
\pgfplotstableread{./DataDual/4S5Sigma40.txt}\mydatanine
\pgfplotstableread{./DataDual/5S5Sigma40.txt}\mydataten
\pgfplotstableread{./DataDual/6S5Sigma40.txt}\mydataeleven
\pgfplotstableread{./DataDual/7S5Sigma40.txt}\mydatatwelve
		
		\addplot [
		densely dotted,  
		 line width = 0.85 pt,
		 color = black
		]		
		 table[y = Z] from \mydataone;
		 
		 \addplot [
		 densely dotted,  
		 line width = 0.85 pt,
		 color = black
		 ]
		  table[y = Z] from \mydatatwo;	 
		  
		  \addplot [
		 densely dotted,  
		 line width = 0.85 pt,
		 color = black
		 ]
		  table[y = Z] from \mydatathree;	 

	  \addplot [
		 densely dotted,  
		 line width = 0.85 pt,
		 color = black
		 ]
		  table[y = Z] from \mydatafour;	 
		  
	  \addplot [
		 densely dotted,  
		 line width = 0.85 pt,
		 color = black
		 ]
		  table[y = Z] from \mydatafive;	 		  
		  
	  \addplot [
		 line width = 0.65 pt,
		 color = blue,
		 densely dashed
		 ]
		  table[y = Z] from \mydatasix;	 		  
		  
		  	  \addplot [
		 line width = 0.65 pt,
		 color = blue,
		 densely dashed
		 ]
		  table[y = Z] from \mydataseven;	 		  
		
			  	  \addplot [
		 line width = 0.65 pt,
		 color = blue,
		 densely dashed
		 ]
		  table[y = Z] from \mydataeight;	 		  
		  
		  	  	  \addplot [
		 line width = 0.65 pt,
		 color = blue,
		 densely dashed
		 ]
		  table[y = Z] from \mydatanine;	 		  
		  
		  	  	  \addplot [
		 line width = 0.65 pt,
		 color = blue,
		 densely dashed
		 ]
		  table[y = Z] from \mydataten;	 		  
		  
		  	  	  \addplot [
		 line width = 0.65 pt,
		 color = blue,
		 densely dashed
		 ]
		  table[y = Z] from \mydataeleven;	 	

		  	  	  \addplot [
		 line width = 0.65 pt,
		 color = blue,
		 densely dashed
		 ]
		  table[y = Z] from \mydatatwelve;	 		  
		  
		  		  	  	
		\addplot [
		 densely dotted,  
		 line width = 0.85 pt,
		 color = black
		 ]
		  table[y = Z] from \mydatafourteen;	 	

		\addplot [		  
		 densely dotted,  
		 line width = 0.85 pt,
		 color = black
		 ]
		  table[y = Z] from \mydatafifteen;	 		  	  
		    
%
%





\end{axis}
\end{tikzpicture}
}
\vspace{0ex}
\caption{\label{fig:PlotsSigma5}%
Estimated marginal variances as a function of the number of samples $L$ for a $15\times 15$ Gaussian Markov random field 
with PDF as in~(\ref{eqn:ProbPGaussThin}), with 
periodic boundaries, and with $\sigma = 5$.
The right panel illustrates the zoomed 
plots of the left panel for $L \in [10^2, 10^3]$.
The dotted black paths and the dashed
blue paths show $\hat\sigma_{\text{p}}^2$ and $\hat\sigma_{\text{d} \rightarrow \text{p}}^2$ respectively. 
The intervariable variance is set to: (Top) $s = 1$, 
(Middle) $s = 20$, and (Bottom) $s = 40$.}
\end{figure}


\subsection{Numerical Experiments}

We consider an $N = 15\times 15$ Gaussian Markov random field with periodic boundary conditions and with 
PDF given by~(\ref{eqn:ProbPGaussThin}). In both domains (i.e., the primal and the dual domains), we employ the 
Gibbs sampling algorithm with a systematic (i.e., deterministic) 
sweep strategy~\cite[Chapter 10]{christian1999monte} to estimate the marginal 
densities.\footnote{Applying the Gibbs sampling algorithm to Gaussian Markov random fields is straightforward and can be 
represented in matrix form.  The algorithm can be viewed as the stochastic generalization of the Gauss-Seidel algorithm, as 
pointed out in~\citep[Section VIII]{goodman1989multigrid}.}

We set $\sigma = 5$ and 
the number of samples $L = 10^3$ in all the experiments. The exact value of $\sigma_{\text{p}}^2$ is computed by inverting the
information (i.e., precision) matrix associated with the model, which is feasible for this size of graph.
Notice that the total number of variables is $|\VV| = N = 225$ in the primal domain, and is $|\EE| = 2N = 450$ in the dual domain. 

In Fig.~(\ref{fig:PlotsSigma5}), the dotted 
black paths show $\hat\sigma_{\text{p}}^2$ the estimated marginal variance obtained directly in the primal NFG and 
the dashed blue paths show $\hat\sigma_{\text{d} \rightarrow \text{p}}^2$ the estimated marginal variance in the primal NFG 
obtained from $\hat\sigma_{\text{d}}^2$, and then mapped to the primal domain via~(\ref{eqn:TransVarGauss}). In all plots the horizontal axis shows the number of samples. 
In order to compare better the convergence, the right panel of Fig.~(\ref{fig:PlotsSigma5}) illustrates the zoomed 
plots of the left panel in the range $[10^2, 10^3]$.

In Fig.~(\ref{fig:PlotsSigma5})-Top, we set the intervariable variance $s = 1$. The exact value of $\sigma_{\text{p}}^2$ is about $0.5589$. The Gibbs sampling 
algorithm converges quickly in the primal NFG, where after $10^2$ samples the relative error is about $10^{-1}$. 
However, the Gibbs sampler suffers from slow and erratic convergence in the dual NFG. 
We set $s = 20$ in Fig.~(\ref{fig:PlotsSigma5})-Middle. In this example $\sigma_{\text{p}}^2$ is about $20.2046$. 
We observe that increasing $s$ improves the convergence in the dual domain, but degrades the convergence in the primal domain. 
In the last experiment, we set $s = 40$. 
In this case, the exact value of $\sigma_{\text{p}}^2$ is about $23.5498$. Simulation results are shown in Fig.~(\ref{fig:PlotsSigma5})-Bottom, 
where, in contrast to Fig.~(\ref{fig:PlotsSigma5})-Top, 
convergence in the primal domain is slow. However, the Gibbs sampling algorithm in the dual domain can provide very accurate 
estimates of the marginal variance. A relative error of about $1.5\times 10^{-3}$ is achieved using only $10^2$ samples. 

Intuitively, as $s^2$ grows, the generated samples in the primal domain may vary widely, which leads to unstable estimates of $\sigma_{\text{p}}^2$.
The situation is the opposite in the dual domain, as in the Fourier transform of (\ref{eqn:ProbPGauss2}) 
the standard deviation $s$ is replaced by $1/s$, cf.~(\ref{eqn:PdGauss2}). 
The potential advantages of performing the computations in the dual domain needs to be further investigated for other choices 
of parameters, e.g., by fixing $s^2$ and varying the vertex variance $\sigma^2$.

\section{Conclusion}

We proved that the edge/vertex marginals densities of a
primal NFG and the corresponding edge/vertex marginal densities of its dual NFG are related  
via local mappings. The mapping provides a simple procedure to transform 
simultaneously the estimated marginals 
from one domain to the other. Furthermore, the mapping relies on no assumptions
on the size or on the topology of the graphical model. 
Details of the mapping, including its
fixed points, were derived for the Ising model and were extended to
non-binary models (e.g., the Potts model) and to continuous models. 
We discussed that the subgraphs-world process can be employed as a rapidly mixing Markov
chain to generate configurations in the dual NFG of the ferromagnetic Ising models in a positive external field.
In this case, the edge marginals densities of the dual NFG can first be obtained in polynomial time in the dual domain, and then 
be transformed to the primal domain using the proposed mappings.

Our numerical experiments show that estimating the marginals densities in the dual NFG 
can sometimes significantly improve the quality of
approximations in terms of relative error. E.g., variational algorithms in the dual NFG of the ferromagnetic Ising and Potts models 
converge faster in the low-temperature 
regime (i.e., when the coupling parameters are large), and, for some settings, the Gibbs sampling algorithm in the dual domain 
can give more accurate estimates 
of the marginal densities of Gaussian Markov random fields with the thin-membrane prior.

Following Remark 4, the factors in the dual NFG of the Ising model can in general take negative values.
In principle, marginal functions can still be estimated via 
the BP algorithm in these cases.
However, we observed that in many such examples (e.g., spin glasses and antiferromagnetic Ising models) 
BP suffers from slow and erratic convergence in the dual domain. Analyzing the dynamics of the BP algorithm and designing new 
and improved variational algorithms in NFGs with
negative (or complex) factors is left for future work. 

\appendix

\section*{Appendix A: Factor Graphs and normal relizations}
\label{app:ApA}


\begin{figure}[t]
  \centering
  \begin{tikzpicture}[scale=0.98]
\draw[fill=black] (0.25,2.25) circle (1.0mm);
\draw[fill=black] (4.25,2.25) circle (1.0mm);
\draw[fill=black] (8.25,2.25) circle (1.0mm);
%
\draw [line width=0.22mm] (2, 2) rectangle (2.5,2.5);
\draw [line width=0.22mm] (6, 2) rectangle (6.5,2.5);
\draw [line width=0.22mm] (4, 1) rectangle (4.5,1.5);
\draw [line width=0.22mm] (0.25,2.25) -- (2, 2.25);
\draw [line width=0.22mm] (2.5,2.25) -- (4.25, 2.25);
\draw [line width=0.22mm] (4.25,2.25) -- (6, 2.25);
\draw [line width=0.22mm] (6.5,2.25) -- (8.25, 2.25);
\draw [line width=0.22mm] (0.25,2.25) -- (0.25, 3.25);
\draw [line width=0.22mm] (4.25,2.25) -- (4.25, 3.25);
\draw [line width=0.22mm] (8.25,2.25) -- (8.25, 3.25);
\draw [line width=0.22mm] (0.25,1.25) -- (4.0, 1.25);
\draw [line width=0.22mm] (4.5,1.25) -- (8.25, 1.25);
%
\draw [line width=0.22mm] (0.25,2.25) -- (0.25, 1.25);
\draw [line width=0.22mm] (8.25,2.25) -- (8.25, 1.25);
\draw [line width=0.22mm] (0.05, 3.25) rectangle (0.45,3.65);
\draw [line width=0.22mm] (4.05, 3.25) rectangle (4.45,3.65);
\draw [line width=0.22mm] (8.05, 3.25) rectangle (8.45,3.65);
%
%
 \draw (0.78,3.43) node{$\phi_1$};
  \draw (4.78,3.43) node{$\phi_2$};
 \draw (8.78,3.43) node{$\phi_3$};
 \draw (2.25,2.9) node{$\psi_1$};
 \draw (6.25,2.9) node{$\psi_2$};
  \draw (4.25,0.65) node{$\psi_3$};
  \draw (-0.1,2.65) node{$X_1$};
  \draw (3.92,2.65) node{$X_2$};
 \draw (7.92, 2.65) node{$X_3$};
  \end{tikzpicture}
  \caption{\label{fig:GridModAgainFactor}
The factor graph of (\ref{eqn:ProbPAppendixA1}).
The filled circles show the variable nodes and the empty boxes represent the factor nodes.
}
%
%
%
  \centering
  \begin{tikzpicture}[scale=0.98]
 \draw[fill=black] (0.25,2.25) circle (1.0mm);
\draw[fill=black] (4.25,2.25) circle (1.0mm);
\draw[fill=black] (8.25,2.25) circle (1.0mm);
\draw[fill=black] (2.25, 2.88) circle (1.0mm);
\draw[fill=black] (6.25, 2.88) circle (1.0mm);
\draw[fill=black] (4.25, 0.62) circle (1.0mm);
%
\draw [line width=0.22mm] (2, 2) rectangle (2.5,2.5);
\draw [line width=0.22mm] (6, 2) rectangle (6.5,2.5);
\draw [line width=0.22mm] (4, 1) rectangle (4.5,1.5);
\draw [line width=0.22mm] (0.25,2.25) -- (2, 2.25);
\draw [line width=0.22mm] (2.5,2.25) -- (4.25, 2.25);
\draw [line width=0.22mm] (4.25,2.25) -- (6, 2.25);
\draw [line width=0.22mm] (6.5,2.25) -- (8.25, 2.25);
\draw [line width=0.22mm] (0.25,2.25) -- (0.25, 3.25);
\draw [line width=0.22mm] (2.25,2.5) -- (2.25, 3.25);
\draw [line width=0.22mm] (4.25,2.25) -- (4.25, 3.25);
\draw [line width=0.22mm] (6.25,2.5) -- (6.25, 3.25);
\draw [line width=0.22mm] (8.25,2.25) -- (8.25, 3.25);
\draw [line width=0.22mm] (0.25,1.25) -- (4.0, 1.25);
\draw [line width=0.22mm] (4.5,1.25) -- (8.25, 1.25);
\draw [line width=0.22mm] (4.25,1.0) -- (4.25, 0.25);
\draw [line width=0.22mm] (0.25,2.25) -- (0.25, 1.25);
\draw [line width=0.22mm] (8.25,2.25) -- (8.25, 1.25);
\draw [line width=0.22mm] (0.05, 3.25) rectangle (0.45,3.65);
\draw [line width=0.22mm] (2, 3.25) rectangle (2.5,3.75);
\draw [line width=0.22mm] (4.05, 3.25) rectangle (4.45,3.65);
\draw [line width=0.22mm] (6, 3.25) rectangle (6.5,3.75);
\draw [line width=0.22mm] (8.05, 3.25) rectangle (8.45,3.65);
\draw [line width=0.22mm] (4, 0.25) rectangle (4.5,-0.25);
\node at (2.25, 2.25){$+$};
\node at (6.25, 2.25){$+$};
\node at (4.25, 1.25){$+$};
 \draw (0.78,3.43) node{$\phi_1$};
  \draw (4.78,3.43) node{$\phi_2$};
 \draw (8.78,3.43) node{$\phi_3$};
 \draw (2.78,3.52) node{$\psi_1$};
 \draw (6.78,3.52) node{$\psi_2$};
  \draw (3.75,0.0) node{$\psi_3$};
  \draw (-0.1,2.65) node{$X_1$};
  \draw (3.92,2.65) node{$X_2$};
 \draw (7.92, 2.65) node{$X_3$}; 
 \draw (1.9,2.86) node{$Y_1$};
  \draw (5.9,2.86) node{$Y_2$};
    \draw (4.65,0.61) node{$Y_3$};
  \end{tikzpicture}
  \caption{\label{fig:GridModAgainFactor2}
The factor graph of (\ref{eqn:ProbPAppendixA2}). The filled circles show the variable nodes and the empty 
boxes represent the factor nodes. The boxes labeled ``$+$'' are instances of zero-sum indicator 
factors given by (\ref{eqn:Definition2}).
}
\end{figure}


\begin{figure}[t]
  \centering
  \begin{tikzpicture}[scale=0.98]

\draw [line width=0.22mm] (0, 2) rectangle (0.5,2.5);
\draw [line width=0.22mm] (2, 2) rectangle (2.5,2.5);
\draw [line width=0.22mm] (4, 2) rectangle (4.5,2.5);
\draw [line width=0.22mm] (6, 2) rectangle (6.5,2.5);
\draw [line width=0.22mm] (8, 2) rectangle (8.5,2.5);
\draw [line width=0.22mm] (4, 1) rectangle (4.5,1.5);
\draw [line width=0.22mm] (0.5,2.25) -- (2, 2.25);
\draw [line width=0.22mm] (2.5,2.25) -- (4, 2.25);
\draw [line width=0.22mm] (4.5,2.25) -- (6, 2.25);
\draw [line width=0.22mm] (6.5,2.25) -- (8, 2.25);
\draw [line width=0.22mm] (0.25,2.5) -- (0.25, 3.25);
\draw [line width=0.22mm] (2.25,2.5) -- (2.25, 3.25);
\draw [line width=0.22mm] (4.25,2.5) -- (4.25, 3.25);
\draw [line width=0.22mm] (6.25,2.5) -- (6.25, 3.25);
\draw [line width=0.22mm] (8.25,2.5) -- (8.25, 3.25);
\draw [line width=0.22mm] (0.25,1.25) -- (4.0, 1.25);
\draw [line width=0.22mm] (4.5,1.25) -- (8.25, 1.25);
\draw [line width=0.22mm] (4.25,1.0) -- (4.25, 0.25);
\draw [line width=0.22mm] (0.25,2) -- (0.25, 1.25);
\draw [line width=0.22mm] (8.25,2) -- (8.25, 1.25);
\draw [line width=0.22mm] (0.05, 3.25) rectangle (0.45,3.65);
\draw [line width=0.22mm] (2, 3.25) rectangle (2.5,3.75);
\draw [line width=0.22mm] (4.05, 3.25) rectangle (4.45,3.65);
\draw [line width=0.22mm] (6, 3.25) rectangle (6.5,3.75);
\draw [line width=0.22mm] (8.05, 3.25) rectangle (8.45,3.65);
\draw [line width=0.22mm] (4, 0.25) rectangle (4.5,-0.25);
\node at (0.25, 2.20){$=$};
\node at (2.25, 2.25){$+$};
\node at (4.25, 2.20){$=$};
\node at (6.25, 2.25){$+$};
\node at (8.25, 2.20){$=$};
\node at (4.25, 1.25){$+$};
 \draw (0.78,3.43) node{$\phi_1$};
  \draw (4.78,3.43) node{$\phi_2$};
 \draw (8.78,3.43) node{$\phi_3$};
 \draw (2.78,3.52) node{$\psi_1$};
 \draw (6.78,3.52) node{$\psi_2$};
  \draw (3.75,0.0) node{$\psi_3$};
  \draw (0.0,2.86) node{$X_1$};
  \draw (1.2,1.98) node{$X^{\prime}_1$};
    \draw (-0.05,1.62) node{$X^{\prime\prime}_1$};
      \draw (3.25,1.98) node{$X^{\prime}_2$};
  \draw (3.98,2.86) node{$X_2$};
 \draw (7.98,2.86) node{$X_3$};
 \draw (2.0,2.86) node{$Y_1$};
  \draw (6.0,2.86) node{$Y_2$};
    \draw (4.5,0.61) node{$Y_3$};
  \end{tikzpicture}
  \caption{\label{fig:GridModAgain3}
The NFG of (\ref{eqn:ProbPAppendixA2}).
The unlabeled boxes represent the factors $\phi(\cdot)$ and $\psi(\cdot)$. Boxes labeled ``$=$'' are instances of equality 
indicator factors given by (\ref{eqn:Definition1}), and
the boxes labeled ``$+$'' are instances of zero-sum indicator 
factors as in (\ref{eqn:Definition2})}
%
%
  \centering
  \begin{tikzpicture}[scale=0.98]

\draw [line width=0.22mm] (0, 2) rectangle (0.5,2.5);
\draw [line width=0.22mm] (2, 2) rectangle (2.5,2.5);
\draw [line width=0.22mm] (4, 2) rectangle (4.5,2.5);
\draw [line width=0.22mm] (6, 2) rectangle (6.5,2.5);
\draw [line width=0.22mm] (8, 2) rectangle (8.5,2.5);
\draw [line width=0.22mm] (4, 1) rectangle (4.5,1.5);
\draw [line width=0.22mm] (0.5,2.25) -- (2, 2.25);
\draw [line width=0.22mm] (2.5,2.25) -- (4, 2.25);
\draw [line width=0.22mm] (4.5,2.25) -- (6, 2.25);
\draw [line width=0.22mm] (6.5,2.25) -- (8, 2.25);
\draw [line width=0.22mm] (0.25,2.5) -- (0.25, 3.25);
\draw [line width=0.22mm] (2.25,2.5) -- (2.25, 3.25);
\draw [line width=0.22mm] (4.25,2.5) -- (4.25, 3.25);
\draw [line width=0.22mm] (6.25,2.5) -- (6.25, 3.25);
\draw [line width=0.22mm] (8.25,2.5) -- (8.25, 3.25);
\draw [line width=0.22mm] (0.25,1.25) -- (4.0, 1.25);
\draw [line width=0.22mm] (4.5,1.25) -- (8.25, 1.25);
\draw [line width=0.22mm] (4.25,1.0) -- (4.25, 0.25);
\draw [line width=0.22mm] (0.25,2) -- (0.25, 1.25);
\draw [line width=0.22mm] (8.25,2) -- (8.25, 1.25);
\draw [line width=0.22mm] (0.05, 3.25) rectangle (0.45,3.65);
\draw [line width=0.22mm] (2, 3.25) rectangle (2.5,3.75);
\draw [line width=0.22mm] (4.05, 3.25) rectangle (4.45,3.65);
\draw [line width=0.22mm] (6, 3.25) rectangle (6.5,3.75);
\draw [line width=0.22mm] (8.05, 3.25) rectangle (8.45,3.65);
\draw [line width=0.22mm] (4, 0.25) rectangle (4.5,-0.25);
\node at (0.25, 2.25){$+$};
\node at (2.25, 2.20){$=$};
\node at (4.25, 2.25){$+$};
\node at (6.25, 2.20){$=$};
\node at (8.25, 2.25){$+$};
\node at (4.25, 1.20){$=$};
 \draw (0.78,3.43) node{$\tilde \phi_1$};
  \draw (4.78,3.43) node{$\tilde \phi_2$};
 \draw (8.78,3.43) node{$\tilde \phi_3$};
 \draw (2.78,3.52) node{$\tilde \psi_1$};
 \draw (6.78,3.52) node{$\tilde \psi_2$};
  \draw (3.75,0.0) node{$\tilde \psi_3$};
  \draw (0.0,2.86) node{$\tilde X_1$};
  \draw (3.98,2.86) node{$\tilde X_2$};
 \draw (7.98,2.86) node{$\tilde X_3$};
 \draw (2.0,2.86) node{$\tilde Y_1$};
  \draw (6.0,2.86) node{$\tilde Y_2$};
    \draw (4.5,0.61) node{$\tilde Y_3$};
  \end{tikzpicture}
  \caption{\label{fig:GridModAgain4}
The dual of the NFG in Fig.~\ref{fig:GridModAgain3}.}
\end{figure}


A factor graph is a diagram that represents the factorization of a multivariate function, in which there is a variable node for
each variable and a factor node for each factor. An edge connects the variable node of $x_i$ to the factor node of $\psi_j(\cdot)$ if 
and only if $x_i$ is an argument of $\psi_j(\cdot)$. See~\citep{KFL:01}, for more details.

As an example, let $\calA = \ZZ/2\ZZ$ and assume that $\mathrm{\pi}_\text{p}(\cdot)$ can be factored as
\begin{equation} 
\label{eqn:ProbPAppendixA1}
\mathrm{\pi}_\text{p} (x_1, x_2, x_3) \propto \phi_1(x_1)\phi_2(x_2)\phi_3(x_3) \psi_1(x_1, x_2)\psi_1(x_2, x_3)\psi_1(x_1, x_3)
\end{equation}
up to scale. The factor graph in Fig.~(\ref{fig:GridModAgainFactor}) expresses the factorization in~(\ref{eqn:ProbPAppendixA1}), where the filled
circles show the variable nodes and the empty boxes represent the factor nodes.

If we further assumed that $\psi_{1}(\cdot)$, $\psi_{2}(\cdot)$, and $\psi_{3}(\cdot)$ are only functions of the difference between their 
two arguments, we can express $\mathrm{\pi}_\text{p} (\cdot)$ by the following factorization 
\begin{equation} 
\label{eqn:ProbPAppendixA2}
\mathrm{\pi}_\text{p} (x_1, x_2, x_3) \propto \phi_1(x_1)\phi_2(x_2)\phi_3(x_3) \psi_1(y_1)\psi_1(y_2)\psi_1(y_3),
\end{equation}
where $y_1 = x_1 - x_2$, $y_2 = x_2 - x_3$, and $y_3 = x_3 - x_1$.

The factor graph that corresponds to (\ref{eqn:ProbPAppendixA2}) is shown in Fig.~(\ref{fig:GridModAgainFactor2}).
The boxes labeled ``$+$'' are instances of zero-sum indicator 
factors given by (\ref{eqn:Definition2}), which impose the
constraint that all their incident variables sum 
to zero modulo two. 

By contrast, in NFGs, introduced in~\citep{Forney:01}, variable are represented by edges.
The NFG that corresponds to (\ref{eqn:ProbPAppendixA2}) is illustrated in Fig.~(\ref{fig:GridModAgain3}). Since
variables $y_1$, $y_2$, and $y_3$ appear in only two factors, they are easily represented by their corresponding edges. However, variables 
$x_1$, $x_2$, and $x_3$ that appear in more than two factors are replicated via equality indicator factors.
E.g., in Fig.~(\ref{fig:GridModAgain3})
\begin{equation} 
\delta_{=}(x_1, x_1^\prime, x_1^{\prime\prime}) =  \left\{ \begin{array}{ll}
    1, & \text{if $x_1 = x_1^\prime = x_1^{\prime\prime}$} \\
    0, & \text{otherwise,}
  \end{array} \right.
\end{equation}
which can also be expressed as $\delta_{=}(x_1, x_1^\prime, x_1^{\prime\prime}) = \delta(x_1 - x_1^\prime)\delta(x_1 - x_1^{\prime\prime})$. 
Here $\delta(\cdot)$ denotes either the Kronecker delta function (in discrete models discussed in 
Sections~\ref{sec:PrimalNFG} and~\ref{sec:Dual}) or 
the Dirac delta function (in continuous models discussed in Section~\ref{sec:Continuous}).
Replicating a variable by an equality indicator factor allows us to enforce the condition that each variable appears in (at most) two 
factors. As a result, in NFGs there is an edge for every variable~\citep{Lg:ifg2004, Forney:11}. 

The dual NFG can be obtained by replacing each variable by its corresponding dual variable and 
each factor by its 1D DFT, cf. Section~\ref{sec:Dual}. The dual of the NFG in Fig.~\ref{fig:GridModAgain3} is
illustrated in Fig.~\ref{fig:GridModAgain4}.


\section*{Appendix B: Marginal densities of the 1D Ising and Potts models}
\label{app:ApB}

The 1D Ising and Potts models with free or with periodic boundary conditions are exactly solved models. 
It is therefore possible to compute the edge marginal probabilities exactly in these models via the sum-product 
algorithm~\citep{KFL:01} or via the transfer matrix method~\citep{Baxter:07}. 
Here we employ our proposed mappings as an alternative (and somewhat simpler) way to compute 
such marginals.

The Primal NFG of the 1D Ising model in zero field and with periodic boundaries is shown in Fig.~\ref{fig:1DPottsFreePrimal}, 
where the unlabeled boxes represent~(\ref{eqn:IsingPot1}). To construct the dual NFG of the model, variables are replaced by their 
corresponding dual variable, factors (\ref{eqn:IsingPot1}) are replaced by (\ref{eqn:IsingDJ}), equality indicator factors 
are replaced by zero-sum indicator factors and vice-versa. The ``$\circ$'' symbols are immaterial as $\calA = \ZZ/2\ZZ$.
Fig.~\ref{fig:1DPottsFreeDual} shows the
dual of Fig.~\ref{fig:1DPottsFreePrimal}.

In the dual NFG $\tilde X_1 = \tilde X_2 = \ldots = \tilde X_{|\EE|}$, therefore there are only two
valid configurations, namely the all-zeros and the all-ones configurations. Hence
\begin{equation}
Z_{\text{d}} = 2^{|\EE|}\big(\prod_{e \in \EE}\cosh(\beta J_e) + \prod_{e \in \EE}\sinh(\beta J_e)\big),
\end{equation}
which gives
\begin{align}
\pi_{\text{d},e}(0) & = \myfrac{\prod_{e \in \EE}\cosh(\beta J_e)}{\prod_{e \in \EE}\cosh(\beta J_e) + \prod_{e \in \EE}\sinh(\beta J_e)} \\
& = \myfrac{1}{1 + \prod_{e \in \EE}\tanh(\beta J_e)}
\end{align}
and
\begin{equation}
\pi_{\text{d},e}(1) = \myfrac{\prod_{e \in \EE}\tanh(\beta J_e)}{1 + \prod_{e \in \EE}\tanh(\beta J_e)}
\end{equation}

From (\ref{eqn:MapDP}), we obtain the edge marginal probabilities of the primal NFG as
\begin{align}
\pi_{\text{p},e}(0) & = \myfrac{\textrm{e}^{\beta J_e}}{2\cosh(\beta J_e)}\myfrac{1+ 
\prod_{\e \in \EE \setminus \{e\}}\tanh(\beta J_\e)}{1 + \prod_{e \in \EE}\tanh(\beta J_e)} \\
\pi_{\text{p},e}(1) & = \myfrac{\textrm{e}^{-\beta J_e}}{2\cosh(\beta J_e)}\myfrac{1- 
\prod_{\e \in \EE \setminus \{e\}}\tanh(\beta J_\e)}{1 + \prod_{e \in \EE}\tanh(\beta J_e)}
\end{align}

In the dual NFG of the 1D Ising model with free boundary conditions, there is only one valid configuration, namely the all-zeros configuration.
Consequently
\begin{equation}
\pi_{\text{d},e}(0) = 1-\pi_{\text{d},e}(1) = 1
\end{equation}
and therefore from (\ref{eqn:MapDP}) we obtain
\begin{align}
\pi_{\text{p},e}(0) & = \myfrac{\textrm{e}^{\beta J_e}}{2\cosh(\beta J_e)} \label{eqn:MargIsing1Dfree} \\
\pi_{\text{p},e}(1) & =  \myfrac{\textrm{e}^{-\beta J_e}}{2\cosh(\beta J_e)}
\end{align}
for $e \in \EE$. Due to symmetry 
\begin{equation}
\pi_{\text{p},v}(0) = \frac{1}{2}\pi_{\text{p},e}(0) + \frac{1}{2}\pi_{\text{p},e}(1)
\end{equation}
for $v \in \VV$. Thus
\begin{equation}
\pi_{\text{p},v}(0) = \pi_{\text{p},v}(1) = \frac{1}{2}
\end{equation}

The marginal probability $\pi_{\text{p},e}(0)$ in (\ref{eqn:MargIsing1Dfree}) attains the lower bound in (\ref{eqn:BoundonPe0}).
We also observe that in the thermodynamic limit our results are independent of the boundary conditions.  
In other words, the effect of the boundary conditions becomes negligible as $|\VV|, |\EE| \to \infty$.


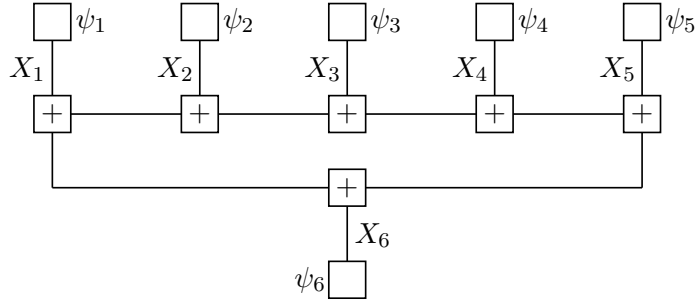
\begin{figure}[t]
  \centering
  \begin{tikzpicture}[scale=0.98]

\draw [line width=0.22mm] (0, 2) rectangle (0.5,2.5);
\draw [line width=0.22mm] (2, 2) rectangle (2.5,2.5);
\draw [line width=0.22mm] (4, 2) rectangle (4.5,2.5);
\draw [line width=0.22mm] (6, 2) rectangle (6.5,2.5);
\draw [line width=0.22mm] (8, 2) rectangle (8.5,2.5);
\draw [line width=0.22mm] (4, 1) rectangle (4.5,1.5);
\draw [line width=0.22mm] (0.5,2.25) -- (2, 2.25);
\draw [line width=0.22mm] (2.5,2.25) -- (4, 2.25);
\draw [line width=0.22mm] (4.5,2.25) -- (6, 2.25);
\draw [line width=0.22mm] (6.5,2.25) -- (8, 2.25);
\draw [line width=0.22mm] (0.25,2.5) -- (0.25, 3.25);
\draw [line width=0.22mm] (2.25,2.5) -- (2.25, 3.25);
\draw [line width=0.22mm] (4.25,2.5) -- (4.25, 3.25);
\draw [line width=0.22mm] (6.25,2.5) -- (6.25, 3.25);
\draw [line width=0.22mm] (8.25,2.5) -- (8.25, 3.25);
\draw [line width=0.22mm] (0.25,1.25) -- (4.0, 1.25);
\draw [line width=0.22mm] (4.5,1.25) -- (8.25, 1.25);
\draw [line width=0.22mm] (4.25,1.0) -- (4.25, 0.25);

\draw [line width=0.22mm] (0.25,2) -- (0.25, 1.25);
\draw [line width=0.22mm] (8.25,2) -- (8.25, 1.25);
\draw [line width=0.22mm] (0, 3.25) rectangle (0.5,3.75);
z\draw [line width=0.22mm] (2, 3.25) rectangle (2.5,3.75);
\draw [line width=0.22mm] (4.0, 3.25) rectangle (4.5,3.75);
\draw [line width=0.22mm] (6, 3.25) rectangle (6.5,3.75);
\draw [line width=0.22mm] (8.0, 3.25) rectangle (8.5,3.75);
\draw [line width=0.22mm] (4, 0.25) rectangle (4.5,-0.25);
\node at (0.25, 2.25){$+$};
\node at (2.25, 2.25){$+$};
\node at (4.25, 2.25){$+$};
\node at (4.25, 1.25){$+$};
\node at (6.25, 2.25){$+$};
\node at (8.25, 2.25){$+$};
 \draw (0.78,3.52) node{$\psi_1$};
 \draw (2.78,3.52) node{$\psi_2$};
  \draw (4.78,3.52) node{$\psi_3$};
 \draw (6.78,3.52) node{$\psi_4$};
  \draw (8.78,3.52) node{$\psi_5$};
  \draw (3.75,0.0) node{$\psi_6$};
  \draw (-0.09,2.86) node{$X_1$};
  \draw (1.92,2.86) node{$X_2$};
  \draw (3.92,2.86) node{$X_3$};
 \draw (5.92,2.86) node{$X_4$};
  \draw (7.92,2.86) node{$X_5$};
    \draw (4.6,0.61) node{$X_6$};
  \end{tikzpicture}
  \caption{\label{fig:1DPottsFreePrimal}
The primal NFG of the 1D Ising model in the absence of an external field and with periodic boundary conditions.
The unlabeled boxes represent~(\ref{eqn:IsingPot1}), boxes labeled ``$=$'' are instances of equality 
indicator factors as in (\ref{eqn:Definition1}),
the boxes labeled ``$+$'' are instances of zero-sum indicator 
factors given by (\ref{eqn:Definition2}).}
\end{figure}


\begin{figure}[t]
  \centering
  \begin{tikzpicture}[scale=0.98]

\draw [line width=0.22mm] (0, 2) rectangle (0.5,2.5);
\draw [line width=0.22mm] (2, 2) rectangle (2.5,2.5);
\draw [line width=0.22mm] (4, 2) rectangle (4.5,2.5);
\draw [line width=0.22mm] (6, 2) rectangle (6.5,2.5);
\draw [line width=0.22mm] (8, 2) rectangle (8.5,2.5);
\draw [line width=0.22mm] (4, 1) rectangle (4.5,1.5);
\draw [line width=0.22mm] (0.5,2.25) -- (2, 2.25);
\draw [line width=0.22mm] (2.5,2.25) -- (4, 2.25);
\draw [line width=0.22mm] (4.5,2.25) -- (6, 2.25);
\draw [line width=0.22mm] (6.5,2.25) -- (8, 2.25);
\draw [line width=0.22mm] (0.25,2.5) -- (0.25, 3.25);
\draw [line width=0.22mm] (2.25,2.5) -- (2.25, 3.25);
\draw [line width=0.22mm] (4.25,2.5) -- (4.25, 3.25);
\draw [line width=0.22mm] (6.25,2.5) -- (6.25, 3.25);
\draw [line width=0.22mm] (8.25,2.5) -- (8.25, 3.25);
\draw [line width=0.22mm] (0.25,1.25) -- (4.0, 1.25);
\draw [line width=0.22mm] (4.5,1.25) -- (8.25, 1.25);
\draw [line width=0.22mm] (4.25,1.0) -- (4.25, 0.25);

\draw [line width=0.22mm] (0.25,2) -- (0.25, 1.25);
\draw [line width=0.22mm] (8.25,2) -- (8.25, 1.25);
\draw [line width=0.22mm] (0, 3.25) rectangle (0.5,3.75);
\draw [line width=0.22mm] (2, 3.25) rectangle (2.5,3.75);
\draw [line width=0.22mm] (4.0, 3.25) rectangle (4.5,3.75);
\draw [line width=0.22mm] (6, 3.25) rectangle (6.5,3.75);
\draw [line width=0.22mm] (8.0, 3.25) rectangle (8.5,3.75);
\draw [line width=0.22mm] (4, 0.25) rectangle (4.5,-0.25);
\node at (0.25, 2.2){$=$};
\node at (2.25, 2.2){$=$};
\node at (4.25, 2.2){$=$};
\node at (4.25, 1.2){$=$};
\node at (6.25, 2.2){$=$};
\node at (8.25, 2.2){$=$};
 \draw (0.78,3.52) node{$\tilde\psi_1$};
 \draw (2.78,3.52) node{$\tilde\psi_2$};
  \draw (4.78,3.52) node{$\tilde\psi_3$};
 \draw (6.78,3.52) node{$\tilde \psi_4$};
  \draw (8.78,3.52) node{$\tilde \psi_5$};
  \draw (3.73,0.0) node{$\tilde \psi_6$};
  \draw (-0.09,2.86) node{$\tilde X_1$};
  \draw (1.92,2.86) node{$\tilde X_2$};
  \draw (3.92,2.86) node{$\tilde X_3$};
 \draw (5.92,2.86) node{$\tilde X_4$};
  \draw (7.92,2.86) node{$\tilde X_5$};
    \draw (4.6,0.61) node{$\tilde X_6$};
  \end{tikzpicture}
  \caption{\label{fig:1DPottsFreeDual}
The dual of the NFG in Fig.~\ref{fig:1DPottsFreePrimal}, where the unlabeled boxes represent~(\ref{eqn:IsingDJ}).}
\end{figure}
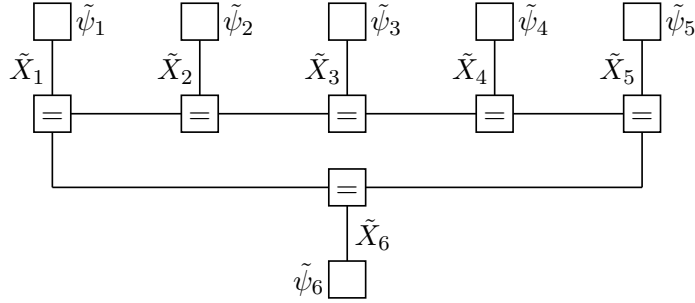


Finally, we briefly discuss similar mappings for the edge marginal probabilities of the 1D homogeneous $q$-state Potts model 
with periodic boundaries.

In the dual NFG of the model, the unlabeled boxes represent (\ref{eqn:PottsDJ}). 
There are exactly $q$
configurations in the dual domain, each with $\tilde X_1 = \tilde X_2 = \ldots = \tilde X_{|\EE|}$. 
Hence
\begin{equation}
Z_{\text{d}} = (\textrm{e}^{\beta J} + q -1)^{|\EE|} + (q-1)(\textrm{e}^{\beta J} -1)^{|\EE|},
\end{equation}
where the first term is the contribution of the all-zeros configuration, and the contribution of each of the 
remaining $q-1$ configurations to the partition function is $(\textrm{e}^{\beta J} -1)^{|\EE|}$.

Hence
\begin{equation}
\pi_{\text{d},e}(0)  = \myfrac{(\textrm{e}^{\beta J} + q -1)^{|\EE|}}{(\textrm{e}^{\beta J} + q -1)^{|\EE|} + (q-1)(\textrm{e}^{\beta J} -1)^{|\EE|}}
\end{equation}
and for $t \in \{1,2, \ldots, q-1\}$
\begin{equation}
\pi_{\text{d},e}(t) = \myfrac{(\textrm{e}^{\beta J} -1)^{|\EE|}}{(\textrm{e}^{\beta J} + q -1)^{|\EE|} + (q-1)(\textrm{e}^{\beta J} -1)^{|\EE|}}
\end{equation}

From (\ref{eqn:mappingPottsvector}) we obtain
\begin{equation}
\pi_{\text{p},e}(0)  = \textrm{e}^{\beta J}\myfrac{(\textrm{e}^{\beta J} + q -1)^{|\EE|-1}+(q-1)(\textrm{e}^{\beta J} -1)^{|\EE|-1}}{(\textrm{e}^{\beta J} + q -1)^{|\EE|} + (q-1)(\textrm{e}^{\beta J} -1)^{|\EE|}}
\end{equation}
The other entries of  $(\pi_{\text{p}, e}(a), a \in \calA)$ can be computed analogously.

\acks{The author is extremely grateful to G. David Forney, Jr.,
for his comments and for his continued support. The
author wishes to thank Justin Dauwels for his comments on an earlier draft of this paper. The author would like to thanks the editor, Mohammad Emtiyaz Khan, and 
the anonymous reviewers for their helpful feedback. }

\bibliography{mybib}

\end{document}